\newtheorem{definition}{Definition}
\newtheorem{lemma}[definition]{Lemma}
\newtheorem{theorem}[definition]{Theorem}
\tikzstyle{reduction} = [rectangle, draw, fill=blue!20,  align=center,
\tikzstyle{reductiongreen} = [rectangle, draw=red!50!green!100, fill=red!40!green!40, align=center,
\tikzstyle{reductionpurple} = [rectangle, draw=purple, fill=purple!40, align=center,
\tikzstyle{reductionblue} = [rectangle, draw=blue, fill=blue!20, align=center,
\tikzstyle{application} = [rectangle, draw, fill=red!20, align=center,
\tikzstyle{application2} = [rectangle, draw, fill=black!20, align=center,
\tikzstyle{line} = [draw, -latex']
\tikzstyle{setting} = [rectangle, draw, fill=blue!20, align=center,
\tikzstyle{bound} = [rectangle, draw, fill=red!20, align=center,
\tikzstyle{thickline} = [draw, -latex', thick]
\declaretheorem[name=Theorem]{Theorem}
\declaretheorem[name=Proposition, numberlike=Theorem]{Proposition}
\declaretheorem[sibling=Theorem]{Definition}
\declaretheorem[sibling=Theorem]{Claim}
\newcommand{\argmin}{\mathop{\text{argmin}}}
\newcommand{\ol}{\mathcal{A}}
\newcommand{\bol}{\mathcal{A}_{S}}
\newcommand{\onedol}{\mathcal{A_{\text{1D}}}}
\newcommand{\x}{{x_\star}}
\newcommand{\wealth}{\text{Wealth}}
\newcommand{\w}{\mathring{w}}
\renewcommand{\v}{\mathring{v}}
\newcommand{\field}[1]{\mathbb{#1}}
\newcommand{\fX}{\field{X}}
\newcommand{\R}{\field{R}}
\newcommand{\C}{\field{C}}
\newcommand{\E}{\field{E}}
\newcommand{\sign}{{\rm sign}}
\title{Black-Box Reductions for Parameter-free Online Learning in Banach Spaces}
\author{
  Ashok Cutkosky\\
  Department of Computer Science, Stanford University\\
  \texttt{ashokc@cs.stanford.edu}
  \and
  Francesco Orabona\\
  Department of Computer Science, Stony Brook University\\
  \texttt{francesco@orabona.com}
}
\begin{document}

\maketitle

\begin{abstract}
We introduce several new black-box reductions that significantly improve the design of adaptive and parameter-free online learning algorithms by simplifying analysis, improving regret guarantees, and sometimes even improving runtime. We reduce parameter-free online learning to online exp-concave optimization, we reduce optimization in a Banach space to one-dimensional optimization, and we reduce optimization over a constrained domain to unconstrained optimization. All of our reductions run as fast as online gradient descent. We use our new techniques to improve upon the previously best regret bounds for parameter-free learning, and do so for arbitrary norms.
\end{abstract}

\section{Parameter Free Online Learning}\label{sec:intro}

Online learning is a popular framework for understanding iterative optimization algorithms, including stochastic optimization algorithms or algorithms operating on large data streams. For each of $T$ iterations, an online learning algorithm picks a point $w_t$ in some space $W$, observes a loss function $\ell_t:W\to \R$, and suffers loss $\ell_t(w_t)$. Performance is measured by the \emph{regret}, which is the total loss suffered by the algorithm in comparison to some benchmark point $\w\in W$:
\begin{align*}
R_T(\w) = \sum_{t=1}^T \ell_t(w_t) - \ell_t(\w)~.
\end{align*}
We want to design algorithms that guarantee low regret, even in the face of adversarially chosen $\ell_t$.

To make the problem more tractable, we suppose $W$ is a convex set and each $\ell_t$ is convex (this is called Online Convex Optimization). With this assumption, we can further reduce the problem to online \emph{linear} optimization (OLO) in which each $\ell_t$ must be a linear function. To see the reduction, suppose $g_t$ is a subgradient of $\ell_t$ at $w_t$ ($g_t\in \partial \ell_t(w_t)$). Then $\ell_t(w_t)-\ell_t(\w)\le \langle g_t,w_t-\w\rangle$, which implies $R_T(\w)\le \sum_{t=1}^T \langle g_t,w_t-\w\rangle$. Our algorithms take advantage of this property by accessing $\ell_t$ only through $g_t$ and controlling the linearized regret $\sum_{t=1}^T \langle g_t,w_t-\w\rangle$.

Lower bounds for unconstrained online linear optimization \citep{mcmahan2012no, orabona2013dimension} imply that when $\ell_t$ are $L$-Lipschitz, no algorithm can guarantee regret better than $\Omega(\|\w\| L \sqrt{T\ln(\|\w\|L T+1)})$. Relaxing the $L$-Lipschitz restriction on the losses leads to catastrophically bad lower bounds \citep{cutkosky2017online}, so in this paper we focus on the case where a Lipschitz bound is known, and assume $L=1$ for simplicity.\footnote{One can easily rescale the $g_t$ by $L$ to incorporate arbitrary $L$.}

Our primary contribution is a series of three reductions that simplify the design of \emph{parameter-free} algorithms,\footnote{The name ``parameter-free'' was first used by \citet{chaudhuri2009parameter} for an expert algorithm that does not need to know the entropy of the competitor to achieve the optimal regret bound for any competitor.} that is algorithms whose regret bound is optimal without the need to tune parameters (e.g. learning rates). First, we show that algorithms for online exp-concave optimization imply parameter-free algorithms for OLO (Section~\ref{sec:betfromons}). Second, we show a general reduction from online learning in arbitrary dimensions with any norm to one-dimensional online learning (Section~\ref{sec:1dred}). Finally, given any two convex sets $W\subset V$, we construct an online learning algorithm over $W$ from an online learning algorithm over $V$ (Section~\ref{sec:constrained}).

All of our reductions are very general. We make no assumptions about the inner workings of the base algorithms and are able to consider any norm, so that $W$ may be a subset of a Banach space rather than a Hilbert space or $\R^d$. Each reduction is of independent interest, even for non-parameter-free algorithms, but by combining them we can produce powerful new algorithms. 

First, we use our reductions to design a new parameter-free algorithm that improves upon the prior regret bounds, achieving
\[
R_T(\w) \le \|\w\|\sqrt{\sum_{t=1}^T \|g_t\|_\star^2\ln\left(\|\w\|\sum_{t=1}^T \|g_t\|_\star^2+1\right)},
\]
where $\|\cdot\|$ is any norm and $\|\cdot\|_\star$ is the dual norm ($\|g_t\|_\star=\|g_t\|$ when $\|\cdot\|$ is the $2$-norm). Previous parameter-free algorithms \cite{mcmahan2012no, mcmahan2014unconstrained, orabona2014simultaneous, orabona2016coin, foster2015adaptive, cutkosky2017online, orabona2017training} obtain at best an exponent of $1$ in their dependence on $\|g_t\|_\star$ (which is worse because $\|g_t\|_\star\le 1$ by our 1-Lipschitz assumption). Achieving $\|g_t\|_\star^2$ rather than $\|g_t\|_\star$ can imply asymptotically lower regret when the losses $\ell_t$ are smooth~\citep{srebro2010smoothness}, so this is not merely a cosmetic difference. In addition to the worse regret bound, all prior analyses we are aware of are quite complicated, often involving pages of intricate algebra, and are usually limited to the $2$-norm. In contrast, our techniques are both simpler and more general.

We further demonstrate the power of our reductions through three more applications. In Section \ref{sec:multiscale}, we consider the multi-scale experts problem studied in \citep{foster2017parameter, bubeck2017online} and improve prior regret guarantees and runtimes. In Section \ref{sec:metagrad}, we create an algorithm obtaining $\tilde O(\sqrt{T})$ regret for general convex losses, but logarithmic regret for strongly-convex losses using only first-order information, similar to \citep{vanervan2016metagrad,cutkosky2017stochastic}, but with runtime improved to match gradient descent. Finally, in Section \ref{sec:onsbanachspace} we prove a regret bound of the form $R_T(\w)=\tilde{O}\left(\sqrt{d \sum_{t=1}^T \langle g_t, \w \rangle^2}\right)$ for $d$-dimensional Banach spaces, extending the results of \cite{NIPS2017_7060} to unconstrained domains. We summarize our results in Figure~\ref{fig:diagram}.

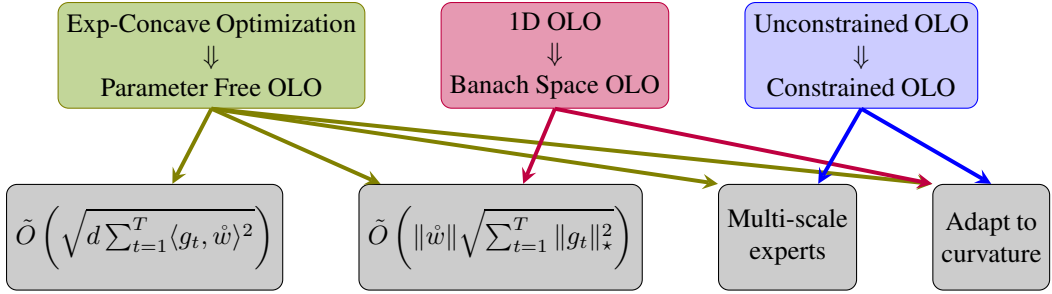
\begin{figure}[t]
\centering
\begin{tikzpicture}[>=stealth,node distance = 1cm and 1cm, auto]
    \node [reductiongreen] (ons) {Exp-Concave Optimization\\$\Downarrow$\\Parameter Free OLO};
    \node [reductionpurple, right = of ons] (banach) {1D OLO\\$\Downarrow$\\Banach Space OLO};
    \node [reductionblue, right = of banach] (unconstrained) {Unconstrained OLO\\$\Downarrow$\\Constrained OLO};

    \node [application2, below left=1cm and -3cm of ons] (app_banach) {$\tilde{O}\left(\sqrt{d \sum_{t=1}^T \langle g_t, \w\rangle^2}\right)$};
    \node [application2, right= of app_banach] (app_paramfree) {$\tilde{O}\left(\|\w\|\sqrt{\sum_{t=1}^T \|g_t\|_\star^2}\right)$};
    \node [application2, right= of app_paramfree] (app_mscale) {Multi-scale\\experts};
    \node [application2, right= of app_mscale] (app_curv) {Adapt to\\curvature};

    \draw[line width=1.5pt] (ons.south) edge[->, color=red!50!green!100] (app_banach);
    \draw[line width=1.5pt] (ons.south) edge[->, color=red!50!green!100] (app_paramfree);
    \draw[line width=1.5pt] (ons.south) edge[->, color=red!50!green!100] (app_mscale.north west);
    \draw[line width=1.5pt] (ons.south) edge[->, color=red!50!green!100] (app_curv.north west);
    
    \draw[line width=1.5pt] (banach.south) edge[->, color=purple] (app_paramfree);
    \draw[line width=1.5pt] (banach.south) edge[->, color=purple] (app_curv.north west);
    
    \draw[line width=1.5pt] (unconstrained.south) edge[->, color=blue] (app_mscale);
    \draw[line width=1.5pt] (unconstrained.south) edge[->, color=blue] (app_curv.north);
\end{tikzpicture}
\caption{We prove three reductions (top row), and use these reductions to obtain specific algorithms and regret bounds (bottom row). Arrows indicate which reductions are used in each algorithm.}
\label{fig:diagram}
\end{figure}

\textbf{Notation.} The dual of a Banach space $B$ over a field $F$, denoted $B^\star$, is the set of all continuous linear maps $B\to F$. We will use the notation $\langle v, w\rangle$ to indicate the application of a dual vector $v\in B^\star$ to a vector $w\in B$. $B^\star$ is also a Banach space with the \emph{dual norm}: $\|v\|_\star=\sup_{w\in B,\ \|v\|=1}\langle w, v\rangle$.
For completeness, in Appendix~\ref{sec:banachspaces} we recall some more background on Banach spaces. 

\section{Online Newton Step to Online Linear Optimization via Betting Algorithms}\label{sec:betfromons}

In this section we show \emph{how to use the Online Newton Step (ONS) algorithm~\citep{hazan2007logarithmic} to construct a 1D parameter-free algorithm}. Our approach relies on the coin-betting abstraction~\citep{orabona2016coin} for the design of parameter-free algorithms. Coin betting strategies record the \emph{wealth} of the algorithm, which is defined by some initial (i.e. user-specified) $\epsilon$ plus the total ``reward'' $\sum_{t=1}^T-g_tw_t$ it has gained:
\begin{equation}
\label{eq:wealth}
\wealth_T=\epsilon-\sum_{t=1}^T g_tw_t~.
\end{equation}
Given this wealth measurement, coin betting algorithms ``bet'' a signed fraction $v_t\in(-1,1)$ of their current wealth on the outcome of the ``coin'' $g_t \in [-1,1]$ by playing $w_{t}=v_t\wealth_{T-1}$, so that $\wealth_T = \wealth_{T-1} - g_t v_t\wealth_{T-1}$. The advantage of betting algorithms lies in the fact that high wealth is equivalent to a low regret~\cite{mcmahan2014unconstrained}, but lower-bounding the wealth of an algorithm is conceptually simpler than upper-bounding its regret because the competitor $\w$ does not appear in \eqref{eq:wealth}. Thus the question is how to pick betting fractions $v_t$ that guarantee high wealth. This is usually accomplished through careful design of bespoke potential functions and meticulous algebraic manipulation, but we take a different and simpler path.

At a high level, our approach is to re-cast the problem of choosing betting fractions $v_t$ as itself an online learning problem. We show that this online learning problem has \emph{exp-concave} losses rather than linear losses. Exp-concave losses are known to be much easier to optimize than linear losses and it is possible to obtain $\ln(T)$ regret rather than the $\sqrt{T}$ limit for linear optimization~\cite{hazan2007logarithmic}. So by using an exp-concave optimization algorithm such as the Online Newton Step (ONS), we find the optimal betting fraction $\v$ very quickly, and obtain high wealth. The pseudocode for the resulting strategy is in Algorithm~\ref{algorithm:ons_1d}.

\begin{algorithm}[h]
\caption{Coin-Betting through ONS}
\label{algorithm:ons_1d}
\begin{algorithmic}[1]
{
    \REQUIRE{Initial wealth $\epsilon>0$}
    \STATE{{\bfseries Initialize: } $\wealth_0=\epsilon$, initial betting fraction $v_1 = 0$}
    \FOR{$t=1$ {\bfseries to} $T$}
    \STATE{Bet $w_t = v_t \, \wealth_{t-1} $, Receive $g_t \in [-1,1]$}
    \STATE{Update $\wealth_{t} = \wealth_{t-1} - g_t w_t $}
    \STATE{//compute new betting fraction $v_{t+1}\in[-1/2,1/2]$ via ONS update on losses $-\ln(1-g_tv)$}
    \STATE{Set $z_t = \tfrac{d}{dv_t}\left(-\ln(1-g_t v_t)\right)=\tfrac{g_t}{1-g_t v_t}$}
    \STATE{Set $A_{t} = 1 + \sum_{i=1}^t z_i^2$}
    \STATE{$v_{t+1} = \max\left(\min\left(v_{t} - \tfrac{2}{2-\ln(3)} \tfrac{z_t}{A_{t}}, 1/2\right),-1/2\right)$}
    \ENDFOR
}
\end{algorithmic}
\end{algorithm}

Later (in Section \ref{sec:onsbanachspace}), we will see that this same 1D argument holds seamlessly in Banach spaces, where now the betting fraction $v_t$ is a vector in the Banach space and the outcome of the coin $g_t$ is a vector in the dual space with norm bounded by 1. We therefore postpone computing exact constants for the Big-O notation in Theorem~\ref{thm:1dregret} to the more general Theorem~\ref{thm:banachregret}.

It is important to note that ONS in 1D is extremely simple to implement. Even the projection onto a bounded set becomes just a truncation between two real numbers, so that Algorithm~\ref{algorithm:ons_1d} can run quickly. We can show the following regret guarantee:
\begin{theorem}
\label{thm:1dregret}
For $|g_t|\le 1$, Algorithm~\ref{algorithm:ons_1d}, guarantees the regret bound:
\begin{align*}
R_T(\w) &=O\left[\epsilon + \max\left(|\w|\ln\left[\frac{|\w|\sum_{t=1}^T g_t^2}{\epsilon}\right]\ ,\   |\w|\sqrt{\sum_{t=1}^T g_t^2 \ln\left[\frac{|\w|^2\sum_{t=1}^T g_t^2}{\epsilon^2}+1\right]}\right)\right]~.
\end{align*}
\end{theorem}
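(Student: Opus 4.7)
My plan follows the standard coin-betting-to-regret template: establish a wealth lower bound and convert it to a regret bound. From~\eqref{eq:wealth}, $R_T(\w) = \epsilon - \wealth_T + \w S_T$ with $S_T := -\sum_t g_t$, so any lower bound $\wealth_T \ge G(S_T)$ yields $R_T(\w) \le \epsilon + G^\star(\w)$ by taking the Fenchel conjugate. All the work is in producing a suitable $G$, and the novelty of Algorithm~\ref{algorithm:ons_1d} is to do so by treating the choice of betting fractions as its own exp-concave online learning problem with losses $\ell_t(v) := -\ln(1-g_t v)$.

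The key observation is that $\exp(-\ell_t(v)) = 1-g_t v$ is concave, so $\ell_t$ is $1$-exp-concave on $[-1/2, 1/2]$, and the ONS gradient $z_t = g_t/(1-g_t v_t)$ is bounded by $2$ on this domain since $|1-g_tv_t|\ge 1/2$. Standard 1D ONS analysis then yields, for every $\v \in [-1/2,1/2]$, $\sum_t [\ell_t(v_t) - \ell_t(\v)] = O(\ln(1+\sum_t g_t^2))$. Telescoping line~4 of the algorithm gives $\wealth_T = \epsilon\prod_t(1-g_tv_t)$, hence $\ln\wealth_T = \ln\epsilon - \sum_t \ell_t(v_t)$ and likewise $\ln\wealth_T(\v) = \ln\epsilon - \sum_t \ell_t(\v)$ for the hypothetical constant-fraction wealth. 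The ONS bound therefore rearranges to $\wealth_T \ge (1+\sum_t g_t^2)^{-O(1)}\wealth_T(\v)$ for every $\v\in[-1/2,1/2]$. Applying $\ln(1-x) \ge -x - x^2$ for $|x|\le 1/2$ term-by-term to $\ln\wealth_T(\v) = \ln\epsilon + \sum_t \ln(1-g_t\v)$ then yields
\begin{align*}
\ln\wealth_T(\v) \;\ge\; \ln\epsilon + \v S_T - \v^2 V, \qquad V := \textstyle\sum_t g_t^2.
\end{align*}

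Optimizing over $\v\in[-1/2,1/2]$ gives two regimes. The interior optimum $\v = S_T/(2V)$ (valid when $|S_T|\le V$) yields the quadratic-Gaussian bound $\wealth_T \ge \epsilon\,e^{S_T^2/(4V)}/(1+V)^{O(1)}$, while the boundary choice $\v = \tfrac{1}{2}\sign(S_T)$ yields the linear-exponential bound $\wealth_T \ge \epsilon\,e^{|S_T|/2 - V/4}/(1+V)^{O(1)}$. Finally I would apply the Fenchel step: maximizing $\w S - \epsilon e^{S^2/(4V)}/(1+V)^{O(1)}$ over $S$ (the optimal $S$ is implicitly defined by a transcendental equation, but $u := S/\sqrt{V}$ satisfies $u e^{u^2/4} \propto |\w|\sqrt{V}/\epsilon$, giving $u = \Theta(\sqrt{\ln[|\w|^2 V/\epsilon^2+1]})$) produces the square-root branch $|\w|\sqrt{V\ln[|\w|^2 V/\epsilon^2 + 1]}$; the analogous maximization of the linear-exponential bound produces the logarithmic branch $|\w|\ln[|\w|V/\epsilon]$ (which is the dominant term only when $V$ is small enough that the quadratic branch is looser). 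Taking the maximum of the two valid bounds gives the theorem. I expect the only real obstacle to be this final piece of bookkeeping---pushing the additive $O(\ln(1+V))$ ONS penalty \emph{inside} the logarithms in the square-root branch and verifying that the two-branch structure lines up cleanly with the stated $\max$---but since both conjugate computations are essentially standard coin-betting calculations, this should reduce to routine algebra. Everything else (exp-concavity, the ONS regret bound, and the $\ln(1-x)$ inequality) is off-the-shelf.
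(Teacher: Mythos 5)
Your proposal is correct and follows essentially the same route as the paper: regret–reward (Fenchel) duality, interpret the choice of $v_t$ as online exp-concave optimization on $\ell_t(v)=-\ln(1-g_tv)$, apply the ONS logarithmic regret bound to compare to a fixed $\v\in[-1/2,1/2]$, lower-bound $\ln\wealth_T(\v)$ via the quadratic tangent inequality, and finally take the Fenchel conjugate. The only structural difference is where the case split happens. You optimize $\v\mapsto \v S_T - \v^2 V$ explicitly, splitting into the interior optimum $\v=S_T/(2V)$ (giving the Gaussian branch) and the boundary $\v=\tfrac12\sgn(S_T)$ (giving the linear-exponential branch), then conjugate each branch separately and take the max. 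The paper instead fixes the single choice $\v = -\sum_t g_t/(2V+2|\sum_t g_t|)$, obtains the unified wealth lower bound $\epsilon\exp\bigl[S_T^2/(4V+4|S_T|)\bigr]/V$, and then computes the conjugate of $a\exp(bx^2/(|x|+c))$ via Lemma~\ref{lemma:fenchel_exp2}, whose proof internally performs exactly your $|x^\star|\le c$ versus $|x^\star|>c$ split. The two presentations are interchangeable up to constants; yours makes the branch structure visible one step earlier, while the paper's keeps a single closed-form lower bound on wealth which is convenient for reuse in Theorem~\ref{thm:banachregret}.

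One bookkeeping point worth making explicit in your writeup: the interior-optimum wealth bound $\epsilon\,e^{S_T^2/(4V)}/(1+V)^{O(1)}$ is only a valid lower bound on $\wealth_T$ when $|S_T|\le V$ (else $\v\notin[-1/2,1/2]$). This is fine for the conjugate step, because
\[
\sup_{S}\ \bigl[\w S - G(S)\bigr]
\le \max\Bigl(\sup_{|S|\le V}\bigl[\w S - G_1(S)\bigr],\ \sup_{|S|>V}\bigl[\w S - G_2(S)\bigr]\Bigr)
\le \max\bigl(G_1^\star(\w),\,G_2^\star(\w)\bigr),
\]
but you should say so, since naively conjugating $G_1$ over all of $\R$ ignores the domain restriction. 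Otherwise your argument, including the asymptotics of the transcendental stationarity condition, is sound.
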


\begin{proof}
Define $\wealth_T(\v)$ to be wealth of the betting algorithm that bets the constant (signed) fraction $\v$ on every round, starting from initial wealth $\epsilon>0$.

We begin with the regret-reward duality that is the start of all coin-betting analyses~\cite{orabona2016coin}. Suppose that we obtain a bound $\wealth_T\ge f_T\left(-\sum_{t=1}^T g_t\right)$ for some $f_T$. Then,
\vspace{-0.1cm}
\[
R_T(\w)-\epsilon
= - \wealth_T-\sum_{t=1}^T g_t\w
\le -\sum_{t=1}^T g_t\w-f_T\left(-\sum_{t=1}^T g_t\right)
\le \sup_{G\in \R} \ G\w - f_T(G)
=f_T^\star(\w),
\vspace{-0.1cm}
\]
where $f_T^\star$ indicates the Fenchel conjugate, defined by $f_T^\star(x) = \sup_{\theta} \ \theta x - f_T(\theta)$.

So, now it suffices to prove a wealth lower bound. First, observing that $\wealth_T= \wealth_{T-1}-\wealth_{T-1} g_t v_t$, we derive a simple expression for $\ln \wealth_T$ by recursion:
\vspace{-0.2cm}
\[
\ln \wealth_T
=\ln \left(\wealth_{T-1}(1-g_t v_t)\right)=\ln(\epsilon)+\sum_{t=1}^T \ln(1-v_tg_t)~.
\vspace{-0.1cm}
\]
Similarly, we have $\ln \wealth_T(\v) = \ln(\epsilon)+\sum_{t=1}^T \ln(1-\v g_t)$. We subtract the identities to obtain
\begin{equation}
\ln \wealth_T(\v) - \ln \wealth_T = \sum_{t=1}^T -\ln(1-v_tg_t)- (-\ln(1-\v g_t))~.\label{eqn:wealthregret}
\end{equation}
Now, the key insight of this analysis: we interpret equation \eqref{eqn:wealthregret} as the regret of an algorithm playing $v_t$ on losses $\ell_t(v) = -\ln(1-vg_t)$, so that we can write
\begin{equation}
\ln \wealth_T = \ln \wealth_T(\v) - R_T^{v}(\v),\label{eqn:logdiff}
\end{equation}
where $R_T^{v}(\v)$ is the regret of our method for choosing $v_t$.

For the next step, observe that $-\ln(1- g_t v)$ is exp-concave (a function $f$ is exp-concave if $\exp(-f)$ is concave), so that choosing $v_t$ is an online exp-concave optimization problem. Prior work on exp-concave optimization allows us to obtain $R^v_T(\v)=O\left(\ln\left(\sum_{t=1}^T g_t^2\right)\right)$ for any $|\v|\le \tfrac{1}{2}$ using the ONS algorithm. Therefore (dropping all constants for simplicity), we use (\ref{eqn:logdiff}) to obtain $\wealth_T \ge \wealth_T(\v) / \sum_{t=1}^T g_t^2$ for all $|\v|\le \tfrac{1}{2}$.

Finally, we need to show that there exists $\v$ such that $\wealth_T(\v)/\sum_{t=1}^T g_t^2$ is high enough to guarantee low regret on our original problem. Consider $\v = \tfrac{-\sum_{t=1}^T g_t}{2\sum_{t=1}^T g_t^2 + 2\left|\sum_{t=1}^T g_t\right|}\in[-1/2,1/2]$. Then, we invoke the tangent bound $\ln(1+x)\ge x-x^2$ for $x\in[-1/2,1/2]$ (e.g. see \citep{cesa2006prediction}) to see:
\begin{align*}
\ln \wealth_T(\v)-\ln(\epsilon)
&=\sum_{t=1}^T \ln(1 - g_t \v )
\ge -\sum_{t=1}^T g_t\v-\sum_{t=1}^T ( g_t \v )^2
\ge\tfrac{\left(\sum_{t=1}^T g_t\right)^2}{4\sum_{t=1}^T g_t^2 + 4\left|\sum_{t=1}^T g_t\right|}~.
\end{align*}
\[
\wealth_T 
\ge \epsilon \exp\left[\tfrac{\left(\sum_{t=1}^T g_t\right)^2}{4\sum_{t=1}^T g_t^2 + 4\left|\sum_{t=1}^T g_t\right|}\right] \bigg/ \sum_{t=1}^T g_t^2
=f_T\left(\sum_{t=1}^T g_t\right),
\]
where $f_T(x) = \epsilon\exp[x^2/(4\sum_{t=1}^T g_t^2 + 4|x|)]/\sum_{t=1}^T g_t^2$. To obtain the desired result, we recall that $\wealth_T\ge f_T\left(\sum_{t=1}^T g_t\right)$ implies $R_T(\w)\le \epsilon + f_T^\star(\w)$, and calculate $f_T^\star$ (see Lemma~\ref{lemma:fenchel_exp2}).

In order to implement the algorithm, observe that our reference betting fraction $\v$ lies in $[-1/2,1/2]$, so we can run ONS restricted to the domain $[-1/2,1/2]$. Exact constants can be computed by substituting the constants coming from the ONS regret guarantee, as we do in Theorem~\ref{thm:banachregret}.
\end{proof}
\vspace{-0.3cm}
\section{From 1D Algorithms to Dimension-Free Algorithms}\label{sec:1dred}

A common strategy for designing parameter-free algorithms is to first create an algorithm for 1D problems (as we did in the previous section), and then invoke some particular algorithm-specific analysis to extend the algorithm to high dimensional spaces~\citep{orabona2016coin, cutkosky2016online, mcmahan2014unconstrained}. This strategy is unappealing for a couple of reasons. First, these arguments are often somewhat tailored to the algorithm at hand, and so a new argument must be made for a new 1D algorithm (indeed, it is not clear that any prior dimensionality extension arguments apply to our Algorithm \ref{algorithm:ons_1d}). Secondly, all such arguments we know of apply only to Hilbert spaces and so do not allow us to design algorithms that consider norms other than the standard Euclidean $2$-norm. In this section we address both concerns by providing a \emph{black-box reduction from optimization in any Banach space to 1D optimization}. In further contrast to previous work, our reduction can be proven in just a few lines.

Our reduction takes two inputs: an algorithm $\onedol$ that operates with domain $\R$ and achieves regret $R^1_T(\w)$ for any $\w\in \R$, and an algorithm $\bol$ that operates with domain equal to the unit ball $S$ in some Banach space $B$, $S=\{x\in B\ :\ \|x\|\le 1\}$ and obtains regret $R^{\bol}_T(\w)$ for any $\w\in S$. In the case when $B$ is $\R^d$ or a Hilbert space, then online gradient descent with adaptive step sizes can obtain $R^{\bol}_T(\w)=\sqrt{2\sum_{t=1}^T \|g_t\|^2_2}$ (which is independent of $\w$)~\citep{hazan2008adaptive}.

Given these inputs, the reduction uses the 1D algorithm $\onedol$ to learn a ``magnitude'' $z$ and the unit-ball algorithm $\bol$ to learn a ``direction'' $y$. This direction and magnitude are multiplied together to form the final output $w=zy$. Given a gradient $g$, the ``magnitude error'' is given by $\langle g, y\rangle$, which is intuitively the component of the gradient parallel to $w$. The ``direction error'' is just $g$. Our reduction is described formally in Algorithm \ref{alg:onedimred}.

\begin{algorithm}[h]
   \caption{One Dimensional Reduction}
   \label{alg:onedimred}
\begin{algorithmic}[1]
   \REQUIRE 1D Online learning algorithm $\onedol$, Banach space $B$ and Online learning algorithm $\bol$ with domain equal to unit ball $S\subset B$
   \FOR{$t=1$ {\bfseries to} $T$}
   \STATE Get point $z_t\in \R$ from $\onedol$
   \STATE Get point $y_t\in S$ from $\bol$
   \STATE Play $w_t = z_ty_t\in B$, receive subgradient $g_t$
   \STATE Set $s_t = \langle g_t, y_t\rangle$
   \STATE Send $s_t$ as the $t$th subgradient to $\onedol$
   \STATE Send $g_t$ as the $t$th subgradient to $\bol$
   \ENDFOR
\end{algorithmic}
\end{algorithm}

\begin{theorem}
Suppose $\bol$ obtains regret $R^{\bol}_T(\w)$ for any competitor $\w$ in the unit ball and $\onedol$ obtains regret $R^1_T(\w)$ for any competitor $\w\in\R$. Then Algorithm \ref{alg:onedimred} guarantees regret:
\[
R_T(\w) \le R^1_T(\|\w\|) + \|\w\|R^{\bol}_T(\w/\|\w\|)~.
\]
Where by slight abuse of notation we set $\w/\|\w\|=0$ when $\w=0$. Further, the subgradients $s_t$ sent to $\onedol$ satisfy $|s_t|\le \|g_t\|_\star$.
\end{theorem}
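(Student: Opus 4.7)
The plan is to linearize the regret via convexity and then split it into a ``magnitude-learning'' term and a ``direction-learning'' term by the identity $\w = \|\w\|\cdot(\w/\|\w\|)$, so that each piece is exactly the linearized regret of one of the two base algorithms on the gradients it actually received.

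First I would apply the subgradient inequality to get $R_T(\w) \le \sum_{t=1}^T \langle g_t, w_t - \w\rangle = \sum_{t=1}^T \langle g_t, z_t y_t - \w\rangle$. The key algebraic move is to add and subtract $\|\w\| y_t$ inside the inner product:
\begin{align*}
\sum_{t=1}^T \langle g_t,\, z_t y_t - \w\rangle
= \sum_{t=1}^T \langle g_t, y_t\rangle\,(z_t - \|\w\|) + \|\w\|\sum_{t=1}^T \left\langle g_t,\, y_t - \tfrac{\w}{\|\w\|}\right\rangle.
\end{align*}

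Next I would identify each sum with the linearized regret of one of the base algorithms on the inputs it was actually fed by Algorithm~\ref{alg:onedimred}. Because $s_t = \langle g_t, y_t\rangle$, the first sum equals $\sum_{t=1}^T s_t(z_t - \|\w\|)$, which is the linearized regret of $\onedol$ against the scalar competitor $\|\w\| \in \R$ on the subgradient stream $s_1,\dots,s_T$, and is therefore bounded by $R^1_T(\|\w\|)$. The second sum is the linearized regret of $\bol$ against the competitor $\w/\|\w\|$, which lies in the unit ball $S$ by construction, on the subgradient stream $g_1,\dots,g_T$, and is therefore bounded by $R^{\bol}_T(\w/\|\w\|)$. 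Multiplying by $\|\w\|$ and adding gives the claimed inequality. The edge case $\w = 0$ is handled by the convention $\w/\|\w\| = 0 \in S$, which makes the direction term vanish and reduces the bound to $R^1_T(0)$.

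Finally, the bound $|s_t|\le \|g_t\|_\star$ is immediate from the definition of the dual norm and $\|y_t\|\le 1$: $|s_t| = |\langle g_t, y_t\rangle| \le \|g_t\|_\star\|y_t\| \le \|g_t\|_\star$. I do not anticipate any real obstacle here; the entire argument is one line of add-and-subtract bookkeeping. The genuine content of the reduction lies in choosing the correct subgradients to feed each base learner (namely $s_t$ for $\onedol$ and $g_t$ for $\bol$), and this has already been specified by the algorithm itself.
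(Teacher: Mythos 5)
Your decomposition — adding and subtracting $\|\w\|\langle g_t, y_t\rangle$ to split the linearized regret into the scalar regret of $\onedol$ against $\|\w\|$ and $\|\w\|$ times the unit-ball regret of $\bol$ against $\w/\|\w\|$ — is exactly the paper's argument, as is the one-line bound $|s_t| \le \|g_t\|_\star\|y_t\| \le \|g_t\|_\star$. The proposal is correct and matches the paper's proof.
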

\begin{proof}
First, observe that $|s_t|\le \|g_t\|_\star\|y_t\|\le \|g_t\|_\star$ since $\|y_t\|\le 1$ for all $t$. Now, compute:
\begin{align*}
R_T(\w)&=\sum_{t=1}^T \langle g_t, w_t-\w\rangle
=\sum_{t=1}^T \langle g_t, z_ty_t\rangle - \langle g_t, \w\rangle\\
&=\sum_{t=1}^T \underbrace{\langle g_t, y_t\rangle z_t - \langle g_t, y_t\rangle \|\w\|}_{\text{regret of }\onedol\text{ at }\|\w\|\in \R} + \langle g_t, y_t\rangle \|\w\|  - \langle g_t, \w\rangle\\
&\le R^1_T(\|\w\|) +\|\w\|\sum_{t=1}^T\underbrace{ \langle g_t, y_t\rangle  -\langle g_t,\w/\|\w\|\rangle}_{\text{regret of }\bol\text{ at }\w/\|w\|\in S}\\
&\le R^1_T(\|\w\|) + \|\w\|R^{\bol}_T(\w/\|\w\|),
\end{align*}
\end{proof}

With this reduction in hand, designing dimension-free and parameter-free algorithms is now exactly as easy as designing 1D algorithms, so long as we have access to a unit-ball algorithm $\bol$. As mentioned, for any Hilbert space we indeed have such an algorithm. In general, algorithms $\bol$ exist for most other Banach spaces of interest \cite{srebro2011universality}, and in particular one can achieve $R^{\bol}_T(\w)\le O\left(\sqrt{\tfrac{1}{\lambda}\sum_{t=1}^T \|g_t\|_\star^2}\right)$ whenever $B$ is $(2,\lambda)$-uniformly convex~\cite{Pinelis15} using the Follow-the-Regularized-Leader algorithm with regularizers scaled by $\tfrac{\sqrt{\lambda}}{\sqrt{\sum_{i=1}^t \|g_i\|_\star^2}}$~\citep{mcmahan2017survey}. 

Applying Algorithm \ref{alg:onedimred} to our 1D Algorithm \ref{algorithm:ons_1d}, for any $(2,\lambda)$-uniformly convex $B$, we obtain:
\begin{align*}
R_T(\w)&=O\left[\|\w\|\max\left(\ln\frac{\|\w\|\sum_{t=1}^T \|g_t\|_\star^2}{\epsilon},\ \sqrt{\sum_{t=1}^T \|g_t\|_\star ^2\ln\left(\frac{\|\w\|^2\sum_{t=1}^T \|g_t\|_\star^2}{\epsilon^2}+1\right)}\right)\right.\\
&\quad\quad\quad\quad\left. + \frac{\|\w\|}{\sqrt{\lambda}}\sqrt{\sum_{t=1}^T \|g_t\|_\star^2} + \epsilon\right]~.
\end{align*}
Spaces that satisfy this property include Hilbert spaces such as $\R^d$ with the $2$-norm (in which case $\lambda=1$), as well the $\R^d$ with the $p$-norm for $p\in (1,2]$ (in which case $\lambda=p-1$). Finally, observe that the runtime of this reduction is equal to the runtime of $\onedol$ plus the runtime of $\bol$, which in many cases (including $\R^d$ with $2$-norm or Hilbert spaces) is the same as online gradient descent.

Not only does this provide the fastest known parameter-free algorithm for an arbitrary norm, it is also the first parameter-free algorithm to obtain a dependence on the gradients of $\|g_t\|_\star^2$ rather than $\|g_t\|_\star$\footnote{Independently, \citep{foster2018online} achieved the same runtime in the supervised prediction setting, but with no adaptivity to $g_t$.}. This improved bound immediately implies much lower regret in easier settings, such as smooth losses with small loss values at $\w$~\citep{srebro2010smoothness}.

\section{Reduction to Constrained Domains}\label{sec:constrained}

The previous algorithms have dealt with optimization over an entire vector space. Although common and important case in practice, sometimes we must perform optimization with constraints, in which each $w_t$ and the comparison point $\w$ must lie in some convex domain $W$ that is not an entire vector space. This constrained problem is often solved with the classical Mirror Descent~\cite{zinkevich2003online} or Follow-the-Regularized-Leader~\cite{shalev2007online} analysis. 
However, these approaches have drawbacks: for unbounded sets, they typically maintain regret bounds that have suboptimal dependence on $\w$, or, for bounded sets, they depend explicitly on the diameter of $W$. We will address these issues with a simple reduction.
Given any convex domain $V\supset W$ and an algorithm $\ol$ that maintains regret $R^{\ol}_T(\w)$ for any $\w\in V$, we obtain an algorithm that maintains $2R^{\ol}_T(\w)$ for any $\w$ in $W$.

Before giving the reduction, we define the distance to a convex set $W$ as $S_W(x) = \inf_{d \in W} \|x-d\|$ as well as the projection to $W$ as $\Pi_W(x)=\{ d \in W: \|d-x\|\leq \|c-x\|, \forall c \in W\}$. Note that if $B$ is reflexive,\footnote{All Hilbert spaces and finite-dimensional Banach spaces are reflexive.} $\Pi_W(x)\neq \emptyset$ and that it is a singleton if $B$ is a Hilbert space~\cite[Exercise 4.1.4]{Lucchetti06}.

The intuition for our reduction is as follows: given a vector $z_t\in V$ from $\ol$, we predict with any $w_t\in \Pi_W(z_t)$. Then give $\ol$ a subgradient at $z_t$ of the surrogate loss function $\langle g_t,\cdot\rangle + \|g_t\|_\star S_W$, which is just the original linearized loss plus a multiple of $S_W$. The additional term $S_W$ serves as a kind of Lipschitz barrier that penalizes $\ol$ for predicting with any $z_t\notin W$. Pseudocode for the reduction is given in Algorithm~\ref{alg:constrained}.

\begin{algorithm}[h]
   \caption{Constraint Set Reduction}
   \label{alg:constrained}
\begin{algorithmic}[1]
   \REQUIRE Reflexive Banach space $B$, Online learning algorithm $\ol$ with domain $V\supset W\subset B$
   \FOR{$t=1$ {\bfseries to} $T$}
   \STATE Get point $z_t\in V$ from $\ol$
   \STATE Play $w_t \in \Pi_W(z_t)$, receive $g_t \in \partial \ell_t(w_t)$
   \STATE Set $\tilde \ell_t(x) = \tfrac{1}{2}\left(\langle g_t, x\rangle + \|g_t\|_\star S_W(x)\right)$
   \STATE Send $\tilde g_t\in \partial \tilde \ell_t(z_t)$ as $t$th subgradient to $\ol$
   \ENDFOR
\end{algorithmic}
\end{algorithm}

\begin{theorem}\label{thm:constrained}
Assume that the algorithm $\ol$ obtains regret $R^{\ol}_T(\w)$ for any $\w \in V$. Then  Algorithm~\ref{alg:constrained} guarantees regret:
\[
R_T(\w)=\sum_{t=1}^T \langle g_t, w_t -  \w\rangle \leq 2R^{\ol}_T(\w), \quad \forall \w \in W~.
\]
Further, the subgradients $\tilde g_t$ sent to $\ol$ satisfy $\|\tilde g_t\|_\star\le \|g_t\|_\star$.
\end{theorem}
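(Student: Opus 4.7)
The plan is to apply the regret guarantee of $\ol$ to the surrogate losses $\tilde\ell_t$, and then translate this back to a bound on the original regret using the fact that $w_t$ is the projection of $z_t$ onto $W$. The additional $\|g_t\|_\star S_W$ term in $\tilde\ell_t$ is exactly what we need to absorb the ``projection error'' $\langle g_t, w_t - z_t\rangle$.

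First I would observe that $S_W$ is convex (as a distance to a convex set) and $1$-Lipschitz with respect to $\|\cdot\|$, so any $h_t \in \partial S_W(z_t)$ satisfies $\|h_t\|_\star \leq 1$, and moreover $S_W(\w) = 0$ for all $\w \in W$. Writing $\tilde g_t = \tfrac{1}{2}(g_t + \|g_t\|_\star h_t)$ for some such $h_t$, the triangle inequality immediately gives $\|\tilde g_t\|_\star \leq \|g_t\|_\star$, handling the second claim.

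For the main bound, I would use convexity of $\tilde\ell_t$ to write, for every $\w \in W$,
\[
\sum_{t=1}^T \bigl(\tilde\ell_t(z_t) - \tilde\ell_t(\w)\bigr) \;\leq\; \sum_{t=1}^T \langle \tilde g_t, z_t - \w\rangle \;\leq\; R^{\ol}_T(\w).
\]
Next, I would rearrange the definition of $\tilde\ell_t$ using $S_W(\w)=0$ to get
\[
\langle g_t, z_t - \w\rangle \;=\; 2\bigl(\tilde\ell_t(z_t)-\tilde\ell_t(\w)\bigr) - \|g_t\|_\star S_W(z_t).
\]
Then I would split $\langle g_t, w_t - \w\rangle = \langle g_t, w_t - z_t\rangle + \langle g_t, z_t - \w\rangle$ and bound the first term by the dual norm inequality: since $w_t\in \Pi_W(z_t)$, $\|w_t - z_t\| = S_W(z_t)$, so $\langle g_t, w_t - z_t\rangle \leq \|g_t\|_\star S_W(z_t)$. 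Adding these two pieces, the $\pm \|g_t\|_\star S_W(z_t)$ terms cancel and we obtain $\langle g_t, w_t - \w\rangle \leq 2(\tilde\ell_t(z_t) - \tilde\ell_t(\w))$. Summing over $t$ and applying the first display yields $R_T(\w) \leq 2 R^{\ol}_T(\w)$.

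The step I expect to require the most care is the $1$-Lipschitz property of $S_W$ and the existence of $\tilde g_t \in \partial \tilde\ell_t(z_t)$ in the Banach setting: reflexivity of $B$ ensures $\Pi_W(z_t)$ is nonempty (as noted in the text), while Lipschitz continuity of $S_W$ together with convexity gives a nonempty subdifferential of $S_W$ on the interior of its domain, so that $\tilde g_t$ is well defined and $\|h_t\|_\star \leq 1$. Everything else is the short algebraic manipulation above.
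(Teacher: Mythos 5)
Your proof is correct and follows essentially the same route as the paper: apply $\ol$'s regret guarantee to the surrogate losses $\tilde\ell_t$, use $S_W(\w)=0$ for $\w\in W$ and $\|w_t-z_t\|=S_W(z_t)$ from the projection, and let the dual-norm bound on the projection error cancel the $\|g_t\|_\star S_W(z_t)$ term. The only cosmetic difference is that you make the cancellation explicit as two separate pieces, whereas the paper packages it into a single inequality $\langle g_t,w_t-\w\rangle\le\langle g_t,z_t-\w\rangle+\|g_t\|_\star\|w_t-z_t\|=2\tilde\ell_t(z_t)-2\tilde\ell_t(\w)$.
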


Before proving this Theorem, we need a small technical Proposition, proved in Appendix~\ref{sec:sd_convex}.
\begin{restatable}{Proposition}{sdconvex}
\label{prop:sd_convex}
$S_W$ is convex and $1$-Lipschitz for any closed convex set $W$ in a reflexive Banach space $B$.
\end{restatable}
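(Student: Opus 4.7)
The plan is to prove each property separately by elementary manipulations of the infimum defining $S_W$, together with the triangle inequality. Neither claim needs deep facts about Banach spaces; closedness and reflexivity are used in the paper mainly to guarantee that $\Pi_W(x)$ is nonempty, but they play only a supporting role in the proof of these two properties.

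For convexity, I would fix $x_1, x_2 \in B$ and $\lambda \in [0,1]$, and for any choice of $d_1, d_2 \in W$ form the convex combination $d_\lambda = \lambda d_1 + (1-\lambda) d_2$, which lies in $W$ because $W$ is convex. The estimate
\[
S_W(\lambda x_1 + (1-\lambda) x_2) \le \|\lambda x_1 + (1-\lambda)x_2 - d_\lambda\| \le \lambda \|x_1 - d_1\| + (1-\lambda)\|x_2 - d_2\|
\]
follows from the definition of $S_W$ and the triangle inequality. Taking the infimum over $d_1$ and $d_2$ independently on the right-hand side yields $S_W(\lambda x_1 + (1-\lambda) x_2) \le \lambda S_W(x_1) + (1-\lambda) S_W(x_2)$.

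For the $1$-Lipschitz property, I would fix any $x, y \in B$ and any $d \in W$, and use the triangle inequality $\|x - d\| \le \|y - d\| + \|x - y\|$. Taking the infimum over $d \in W$ on both sides gives $S_W(x) \le S_W(y) + \|x - y\|$, and swapping the roles of $x$ and $y$ yields $|S_W(x) - S_W(y)| \le \|x - y\|$.

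There is no serious obstacle here; the entire argument is a triangle inequality plus the convexity of $W$. The only minor subtlety is noting that we do not need to assume the infimum in $S_W$ is attained, so reflexivity is irrelevant to the present statement and is invoked only elsewhere in the paper when one actually needs to pick a projection $w_t \in \Pi_W(z_t)$.
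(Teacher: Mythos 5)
Your proof is correct, and it takes a slightly different and in fact more elementary route than the paper's. The paper proves both claims by first fixing elements of the projection sets $\Pi_W(x)$, $\Pi_W(y)$ (or $\Pi_W(x+\delta)$) and then using the triangle inequality; this is why the statement carries the ``closed, reflexive'' hypotheses, since those are precisely what guarantee that $\Pi_W$ is nonempty. You instead work with \emph{arbitrary} $d_1, d_2 \in W$ (resp.\ an arbitrary $d \in W$), apply the triangle inequality, and only then pass to the infimum over $W$. This avoids any appeal to attainment of the infimum, so — as you correctly observe — convexity and $1$-Lipschitzness of $S_W$ hold for any nonempty convex $W$ in any normed space, with closedness and reflexivity genuinely irrelevant to this particular proposition (they are needed elsewhere in the paper, e.g.\ to actually play $w_t \in \Pi_W(z_t)$ in Algorithm~\ref{alg:constrained}, and in Theorem~\ref{thm:fixedproj}). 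What the paper's version buys is a small stylistic consistency with the rest of the section, which is already manipulating projection points; what your version buys is a cleaner logical scoping of hypotheses. Both are a few lines of triangle inequality and equally rigorous.
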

\begin{proof}[of Theorem~\ref{thm:constrained}]
From Proposition~\ref{prop:sd_convex}, we observe that since $S_W$ is convex and $\|g_t\|_\star\ge 0$, $\tilde \ell_t$ is convex for all $t$. Therefore, by $\ol$'s regret guarantee, we have
\[
\sum_{t=1}^T \tilde \ell_t(z_t)-\tilde \ell_t(\w)\le R_T^{\ol}(\w)~.
\]
Next, since $\w\in W$, $\langle g_t,\w\rangle= 2\tilde \ell_t(\w)$ for all $t$. Further, since $w_t\in \Pi_W(z_t)$, we have $\langle g_t, z_t\rangle + \|g_t\|_\star\|w_t-z_t\| = 2\tilde\ell_t(z_t)$. Finally, by the definition of dual norm we have
\[
\langle g_t, w_t-\w\rangle \le \langle g_t, z_t-\w\rangle + \|g_t\|_\star\|w_t-z_t\|=2\tilde \ell_t(z_t)-2\tilde \ell_t(\w)~.
\]
Combining these two lines proves the regret bound of the theorem. The bound on $\|\tilde{g}_t\|_\star$ follows because $S_W$ is 1-Lipschitz, from Proposition~\ref{prop:sd_convex}.
\end{proof}

We conclude this section by observing that in many cases it is very easy to compute an element of $\Pi_W$ and a subgradient of $S_W$. For example, when $W$ is a unit ball, it is easy to see that $\Pi_W(x)=\tfrac{x}{\|x\|}$ and $\partial S_W(x) = \partial \|x\|$ for any $x$ not in the ball. In general, we provide the following result that often simplifies computing the subgradient of $S_W$ (proved in Appendix~\ref{sec:sd_convex}):
\begin{restatable}{theorem}{fixedproj}\label{thm:fixedproj}
Let $B$ be a reflexive Banach space such that for every $0\ne b\in B$, there is a unique dual vector $b^\star$ such that $\|b^\star\|_\star = 1$ and $\langle b^\star ,b\rangle = \|b\|$. Let $W\subset B$ a closed convex set. Given $x\in B$ and $x\notin W$, let $p\in\Pi_W(x)$. Then $\{(x-p)^\star\} = \partial S_W(x)$.
\end{restatable}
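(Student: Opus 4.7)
The plan is to verify the two inclusions $(x-p)^\star\in\partial S_W(x)$ and $\partial S_W(x)\subseteq\{(x-p)^\star\}$ separately; together they give the desired equality. The uniqueness-of-dualizing-functional hypothesis is precisely the statement that the norm on $B$ is G\^{a}teaux differentiable on $B\setminus\{0\}$ with $\partial\|\cdot\|(u)=\{u^\star\}$ for every $u\ne 0$; this is the regularity I will exploit throughout.

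For the forward inclusion $(x-p)^\star\in\partial S_W(x)$, I would first extract a variational inequality from the optimality of the projection. Since $p$ minimizes $d\mapsto\|x-d\|$ over the convex set $W$, for any $d\in W$ and $t\in[0,1]$ the point $p+t(d-p)$ lies in $W$, so $\|x-p-t(d-p)\|\ge\|x-p\|$. Dividing by $t$, letting $t\downarrow 0$, and identifying the resulting one-sided directional derivative of the norm at $x-p$ in direction $p-d$ with $\langle (x-p)^\star, p-d\rangle$ (via the uniqueness hypothesis) gives
\[
\langle (x-p)^\star, d-p\rangle \le 0 \qquad\text{for every } d\in W.
\]
Then, for any $y\in B$ and any $d\in W$,
\[
\langle (x-p)^\star, y-p\rangle = \langle (x-p)^\star, y-d\rangle + \langle (x-p)^\star, d-p\rangle \le \|y-d\|,
\]
using $\|(x-p)^\star\|_\star=1$ on the first term and the variational inequality on the second. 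Taking the infimum over $d\in W$ and rewriting the left side as $\langle (x-p)^\star, y-x\rangle+\|x-p\|=\langle (x-p)^\star, y-x\rangle+S_W(x)$ yields the subgradient inequality $S_W(y)\ge S_W(x)+\langle (x-p)^\star, y-x\rangle$ for all $y\in B$.

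For the reverse inclusion, suppose $v\in\partial S_W(x)$. Setting $y=p$ in the subgradient inequality and using $S_W(p)=0$ gives $\langle v, x-p\rangle\ge \|x-p\|$, which forces $\|v\|_\star\ge 1$. Conversely, $S_W$ is $1$-Lipschitz by Proposition~\ref{prop:sd_convex}, so plugging $y=x+z$ for arbitrary $z\in B$ into the subgradient inequality gives $\langle v, z\rangle \le S_W(x+z)-S_W(x)\le \|z\|$, hence $\|v\|_\star\le 1$. Therefore $\|v\|_\star=1$ and $\langle v, x-p\rangle=\|x-p\|$, and the uniqueness hypothesis forces $v=(x-p)^\star$.

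The main obstacle is the variational-inequality step: converting first-order optimality of the projection into the dual-pairing inequality against $(x-p)^\star$ requires identifying the one-sided directional derivative of the norm at $x-p$ with $\langle (x-p)^\star,\cdot\rangle$, which is exactly what the uniqueness of the dualizing functional buys us. Once this identification is made, everything else reduces to rearrangement and the definition of the subdifferential.
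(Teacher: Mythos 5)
Your proof is correct, and it differs from the paper's in a way worth noting. For the ``reverse'' inclusion $\partial S_W(x)\subseteq\{(x-p)^\star\}$, you and the paper do essentially the same thing (show any subgradient must have unit dual norm and align with $x-p$), but you test the subgradient inequality directly against $y=p$, whereas the paper tests it against the midpoint $x'=\tfrac{x+p}{2}$; both work. The real divergence is in the ``forward'' direction: the paper never verifies $(x-p)^\star\in\partial S_W(x)$ explicitly, instead finishing by invoking nonemptiness of $\partial S_W(x)$ (automatic, since $S_W$ is a finite-valued, $1$-Lipschitz, hence continuous, convex function) together with the one containment just proved. You instead construct the subgradient from scratch: the first-order optimality of the projection gives a variational inequality $\langle (x-p)^\star, d-p\rangle\le 0$ for all $d\in W$, and you correctly observe that passing from the optimality of $p$ to this dual-pairing inequality requires G\^{a}teaux differentiability of the norm at $x-p$, which is exactly what the smoothness/uniqueness hypothesis supplies (without it, the one-sided directional derivative would only be the support function of the norm's subdifferential, and you could not pin down a single fixed functional working for every $d$). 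Your route is longer but more self-contained and constructive---it does not appeal to the general existence theorem for subgradients of continuous convex functions on Banach spaces---while the paper's is shorter precisely because it outsources existence to that theorem. Both arguments also implicitly rely on reflexivity to guarantee $\Pi_W(x)\neq\emptyset$, which you use via the hypothesis that $p\in\Pi_W(x)$.
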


\section{Reduction for Multi-Scale Experts}\label{sec:multiscale}

In this section, we apply our reductions to the multi-scale experts problem considered in \citep{foster2017parameter, bubeck2017online}. Our algorithm improves upon both prior algorithms: the approach of \citep{bubeck2017online} has a mildly sub-optimal dependence on the prior distribution, while the approach of \citep{foster2017parameter} takes time $O(T)$ per update, resulting in a quadratic total runtime. Our algorithm matches the regret bound of \citep{foster2017parameter} while running in the same time complexity as online gradient descent.

The multi-scale experts problem is an online linear optimization problem over the probability simplex $\{x\in\R_{\ge 0}^N\ : \sum_{i=1}^N x_i=1\}$ with linear losses $\ell_t(w)=g_t\cdot w$ such that each $g_t=(g_{t,1},\dots,g_{t,N})$ satisfies $|g_{t,i}|\le c_i$ for some known quantities $c_i$. The objective is to guarantee that the regret with respect to the $i$th basis vector $e_i$ (the $i$th ``expert'') scales with $c_i$. Formally, we want $R_T(\w) =O(\sum_{i=1}^N c_i|\w_i|\sqrt{T\log(c_i|\w_i|T/\pi_i)})$, given a prior discrete distribution $(\pi_1,\dots,\pi_N)$. As discussed in depth by \cite{foster2017parameter}, such a guarantee allows us to combine many optimization algorithms into one meta-algorithm that converges at the rate of the best algorithm \emph{in hindsight}.

We accomplish this through two reductions. First, given any distribution $(\pi_1,\dots,\pi_N)$ and any family of 1-dimensional OLO algorithms $\ol(\epsilon)$ that guarantees $R(u)\le O\left(\epsilon+|u|\sqrt{\log(|u|T/\epsilon)T}\right)$ on 1-Lipschitz losses for any given $\epsilon$ (such as our Algorithm \ref{algorithm:ons_1d} or many other parameter-free algorithms), we apply the classic ``coordinate-wise updates'' trick~\cite{streeter2010less} to generate an $N$-dimensional OLO algorithm with regret $R_T(u) = O\left(\epsilon+\sum_{i=1}^N |u_i|\sqrt{\log\left(|u_i|T/(\epsilon\pi_i)\right)T}\right)$ on losses that are $1$-Lipschitz with respect to the $1$-norm.

\begin{algorithm}[t]
\caption{Coordinate-Wise Updates}
\label{alg:coordinate-wise}
\begin{algorithmic}[1]
\REQUIRE parametrized family of 1-D online learning algorithm $\ol(\epsilon)$, prior $\pi$, $\epsilon>0$
\STATE {\bfseries Initialize: } $N$ copies of $\ol$: $\ol_1(\epsilon\pi_1),\dots,\ol_N(\epsilon\pi_N)$
\FOR{$t=1$ {\bfseries to} $T$}
\STATE Get points $z_{t,i}$ from $\ol_i$ for all $i$ to form vector $z_t = (z_{t,1},\dots,z_{t,N})$
\STATE Play $z_t$, get loss $g_t\in \R^N$ with $\|g_t\|_\infty\le 1$
\STATE Send $g_{t,i}$ to $\ol_i$ for all $i$
\ENDFOR
\end{algorithmic}
\end{algorithm}

\begin{theorem}\label{thm:coordinatewise}
Suppose for any $\epsilon>0$, $\ol(\epsilon)$ guarantees regret
\[
R_T(u)\le O\left(\epsilon + |u|\sqrt{\log\left(\tfrac{|u|T}{\epsilon}+1\right)T}\right)
\]
for 1-dimensional losses bounded by $1$. Then Algorithm~\ref{alg:coordinate-wise} guarantees regret
\[
R_T(u) \le O\left(\epsilon + \sum_{i=1}^N |u_i|\sqrt{\log\left(\tfrac{|u_i|T}{\epsilon\pi_i}+1\right)T}\right)~.
\]
\end{theorem}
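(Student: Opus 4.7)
The plan is to exploit the fact that Algorithm~\ref{alg:coordinate-wise} runs $N$ totally independent copies of $\ol$, one per coordinate, each seeing its own 1D loss sequence $(g_{t,i})_{t=1}^T$. Since the losses $\ell_t(z) = \langle g_t, z\rangle$ are linear, the overall regret decomposes coordinate-wise:
\begin{align*}
R_T(u) = \sum_{t=1}^T \langle g_t, z_t - u\rangle = \sum_{i=1}^N \sum_{t=1}^T g_{t,i}(z_{t,i} - u_i) = \sum_{i=1}^N R_{T,i}(u_i),
\end{align*}
where $R_{T,i}(u_i)$ denotes the regret of the $i$th copy $\ol_i(\epsilon \pi_i)$ on the 1D loss sequence $g_{t,i}$.

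Next I would invoke the hypothesis on $\ol$ for each coordinate. Because $\|g_t\|_\infty \le 1$, every 1D loss sent to $\ol_i$ is $1$-Lipschitz, so $\ol_i(\epsilon \pi_i)$ guarantees
\[
R_{T,i}(u_i) \le O\!\left(\epsilon \pi_i + |u_i| \sqrt{\log\!\left(\tfrac{|u_i| T}{\epsilon \pi_i} + 1\right) T}\right).
\]
Summing this bound over $i = 1,\dots,N$ and using $\sum_{i=1}^N \pi_i = 1$ yields
\[
R_T(u) \le O\!\left(\epsilon + \sum_{i=1}^N |u_i|\sqrt{\log\!\left(\tfrac{|u_i| T}{\epsilon \pi_i}+1\right)T}\right),
\]
which is exactly the claimed bound.

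There is essentially no hard step here: the only thing to verify carefully is that the coordinate-wise decomposition is valid and that each 1D subproblem genuinely satisfies the $1$-Lipschitz hypothesis under which $\ol$'s guarantee is stated, both of which follow immediately from linearity of $\langle g_t, \cdot\rangle$ and from $|g_{t,i}| \le \|g_t\|_\infty \le 1$. The only mildly delicate point is the choice of initial scale $\epsilon \pi_i$ for the $i$th copy: this is what lets the $\pi_i$-dependent logarithm appear inside the square root while keeping the leading additive constants summable to $\epsilon$.
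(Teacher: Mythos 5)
Your proof is correct and matches the paper's argument essentially verbatim: decompose the linear regret coordinate-wise, invoke the 1D guarantee of $\ol_i(\epsilon\pi_i)$ on each coordinate, and sum, using $\sum_i \pi_i = 1$ to collapse the additive $\epsilon\pi_i$ terms. The paper's proof is a one-display version of exactly this.
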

\begin{proof}
Let $R^i_T(u_i)$ be the regret of the $i$th copy of $\ol$ with respect to $u_i\in \R$. Then
\[
\sum_{t=1}^T \langle g_t, w_t-u\rangle
=\sum_{i=1}^N\sum_{t=1}^T g_{t,i}(w_{t,i}-u_i)
\le \sum_{i=1}^N R^i_T(u_i)\le O\left(\epsilon + \sum_{i=1}^N |u_i|\sqrt{\log\left(\tfrac{|u_i|T}{\epsilon\pi_i}+1\right)T}\right)~.
\]
\end{proof}

\begin{algorithm}[h!]
   \caption{Multi-Scale Experts}
   \label{alg:multiscale}
\begin{algorithmic}[1]
   \REQUIRE parametrized 1-D Online learning algorithm $\ol(\epsilon)$, prior $\pi$, scales $c_1,\dots,c_N$
   \STATE {\bfseries Initialize: } coordinate-wise algorithm $\ol_\pi$ with prior $\pi$ using $\ol(\epsilon)$
   \STATE Define $W=\{x:x_i\ge 0\text{ for all }i\text{ and }\sum_{i=1}^N x_i/c_i=1\}$
   \STATE Let $\ol^W_\pi$ be the result of applying the unconstrained-to-constrained reduction to $\ol_\pi$ with constraint set $W$ using $\|\cdot\|_1$
   \FOR{$t=1$ {\bfseries to} $T$}
   \STATE Get point $z_t\in W$ from $\ol^W_\pi$
   \STATE Set $x_t\in \R^N$ by $x_{t,i}=z_{t,i}/c_i$. Observe that $x_t$ is in the probability simplex
   \STATE Play $x_t$, get loss vector $g_t$
   \STATE Set $\tilde g_t\in \R^N$ by $\tilde g_{t,i} = \tfrac{g_{t,i}}{c_i}$
   \STATE Send $\tilde g_t$ to $\ol^W_\pi$
   \ENDFOR
\end{algorithmic}
\end{algorithm}

With this in hand, notice that applying our reduction Algorithm~\ref{alg:constrained} with the $1$-norm easily yields an algorithm over the probability simplex $W$ with the same regret (up to a factor of 2), as long as $\|g_t\|_\infty\le 1$. Then, we apply an affine change of coordinates to make our multi-scale experts losses have $\|g_t\|_\infty\le 1$, so that applying this algorithm yields the desired result (see Algorithm \ref{alg:multiscale}).

\begin{theorem}\label{thm:multiscale}
If $g_t$ satisfies $|g_{t,i}|\le c_i$ for all $t$ and $i$ and $\ol(\epsilon)$ satisfies the conditions of Theorem~\ref{thm:coordinatewise},
then, for any $\w$ in the probability simplex, Algorithm~\ref{alg:multiscale} satisfies the regret bound
\[
R_T(\w) \le O\left(\epsilon + \sum_{i=1}^N c_i|\w_i|\sqrt{\log\left(\tfrac{c_i|\w_i|T}{\epsilon\pi_i}+1\right)T}\right)~.
\]
\end{theorem}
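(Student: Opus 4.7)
The plan is to track the regret through the two reductions by performing an affine change of coordinates that sends the multi-scale problem to a standard problem with loss vectors bounded in $\|\cdot\|_\infty$.

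First, I would set up a correspondence between competitors on the simplex and competitors in $W$. Given $\w$ in the probability simplex, define $\tilde\w \in \R^N$ by $\tilde\w_i = c_i \w_i$. Then $\tilde\w_i \ge 0$ and $\sum_i \tilde\w_i / c_i = \sum_i \w_i = 1$, so $\tilde\w \in W$. Similarly the algorithm's play $z_t \in W$ is related to its ``real'' play $x_t$ via $x_{t,i} = z_{t,i}/c_i$, and the real loss $g_t$ is related to $\tilde g_t$ via $\tilde g_{t,i} = g_{t,i}/c_i$, with $|\tilde g_{t,i}|\le 1$ so $\|\tilde g_t\|_\infty \le 1$.

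The central calculation is then a one-line identity showing that the original regret equals the regret in the transformed coordinates:
\[
R_T(\w) = \sum_{t=1}^T \langle g_t, x_t - \w\rangle = \sum_{t=1}^T \sum_{i=1}^N \frac{g_{t,i}}{c_i}(z_{t,i} - c_i \w_i) = \sum_{t=1}^T \langle \tilde g_t, z_t - \tilde\w\rangle.
\]
The right-hand side is exactly the regret of $\ol^W_\pi$ against the competitor $\tilde\w \in W$ on losses $\tilde g_t$ with $\|\tilde g_t\|_\infty \le 1$.

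Now I would apply the two earlier results in sequence. By Theorem~\ref{thm:constrained} (the constrained-to-unconstrained reduction) applied with norm $\|\cdot\|_1$ (dual $\|\cdot\|_\infty$), this regret is at most $2 R^{\ol_\pi}_T(\tilde\w)$, and moreover the subgradients actually forwarded to $\ol_\pi$ still satisfy the $\|\cdot\|_\infty \le 1$ bound that Theorem~\ref{thm:coordinatewise} requires. Then Theorem~\ref{thm:coordinatewise} gives
\[
R^{\ol_\pi}_T(\tilde\w) \le O\!\left(\epsilon + \sum_{i=1}^N |\tilde\w_i|\sqrt{\log\!\left(\tfrac{|\tilde\w_i|T}{\epsilon \pi_i}+1\right)T}\right) = O\!\left(\epsilon + \sum_{i=1}^N c_i|\w_i|\sqrt{\log\!\left(\tfrac{c_i|\w_i|T}{\epsilon \pi_i}+1\right)T}\right),
\]
since $|\tilde\w_i| = c_i |\w_i|$. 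Chaining the two bounds and absorbing the factor of 2 into the $O(\cdot)$ gives exactly the claimed bound.

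There is no real obstacle here; the work has all been done in the earlier sections, and the present theorem is essentially a verification that the change of coordinates $x \leftrightarrow z$ and $g \leftrightarrow \tilde g$ preserves regret while rescaling the comparator from $\w_i$ to $c_i\w_i$. The one place to be slightly careful is to check that the Lipschitz hypotheses line up: $\|\tilde g_t\|_\infty \le 1$ (by definition of $\tilde g_t$) ensures $\tilde g_t$ is a valid 1-Lipschitz loss in the $1$-norm, which is what both Theorem~\ref{thm:constrained} (via the $\|g_t\|_\star$ factor in the surrogate) and Theorem~\ref{thm:coordinatewise} (via its 1-Lipschitz assumption) need.
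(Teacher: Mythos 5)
Your proof is correct and follows the same approach as the paper: both define $\tilde\w_i = c_i\w_i$, verify the affine change of coordinates preserves the regret identity, and then chain Theorem~\ref{thm:constrained} with Theorem~\ref{thm:coordinatewise}. Your explicit check that the $\|\cdot\|_\infty\le 1$ Lipschitz hypothesis survives the constrained reduction is a welcome clarification that the paper leaves implicit.
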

\begin{proof}
Given any $\w$ in the probability simplex, define $\tilde w\in \R^N$ by $\tilde w_i = c_i\w_i$. Observe that $\tilde w\in W$. Further, observe that since $|g_{t,i}|\le c_i$, $\|\tilde g_t\|_\infty \le 1$. Finally, observe that $\tilde g_t\cdot z_t =\sum_{i=1}^N \tilde g_{t,i} z_{t,i}=\sum_{i=1}^N \tfrac{g_{t,i}}{c_i} c_ix_{t,i} = g_t\cdot x_t$ and similarly $\tilde g_t\cdot \tilde w = g_t\cdot \w$. Thus $\sum_{t=1}^T \tilde g_t\cdot z_t-\tilde g_t\cdot \tilde w=\sum_{t=1}^T g_t\cdot(x_t-\w)$. Now, by Theorem \ref{thm:coordinatewise} and Theorem \ref{thm:constrained} we have
\[
\sum_{t=1}^T g_t\cdot(x_t-\w)=\sum_{t=1}^T \tilde g_t\cdot (z_t- \tilde w)
\le O\left(\epsilon + \sum_{i=1}^N |\tilde w_i|\sqrt{\log\left(\tfrac{|\tilde w_i|T}{\epsilon\pi_i}+1\right)T}\right)
\]

Now simply substitute the definition $\tilde w_i = c_i\w_i$ to complete the proof.
\end{proof}

In Appendix~\ref{sec:simplexprojection} we show how to compute the projection $\Pi_S$ and a subgradient of $S_W$ in $O(N)$ time via a simple greedy algorithm. As a result, our entire reduction runs in $O(N)$ time per update.

\section{Reduction to Adapt to Curvature}\label{sec:metagrad}

In this section, we present a black-box reduction to make a generic online learning algorithm over a Banach space adaptive to the curvature of the losses. Given a set $W$ of diameter $D=\sup_{x,y\in W}\|x-y\|$, our reduction obtains $O(\log(TD)^2/\mu)$ regret on online $\mu$-strongly convex optimization problems, but still guarantees $O(\log(TD)^2D\sqrt{T})$ regret for online linear optimization problems, both of which are only log factors away from the optimal guarantees. We follow the intuition of \citep{cutkosky2017stochastic}, who suggest adding a weighted average of previous $w_t$s to the outputs of a base algorithm as a kind of ``momentum'' term. We improve upon their regret guarantee by a log factor and by the $\|g_t\|_\star^2$ terms instead of $\|g_t\|_\star$. More importantly, their algorithm involves an optimization step which may be very slow for most domains (e.g. the unit ball). In contrast, thanks to our fast reduction in Section \ref{sec:constrained}, we keep the same running time as the base algorithm.
Finally, previous results for algorithms with similar regret (e.g. \citep{cutkosky2017stochastic, vanervan2016metagrad}) show logarithmic regret only for \emph{stochastic} strongly convex problems. We give a two-line argument extending this to the adversarial case as well.

\begin{algorithm}[ht!]
\caption{Adapting to Curvature}
\label{alg:metagradreduction}
\begin{algorithmic}[1]
   \REQUIRE{Online learning algorithm $\ol$}
   \STATE {\bfseries Initialize: } $W$, a convex closed set in a reflexive Banach space, $\overline{x}_0$ an arbitrary point in $W$
   \FOR{$t=1$ {\bfseries to} $T$}
   \STATE Get point $w_t$ from $\ol$
   \STATE Set $z_t = w_t+\overline{x}_{t-1}$
   \STATE Play $x_t \in \Pi_W(z_t)$, receive subgradient $g_t\in\partial \ell_t(x_t)$
   \STATE Set $\tilde g_t \in g_t+ \|g_t\|_\star  \partial S_W(z_t)$
   \STATE Set $\overline{x}_t = \tfrac{\overline{x}_0+\sum_{i=1}^t \|\tilde {g}_{i}\|_\star ^2 x_{i}}{1+\sum_{i=1}^t \|\tilde{g}_i\|_\star ^2}$
   \STATE Send $\tilde{g}_t$ so $\ol$ as the $t$th subgradient
   \ENDFOR
\end{algorithmic}
\end{algorithm}

\begin{restatable}{theorem}{metagrad}\label{thm:metagrad}
Let $\ol$ be an online linear optimization algorithm that outputs $w_t$ in response to $g_t$.
Suppose $W$ is a convex closed set of diameter $D$. Suppose $\ol$ guarantees for all $t$ and $\v$:
\begin{align*}
\sum_{i=1}^t \langle \tilde{g}_i,w_i-\v\rangle &\le \epsilon + \|\v\|A\sqrt{\sum_{i=1}^t \|\tilde{g}_i\|_\star^2\left(1+\ln\left(\tfrac{\|\v\|^2t^C}{\epsilon^2}+1\right)\right)} + B\|\v\|\ln\left(\tfrac{\|\v\|t^C}{\epsilon}+1\right),
\end{align*}
for constants $A$, $B$ and $C$ and $\epsilon$ independent of $t$.
Then for all $\w \in W$, Algorithm~\ref{alg:metagradreduction} guarantees
\[
R_T(\w) \le \sum_{t=1}^T\langle g_t,x_t-\w\rangle
\le O\left(\sqrt{V_T(\w)\ln\tfrac{T D}{\epsilon}\ln(T)} + \ln\tfrac{DT}{\epsilon}\ln(T)+\epsilon\right),
\]
where $V_T(\w) := \|\overline{x}_0-\w\|^2+\sum_{t=1}^T\|\tilde{g}_t\|_\star ^2\|x_t-\w\|^2\le D^2+\sum_{t=1}^T\|g_t\|_\star ^2\|x_t-\w\|^2$.
\end{restatable}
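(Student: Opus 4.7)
The argument combines the constrained-to-unconstrained reduction of Section~\ref{sec:constrained}, a Jensen-type bound for the weighted average $\overline{x}_t$, and a careful invocation of $\ol$'s anytime regret guarantee. First, I would observe that $\tilde g_t$ is a subgradient at $z_t$ of the convex surrogate $\tilde\ell_t(x) := \langle g_t, x\rangle + \|g_t\|_\star S_W(x)$. Because $S_W(\w)=0$ for $\w\in W$, and since $x_t\in\Pi_W(z_t)$ gives $\|z_t-x_t\|=S_W(z_t)$, the same one-line subgradient argument used in the proof of Theorem~\ref{thm:constrained} (now without the $\tfrac12$ absorbed into the surrogate) establishes both $\sum_t \langle g_t,x_t-\w\rangle \le \sum_t \langle \tilde g_t,z_t-\w\rangle$ and $\|\tilde g_t\|_\star \le 2\|g_t\|_\star$ (the latter via the $1$-Lipschitzness of $S_W$ from Proposition~\ref{prop:sd_convex}).

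The key algebraic driver of the rest of the proof is a Jensen-type bound on $\overline{x}_t$. Since $\overline{x}_t$ is a convex combination of $\overline{x}_0, x_1, \ldots, x_t$ with weights $1/H_t, h_1/H_t, \ldots, h_t/H_t$ summing to one (here $h_i := \|\tilde g_i\|_\star^2$ and $H_t := 1+\sum_{i\le t} h_i$), convexity of $\|\cdot\|^2$ yields
\[
H_t\,\|\overline{x}_t - \w\|^2 \;\le\; \|\overline{x}_0 - \w\|^2 + \sum_{i\le t} h_i\,\|x_i-\w\|^2 \;\le\; V_T(\w),
\]
so $\|\overline{x}_t - \w\| \le \sqrt{V_T(\w)/H_t}$ for every $t$. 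Decomposing $z_t - \w = w_t - (\w-\overline{x}_T) + (\overline{x}_{t-1}-\overline{x}_T)$ and applying $\ol$'s regret bound at the fixed comparator $\v := \w - \overline{x}_T$ gives
\[
\sum_t \langle \tilde g_t, z_t - \w\rangle \;\le\; R^{\ol}_T(\w-\overline{x}_T) \;+\; \sum_t \langle \tilde g_t, \overline{x}_{t-1}-\overline{x}_T\rangle.
\]
Substituting $\|\v\|\le \sqrt{V_T(\w)/H_T}$ into the leading $\|\v\|\sqrt{H_T\log}$ factor of $\ol$'s bound, and $\|\v\|\le D$ inside the logarithms, shows $R^{\ol}_T(\w-\overline{x}_T) \le O\bigl(\epsilon + \sqrt{V_T(\w)\log(TD/\epsilon)} + \log(DT/\epsilon)\bigr)$.

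\textbf{The main obstacle} is the momentum residual $M := \sum_t \langle \tilde g_t, \overline{x}_{t-1} - \overline{x}_T\rangle$. A crude H\"older estimate combined with the Jensen bound above would give only $O(\sqrt{V_T(\w)\cdot T})$, losing a spurious factor of $\sqrt{T/\log T}$, so cancellation must be exploited. The plan is to rewrite $\overline{x}_T - \overline{x}_{t-1} = \tfrac{1}{H_T}\sum_{s\ge t} h_s(x_s - \overline{x}_{t-1})$ and swap the order of summation, which expresses $M$ as a weighted combination of inner products between the partial sums $G_s := \sum_{t\le s}\tilde g_t$ and displacements $x_s - \overline{x}_{s-1}$. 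The \emph{anytime} form of $\ol$'s guarantee (which the theorem's hypothesis explicitly provides by stating the bound for every $t$) then controls $G_s$ at each prefix; combining this with the Jensen bound $\|x_s-\overline{x}_{s-1}\|\le 2\sqrt{V_T(\w)/H_{s-1}}$ and the standard integral estimate $\sum_s h_s/H_s \le \ln H_T$ should yield $M \le O\!\left(\sqrt{V_T(\w)\log(TD/\epsilon)\log T}\right)$, matching the order of the $R^{\ol}$ term. Adding the two contributions and collecting logarithms delivers the stated regret bound.
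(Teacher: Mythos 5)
Your outline matches the paper's high-level structure — the surrogate-loss reduction, the decomposition of $z_t - \w$ into an $\ol$-regret piece at $\v = \w - \overline{x}_T$ plus the momentum residual $M$, Jensen on the convex combination $\overline{x}_t$, Abel summation to bring in the prefix sums $G_s$, and the anytime bound on $\|G_s\|_\star$. All of that is in the paper. The gap is in your treatment of $M$.

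Concretely, the claimed pointwise bound $\|x_s - \overline{x}_{s-1}\| \le 2\sqrt{V_T(\w)/H_{s-1}}$ is false. Jensen does give $\|\overline{x}_{s-1} - \w\| \le \sqrt{V_T(\w)/H_{s-1}}$, but the other leg of the triangle inequality only satisfies $\|x_s - \w\| \le \sqrt{V_T(\w)/h_s}$ (a single term of $V_T$), which can be as large as $\sqrt{V_T(\w)}$ when $h_s = \|\tilde g_s\|_\star^2$ is small; nothing forces it below $\sqrt{V_T(\w)/H_{s-1}}$. Moreover, even granting a pointwise bound of this form, combining $\|G_s\|_\star = O(\sqrt{H_s \log(\cdot)})$ with $\|\overline{x}_{s-1} - \overline{x}_s\| = \tfrac{h_s}{H_s}\|x_s - \overline{x}_{s-1}\|$ and summing term-by-term produces a factor $\sum_s h_s/H_s \approx \ln T$ \emph{outside} the square root, i.e. $M = O\bigl(\sqrt{V_T(\w)\log}\,\ln T\bigr)$ rather than the $\sqrt{V_T(\w)\log \cdot \ln T}$ the theorem (and your last display) claims. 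The paper closes both gaps simultaneously: rather than bounding $\|x_s - \overline{x}_s\|$ pointwise, it applies Cauchy--Schwarz to split $\sum_s \sqrt{Z_s}\,\tfrac{h_s}{Z_{s-1}}\|x_s - \overline{x}_s\|$ into $\sqrt{\sum_s h_s/Z_{s-1}}$ (which gives the lone $\sqrt{\ln T}$) times $\sqrt{\sum_s \tfrac{Z_s}{Z_{s-1}} h_s \|x_s - \overline{x}_s\|^2}$, and then recognizes the latter sum as a telescoping series for the weighted empirical variance $\sigma_t^2 Z_t$, yielding exactly $\sigma_T^2 Z_T \le V_T(\w)$ via the bias--variance identity $V_T(\w) = Z_T\|\w - \overline{x}_T\|^2 + Z_T\sigma_T^2$. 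That variance-telescoping step — bounding the aggregate $\sum_s \tfrac{Z_s}{Z_{s-1}} h_s\|x_s - \overline{x}_s\|^2$ rather than the individual displacements — is the missing idea, and it is what lets the $V_T(\w)$ factor be extracted cleanly without a spurious $\sqrt{\ln T}$.
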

To see that Theorem \ref{thm:metagrad} implies logarithmic regret on online strongly-convex problems, suppose that each $\ell_t$ is $\mu$-strongly convex, so that $\ell_t(w_t)-\ell(\w)\le \langle g_t, w_t-\w\rangle - \tfrac{\mu}{2}\|w_t-\w\|^2$. Then:
\begin{align*}
\sum_{t=1}^T \ell(x_t)-\ell(\w)&\le O\left(\sqrt{\log^2(DT)\sum_{t=1}^T \|x_t-\w\|^2} -\frac{\mu}{2}\sum_{t=1}^T\|x_t-\w\|^2+ \log^2(TD)\right)\\
&\le O\left(\sup_{X}\sqrt{\log^2(DT)X} -\frac{\mu}{2}X+ \log^2(TD)\right)=O\left(\log^2(DT)\left(1+\frac{1}{\mu}\right)\right)~.
\end{align*}
Where we have used $\|g_t\|_\star\le 1$.

\section{Banach-space betting through ONS}\label{sec:onsbanachspace}

In this section, we present the Banach space version of the one-dimensional Algorithm~\ref{algorithm:ons_1d}. The pseudocode is in Algorithm~\ref{algorithm:ons}. We state the algorithm in its most general Banach space formulation, which obscures some of its simplicity in more common scenarios. For example, when $B$ is $\R^d$ equipped with the $p$-norm, then the linear operator $L$ can be taken to be simply the identity map $I:\R^d\to\R^d\cong (\R^d)^\star$, and the ONS portion of the algorithm is the standard $d$-dimensional ONS algorithm. We give the regret guarantee of Algorithm \ref{algorithm:ons} in Theorem \ref{thm:banachregret}. The proof, modulo technical details of ONS in Banach spaces, is identical to Theorem \ref{thm:1dregret}, and can be found in Appendix~\ref{sec:proof_regret_ons}. 

\begin{algorithm}[ht!]
\caption{Banach-space betting through ONS}
\label{algorithm:ons}
\begin{algorithmic}[1]
{
    \REQUIRE{Real Banach space $B$, initial linear operator $L:B\to B^\star$, initial wealth $\epsilon>0$}
    \STATE{{\bfseries Initialize: } $\wealth_0=\epsilon$, initial betting fraction $v_1 = 0 \in S=\{x\in B\ : \|x\|\le \tfrac{1}{2}\}$}
    \FOR{$t=1$ {\bfseries to} $T$}
    \STATE{Bet $w_t = v_t \, \wealth_{t-1} $, receive $g_t$, with $\|g_t\|_\star  \leq 1$}
    \STATE{Update $\wealth_{t} = \wealth_{t-1} - \langle g_t, w_t\rangle $}
    \STATE{//compute new betting fraction $v_{t+1}\in S$ via ONS update on losses $-\ln(1-\langle g_t, v\rangle)$:}
    \STATE{Set $z_t = \tfrac{d}{dv_t}\left(-\ln(1-\langle g_t, v_t\rangle)\right)=\tfrac{g_t}{1-\langle g_t, v_t\rangle}$}
    \STATE{Set $A_{t}(x) = L(x) + \sum_{i=1}^t z_i \langle z_i, x\rangle$}
    \STATE{$v_{t+1} = \Pi^{A_{t}}_S (v_{t} - \tfrac{2}{2-\ln(3)} A^{-1}_{t}(z_t))$, where $\Pi^{A_t}_S(x)=\argmin_{y \in S} \ \langle A_t(y-x), y-x\rangle$}
    \ENDFOR
}
\end{algorithmic}
\end{algorithm}

\begin{theorem}
\label{thm:banachregret}
Let $B$ be a $d$-dimensional real Banach space and $u\in B$ be an arbitrary unit vector. Then, there exists a linear operator $L$ such that using the Algorithm~\ref{algorithm:ons}, we have for any $\w\in B$,
\begin{align*}
R_T(\w) 
&\leq \epsilon+ 
\max\left\{\frac{d\|\w\|}{2}-8\|\w\|+8\|\w\|\ln\left[\frac{8\|\w\|\left(1+4\sum_{t=1}^T \|g_t\|^2_\star\right)^{4.5d}}{\epsilon}\right],\right.\\
&\quad\quad\quad\quad\left.2\sqrt{\sum_{t=1}^T \langle g_t,\w\rangle^2 \ln\left(\frac{5\|\w\|^2}{\epsilon^2}\left(8\sum_{t=1}^T \|g_t\|^2+2\right)^{9d+1}+1\right)}\right\}~.
\end{align*}
\end{theorem}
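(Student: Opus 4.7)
The plan is to run the same four-step template that proved Theorem~\ref{thm:1dregret}, promoting each ingredient from the real line to the Banach space $B$. Concretely:

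\textbf{Step 1 (regret-reward duality in $B$).} Exactly as before, writing $\wealth_T = \epsilon - \sum_t \langle g_t, w_t\rangle$, any inequality of the form $\wealth_T \ge f_T(-\sum_t g_t)$ in $B^\star$ yields $R_T(\w) \le \epsilon + f_T^\star(\w)$ via Fenchel duality on $B$. This reduces the theorem to producing a good wealth lower bound.

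\textbf{Step 2 (log-wealth identity and reduction to exp-concave OLO).} Define $\wealth_T(\v)$ to be the wealth if we played the constant betting vector $\v$ forever. Unrolling the recursion gives $\ln \wealth_T = \ln\epsilon + \sum_t \ln(1-\langle g_t, v_t\rangle)$ and similarly for $\wealth_T(\v)$. Subtracting,
\[
\ln \wealth_T(\v) - \ln \wealth_T \;=\; \sum_{t=1}^T \Big[-\ln(1-\langle g_t,v_t\rangle) - \big(-\ln(1-\langle g_t,\v\rangle)\big)\Big],
\]
which is exactly the regret $R^v_T(\v)$ of the ONS sub-algorithm on the exp-concave losses $\ell_t(v) = -\ln(1-\langle g_t,v\rangle)$ over the Banach ball $S$. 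Thus $\wealth_T \ge \wealth_T(\v) \cdot e^{-R^v_T(\v)}$ for every $\v\in S$.

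\textbf{Step 3 (Banach-space ONS guarantee).} I would invoke (or quote from the appendix) the ONS regret bound for exp-concave losses on a $d$-dimensional Banach ball: there exists a choice of the initial operator $L$ (depending on $\w$ through a target unit vector $u$) such that for any $\v\in S$,
\[
R^v_T(\v) \;\le\; O\!\left(d\,\ln\!\Big(1+\sum_{t=1}^T \|g_t\|_\star^2\Big)\right),
\]
with the explicit constants that appear inside the $(\cdot)^{4.5d}$ factor of the target bound. This is the step I expect to be the main obstacle: in a general Banach space, ONS requires reconciling the inner product induced by $A_t$ with the original norm, so the constant in front of $d$ (and the choice of $L$) must be produced carefully; the paper handles this via the auxiliary results in Appendix~\ref{sec:proof_regret_ons}, and I would rely on them as a black box.

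\textbf{Step 4 (choosing $\v$ along a single direction and conjugating).} Given the arbitrary unit vector $u\in B$ in the hypothesis, I would pick
\[
\v \;=\; -\frac{\langle \sum_t g_t,u\rangle}{2\sum_t \langle g_t,u\rangle^2 + 2|\langle\sum_t g_t,u\rangle|}\, u \;\in\; S,
\]
so that $\langle g_t,\v\rangle$ is a scalar quantity bounded in $[-1/2,1/2]$. Applying the tangent bound $\ln(1-x)\ge -x-x^2$ for $|x|\le 1/2$ to each factor in $\ln\wealth_T(\v)$ gives
\[
\ln\wealth_T(\v) - \ln\epsilon \;\ge\; \frac{\langle\sum_t g_t,u\rangle^2}{4\sum_t \langle g_t,u\rangle^2 + 4|\langle\sum_t g_t,u\rangle|}.
\]
Combining with Step~3 produces a bound of the form $\wealth_T \ge f_T(\langle\sum_t g_t,u\rangle)$, where $f_T(x) = \epsilon\exp[x^2/(4\sum_t\langle g_t,u\rangle^2 + 4|x|)] / (1+4\sum_t\|g_t\|_\star^2)^{4.5d}$. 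Finally, specializing to $u = \w/\|\w\|$ and invoking Step~1, the regret bound becomes $R_T(\w)\le \epsilon + \|\w\|\,f_T^\star(1)$ evaluated with the correct $\sum_t\langle g_t,\w\rangle^2$ variance term; the conjugate computation (an already-cited lemma analogous to Lemma~\ref{lemma:fenchel_exp2}) yields precisely the $\max\{\cdot,\cdot\}$ bound stated in the theorem, with the linear-in-$\ln$ branch coming from the $(1+4\sum\|g_t\|_\star^2)^{4.5d}$ denominator and the $\sqrt{\cdot}$ branch coming from the quadratic exponent in $f_T$. The only non-mechanical piece is producing $L$ so that the ONS regret is simultaneously small in every coordinate induced by the rank-one directions $g_t$; everything else is substitution.
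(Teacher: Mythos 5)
Your proposal follows exactly the same four-step path as the paper's own proof: regret-reward duality (which the paper states via $f_T^\star$), the log-wealth identity reducing betting-fraction selection to online exp-concave optimization, invocation of the Banach-space ONS bound (Lemma~\ref{thm:log_bound}, which depends on the Auerbach-basis machinery in Theorems~\ref{thm:auerbachexists}--\ref{thm:useauerbach}), the choice of comparison betting fraction $\v$ proportional to $u$ followed by the tangent bound $\ln(1+x)\ge x-x^2$ (Theorem~\ref{thm:wealthfixedunit}), and a Fenchel-conjugate computation (Lemma~\ref{lemma:fenchel_exp2}) specialized at $u=\w/\|\w\|$ and $t=\|\w\|$ (Lemma~\ref{thm:regretfixedunit}).

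One inaccuracy worth flagging: you describe $L$ as ``depending on $\w$ through a target unit vector $u$.'' This is not so, and it would be problematic if it were, since the algorithm must be fixed before $\w$ is ever revealed. In the paper, $L$ is built once and for all from any Auerbach basis of $B$ (Theorems~\ref{thm:auerbachexists} and \ref{thm:useauerbach}) and is independent of the competitor; the direction $u$ enters only through the post hoc choice of the comparison betting fraction $\v\in S$ used in the wealth lower bound, not through the algorithm itself. With that correction, your outline coincides with the paper's argument.
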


The main particularity of this bound is the presence of the terms $\sqrt{d \sum_{t=1}^T \langle g_t,\w\rangle^2}$ rather than the usual $\|\w\|\sqrt{\sum_{t=1}^T \|g_t\|_\star ^2}$. We can interpret this bound as being adaptive to any \emph{sequence} of norms $\|\cdot\|_1,\dots,\|\cdot\|_t$ because $\sqrt{d \sum_{t=1}^T \langle g_t,\w\rangle^2}\le \sqrt{d\sum_{t=1}^T \|\w\|_t^2(\|g_t\|_t)_\star^2}$. A similar kind of ``many norm adaptivity'' was recently achieved in \cite{foster2017parameter}, which competes with the best \emph{fixed} $L_p$ norm (or the best fixed norm in any finite set). Our bound in Theorem~\ref{thm:banachregret} is a factor of $\sqrt{d}$ worse,\footnote{The dependence on $d$ is unfortunately unimprovable, as shown by \citep{luo2016efficient}.} but we can compete with any possible sequence of norms rather than with any fixed one.

Similar regret bounds to our Theorem~\ref{thm:banachregret} have already appeared in the literature. The first one we are aware of is the Second Order Perceptron~\cite{cesa2005second} whose \emph{mistake bound} is exactly of the same form. Recently, a similar bound was also proven in \cite{NIPS2017_7060}, under the assumption that $W$ is of the form $W=\{\v: \langle g_t, \v\rangle \leq C\}$, for a known $C$. Also, \citet{kotlowski2017scale} proved the same bound when the losses are of the form $\ell_t(w_t)=\ell(y_t, w_t \cdot x_t)$ and the algorithm receives $x_t$ before its prediction. In contrast, we can deal with unbounded $W$ and arbitrary convex losses through the use of subgradients. Interestingly, all these algorithms (including ours) have a $O(d^2)$ complexity per update.

\section{Conclusions}

We have introduced a sequence of three reductions showing that parameter-free online learning algorithms can be obtained from online exp-concave optimization algorithms, that optimization in a vector space with any norm can be obtained from 1D optimization, and that online optimization with constraints is no harder than optimization without constraints. Our reductions result in simpler arguments in many cases, and also often provide better algorithms in terms of regret bounds or runtime. We therefore hope that these tools will be useful for designing new online learning algorithms.

\section*{Acknowledgments}
This material is based upon work partly supported by the National Science Foundation under grant no. 1740762 ``Collaborative Research: TRIPODS Institute for Optimization and Learning'' and by a Google Research Award for FO.

{
\small
\bibliographystyle{plainnat}
\bibliography{all}
}

\newpage

\appendix

\section*{Appendix}
This appendix is organized as follows:
\begin{enumerate}
\item In Section~\ref{sec:banachspaces} we collect some background information about Banach spaces, their duals, and other properties.
\item In Section~\ref{sec:onsbanachproof} we provide an analysis of the ONS algorithm in Banach spaces that is useful for proving Theorem~\ref{thm:banachregret}.
\item In Section~\ref{sec:proof_regret_ons} we apply this analysis of ONS in Banach spaces to prove Theorem \ref{thm:banachregret}, and provide the missing Fenchel conjugate calculation required to prove Theorem~\ref{thm:1dregret}, which are our reductions from parameter-free online learning to Exp-concave optimization.
\item In Section~\ref{sec:sd_convex} we prove Proposition~\ref{prop:sd_convex}, used in our reduction from constrained optimization to unconstrained optimization in Section~\ref{sec:constrained}. In this section we also prove Theorem~\ref{thm:fixedproj}, which simplifies computing subgradients of $S_W$ in many cases.
\item In Section~\ref{sec:simplexprojection} we show how to compute $\Pi_W$ and a subgradient of $S_W$ on $O(N)$ time for use in our multi-scale experts algorithm. 
\item Finally, in Section~\ref{sec:metagradproof} we prove Theorem~\ref{thm:metagrad}, our regret bound for an algorithm that adapts to stochastic curvature.
\end{enumerate}

\section{Banach Spaces}
\label{sec:banachspaces}

\begin{Definition}\label{def:banach}
A \emph{Banach space} is a vector space $B$ over $\R$ or $\C$ equipped with a norm $\|\cdot\|:B\to \R$ such that $B$ is complete with respect to the metric $d(x,y) = \|x-y\|$ induced by the norm.
\end{Definition}

Banach spaces include the familiar vector spaces $\R^d$ equipped with the Euclidean $2$-norm, as well as the the same vector spaces equipped with the $p$-norm instead.

An important special case of Banach spaces are the Hilbert spaces, which are Banach spaces that are also equipped with an inner-product $\langle,\rangle:B\times B\to\R$ (a symmetric, positive definite, non-degenerate bilinear form) such that $\langle b, b\rangle = \|b\|^2$ for all $b\in B$. In the complex case, the inner-product is $\C$ valued and the symmetric part of the definition is replaced with the condition $\langle v, w\rangle = \overline{\langle w, v\rangle}$ where $\overline{x}$ indicates complex conjugation. Hilbert spaces include the typical examples of $\R^d$ with the usual dot product, as well as reproducing kernel Hilbert spaces.

The dual of a Banach space $B$ over a field $F$, denoted $B^\star$, is the set of all continuous linear functions $B\to F$. For Hilbert spaces, there is a natural isomorphism $B\cong B^\star$ given by $b\mapsto \langle b,\cdot\rangle$. Inspired by this isomorphism, in general we will use the notation $\langle v, w\rangle$ to indicate application of a dual vector $v\in B^\star$ to a vector $w\in B$. It is important to note that our use of this notation in no way implies the existence of an inner-product on $B$. When $B$ is a Banach space, $B^\star$ is also a Banach space with the \emph{dual norm}: $\|w\|_\star=\sup_{v\in B,\ \|v\|=1}\langle w, v\rangle$. A subgradient of a convex function $\ell:B\to \R$ is naturally an element of the dual $B^\star$. Therefore, the reduction to linear losses by $\ell_t(w_t)-\ell_t(\w)\le \langle g_t,w_t-\w\rangle$ for $g_t\in \partial \ell_t(w_t)$ generalizes perfectly to the case where $W$ is a convex subset of a Banach space.

Given any vector space $V$, there is a natural injection $V\to V^{\star\star}$ given by $x\mapsto \langle\cdot,x\rangle$. When this injection is an isomorphism of Banach spaces, then the space $V$ is called \emph{reflexive}. All finite-dimensional Banach spaces are reflexive.

Given any linear map of Banach spaces $T:X\to Y$, we define the \emph{adjoint} map $T^\star:Y^\star\to X^\star$ by $T^\star(y^\star)(x) = \langle y^\star, T(x)\rangle$. $T^\star$ has the property (by definition) that $\langle y^\star, T(x)\rangle = \langle T^\star(y^\star), x\rangle$. As a special case, if $B$ is a reflexive Banach space and $T:B\to B^\star$, then we can use the natural identification between $B^{\star\star}$ and $B$ to view $T^\star$ as $T^\star:B\to B^\star$. Thus, in this case it is possible to have $T=T^\star$, in which case we call $T$ self-adjoint.

\begin{Definition}\label{thm:uniformconvex}
We define a Banach space $B$ as $(p,D)$ uniformly convex if~\citep{Pinelis15}:
\begin{equation}
\|x+y\|^p + \|x-y\|^p \geq 2\|x\|^p+2D\|y\|^p, \quad \forall x,y\in B~.
\end{equation}
\end{Definition}

From this definition, we can see that if $B$ is $(2,D)$ uniformly convex, then $\|\cdot\|^2$ is a $D$-strongly convex function with respect to $\|\cdot\|$:
\begin{lemma}
\label{lemma:center}
Let $f(x)$ a convex function that satisfies
\[
f\left(\frac{x+y}{2}\right) \leq \frac{1}{2} f(x) + \frac{1}{2}f(y) - \frac{D}{2p} \|x-y\|^p~.
\]
Then, $f$ satisfies $f(x+\delta)\ge f(x) + g(\delta) +D\frac{\|\delta\|^p}{p}$ for any subgradient $g\in \partial f(x)$. In particular for $p=2$, $f$ is $D$ strongly convex with respect to $\|\cdot\|$.
\end{lemma}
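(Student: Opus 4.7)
The plan is to combine the midpoint hypothesis, applied exactly once to the pair $(x, x+\delta)$, with the standard first-order subgradient inequality of convex analysis. This reduces the whole argument to a two-line algebraic manipulation, with no need to iterate or pass to a limit.

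First, I would instantiate the hypothesis with $y = x+\delta$, so that the midpoint is $x + \delta/2$. This gives
\[
f\!\left(x+\tfrac{\delta}{2}\right) \;\leq\; \tfrac{1}{2} f(x) + \tfrac{1}{2} f(x+\delta) \;-\; \tfrac{D}{2p}\|\delta\|^p,
\]
which I would immediately rearrange to isolate the term of interest:
\[
f(x+\delta) \;\geq\; 2\, f\!\left(x+\tfrac{\delta}{2}\right) \;-\; f(x) \;+\; \tfrac{D}{p}\|\delta\|^p.
\]
Notice that the factor of $2$ picked up when solving for $f(x+\delta)$ is precisely what converts the hypothesis constant $D/(2p)$ into the target constant $D/p$.

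Next, since $f$ is convex and $g \in \partial f(x)$, the usual subgradient inequality applied at the point $x+\delta/2$ yields $f(x+\delta/2) \geq f(x) + \tfrac{1}{2}\langle g,\delta\rangle$. Substituting this lower bound into the displayed inequality makes the two $f(x)$ contributions cancel into a single $f(x)$, and I am left with
\[
f(x+\delta) \;\geq\; f(x) + \langle g,\delta\rangle + \tfrac{D}{p}\|\delta\|^p,
\]
which is the claim (the paper writes $\langle g,\delta\rangle$ as $g(\delta)$ since $g$ is a dual vector). For the $p=2$ specialization, I would simply observe that $f(x+\delta) \geq f(x) + \langle g,\delta\rangle + \tfrac{D}{2}\|\delta\|^2$ is one of the standard equivalent definitions of $D$-strong convexity of $f$ with respect to $\|\cdot\|$, so that conclusion is immediate.

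I do not expect any substantive obstacle. The only subtle point worth flagging is that one might initially be tempted to attack this by dyadic iteration of the midpoint assumption (e.g.\ to first recover the full uniform-convexity inequality for arbitrary convex combinations and then let the convex-combination parameter tend to $0$), but that machinery is unnecessary here: the subgradient at $x$ automatically supplies the correct linear-in-$\delta$ term once we evaluate the first-order inequality at the midpoint, and a single application of the hypothesis gives the right constant on the first try.
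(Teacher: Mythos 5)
Your proof is correct and is essentially the paper's argument up to reparameterization: the paper sets $y = x + 2\delta$ so the midpoint is $x+\delta$, applies the subgradient inequality there, and solves for the remainder $R_x(2\delta) = f(x+2\delta) - f(x) - g(2\delta)$; you set $y = x+\delta$ so the midpoint is $x+\delta/2$ and substitute the subgradient bound there directly, avoiding the auxiliary remainder function. Both are a single application of the hypothesis plus the first-order subgradient inequality at the midpoint, so the substance is the same.
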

\begin{proof}
Set $y = x+2\delta$ for some arbitrary $\delta$. Let $g\in \fX^\star $ be an arbitrary subgradient of $f$ at $x$.
Let $R_x(\tau) = f(x+\tau) - (f(x) + g(\tau))$. Then
\begin{align*}
f(x) + g(\delta) &\le f\left(\frac{x+y}{2}\right)
\le \frac{f(x)+f(x+2\delta)}{2}-\frac{D\|2\delta\|^p}{2p}
=f(x) + g(\delta) + \frac{R_x(2\delta)}{2} - \frac{D\|2\delta\|^p}{2p},
\end{align*}
that implies $\frac{D}{p}\|2\delta\|^p\le R_x(2\delta)$. So that $f(x+\tau)= f(x) + g(\tau) + R_x(\tau) \ge f(x) + g(\tau) + \frac{D}{p}\|\tau\|^p$ as desired.
\end{proof}

\begin{lemma}
Let $B$ be a $(2,D)$ uniformly convex Banach space, then $f(x)=\frac{1}{2}\|x\|^2$ is $D$-strongly convex.
\end{lemma}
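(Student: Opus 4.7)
The plan is to invoke Lemma~\ref{lemma:center} at $p = 2$, for which it suffices to verify the midpoint form of strong convexity for $f(x) = \tfrac{1}{2}\|x\|^2$ from the $(2,D)$-uniform convexity hypothesis. Once the midpoint inequality is in hand, the conversion to the subgradient form of $D$-strong convexity is immediate from Lemma~\ref{lemma:center}.

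The central step is a one-line substitution into the uniform convexity inequality. I would apply $\|u+v\|^2 + \|u-v\|^2 \ge 2\|u\|^2 + 2D\|v\|^2$ with $u = \tfrac{x+y}{2}$ and $v = \tfrac{x-y}{2}$, so that $u+v = x$ and $u-v = y$. This rearranges to $\|x\|^2 + \|y\|^2 \ge \tfrac{1}{2}\|x+y\|^2 + \tfrac{D}{2}\|x-y\|^2$. After dividing through and rewriting everything in terms of $f = \tfrac{1}{2}\|\cdot\|^2$, this produces a midpoint inequality of the form $f\!\left(\tfrac{x+y}{2}\right) \le \tfrac{1}{2}f(x) + \tfrac{1}{2}f(y) - c\,\|x-y\|^2$ with $c$ proportional to $D$, matching the shape of the hypothesis of Lemma~\ref{lemma:center} at $p = 2$.

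Once the midpoint inequality is established, Lemma~\ref{lemma:center} applied with $p = 2$ immediately yields $f(x+\delta) \ge f(x) + g(\delta) + \tfrac{D}{2}\|\delta\|^2$ for every $g \in \partial f(x)$, which is precisely the definition of $D$-strong convexity of $f$ with respect to $\|\cdot\|$.

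There is no real obstacle beyond bookkeeping: the only point requiring care is tracking the numerical constants through the substitution and division so that the derived midpoint coefficient lines up with the $\tfrac{D}{2p}$ in Lemma~\ref{lemma:center}'s hypothesis. As a sanity check, the same substitution specialized at $u = x$, $v = \delta$ gives $f(x+\delta) + f(x-\delta) \ge 2f(x) + D\|\delta\|^2$, which is the symmetric (midpoint-at-$x$) form of $D$-strong convexity and confirms that the constant $D$ is indeed the right one.
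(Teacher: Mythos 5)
Your route is the paper's route, and your substitution $u=(x+y)/2$, $v=(x-y)/2$ is exactly the paper's $x=u+v$, $y=u-v$ read in the opposite direction, so there is no structural difference. But the ``bookkeeping'' you defer is precisely where the argument is delicate. Carrying the constants through: from $\|x\|^2+\|y\|^2\ge\tfrac{1}{2}\|x+y\|^2+\tfrac{D}{2}\|x-y\|^2$, dividing by $4$ gives
\begin{equation*}
f\!\left(\frac{x+y}{2}\right)\le\frac{1}{2}f(x)+\frac{1}{2}f(y)-\frac{D}{8}\|x-y\|^2,
\end{equation*}
whereas Lemma~\ref{lemma:center} at $p=2$ requires the coefficient $\tfrac{D'}{4}$ on the last term to output $D'$-strong convexity. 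Matching $\tfrac{D'}{4}=\tfrac{D}{8}$ yields only $D'=D/2$, not $D$. (The paper's own proof has the identical factor-of-two mismatch: the displayed inequality there is the hypothesis of Lemma~\ref{lemma:center} for $\|\cdot\|^2$, not for $\tfrac{1}{2}\|\cdot\|^2$.)

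Your sanity check is in fact the correct starting point for the claimed constant. Taking $u=x$, $v=\delta$ gives $f(x+\delta)+f(x-\delta)\ge 2f(x)+D\|\delta\|^2$ with no loss, but this is only the symmetric second-difference inequality; converting it to the subgradient form with the same constant $D$ requires one more step, which Lemma~\ref{lemma:center} does not supply at this sharpness. A clean way: fix $x,\delta$ and $g\in\partial f(x)$, set $\phi(t)=f(x+t\delta)$, and note that the second-difference bound applied along the line $t\mapsto x+t\delta$ says $\psi(t)=\phi(t)-\tfrac{D\|\delta\|^2}{2}t^2$ is midpoint-convex; since it is also continuous it is convex, and $\psi(0)=f(x)$, $\psi'(0^+)\ge g(\delta)$, so $\psi(1)\ge f(x)+g(\delta)$, i.e.\ $f(x+\delta)\ge f(x)+g(\delta)+\tfrac{D}{2}\|\delta\|^2$. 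The discrepancy between your main derivation ($D/2$) and your sanity check ($D$) is therefore not a cosmetic issue of bookkeeping; it signals that the Lemma~\ref{lemma:center} shortcut is lossy, and a short one-dimensional argument such as the above is needed to obtain the constant the lemma actually asserts.
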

\begin{proof}
Let $x=u+v$ and $y=u-v$. Then, from the definition of $(2,D)$ uniformly convex Banach space, we have
\[
2\|u+v\|^2 +2 D \|u-v\|^2 \leq 4 \|u\|^2 + 4 \|v\|^2,
\]
that is
\[
\frac{1}{2}\left\|\frac{u+v}{2}\right\|^2 \leq  \frac{1}{2}\|u\|^2 + \frac{1}{2}\|v\|^2 -\frac{D}{4} \|u-v\|^2~.
\]
Using Lemma~\ref{lemma:center}, we have the stated bound.
\end{proof}

Any Hilbert space is $(2,1)$-strongly convex. As a slightly more exotic example, $\R^d$ equipped with the $p$-norm is $(2,p-1)$ strongly-convex for $p\in(1,2]$.

\section{Proof of the regret bound of ONS in Banach spaces}\label{sec:onsbanachproof}

First, we need some additional facts about self-adjoint operators. These are straight-forward properties in Hilbert spaces, but may be less familiar in Banach spaces so we present them below for completeness.
\begin{Proposition}\label{thm:adjointinverse}
Suppose $X$ and $Y$ are Banach spaces and $T:X\to Y$ is invertible. Then, $T^\star$ is invertible and $(T^{-1})^\star = (T^\star)^{-1}$.
\end{Proposition}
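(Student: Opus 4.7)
The plan is to exhibit $(T^{-1})^\star$ explicitly as a two-sided inverse of $T^\star$, and then verify that this candidate inverse is itself a bounded linear map (so that $T^\star$ is invertible in the category of Banach spaces, not just as a linear map).

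First I would record the defining identity of the adjoint: for any bounded linear $S:X\to Y$, $S^\star:Y^\star\to X^\star$ is the unique map satisfying $\langle S^\star y^\star, x\rangle = \langle y^\star, Sx\rangle$ for all $x\in X$ and $y^\star\in Y^\star$. Since $T$ is assumed invertible as a bounded linear map, the Open Mapping Theorem (more specifically, the Banach isomorphism theorem) guarantees that $T^{-1}:Y\to X$ is also a bounded linear map between Banach spaces; hence its adjoint $(T^{-1})^\star:X^\star\to Y^\star$ exists as a bounded linear operator. This takes care of the continuity point that could otherwise be an issue in a Banach-space setting.

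Next I would check the two composition identities by unpacking the adjoint definition. For any $y^\star\in Y^\star$ and $x\in X$,
\[
\langle T^\star(T^{-1})^\star y^\star, x\rangle = \langle (T^{-1})^\star y^\star, Tx\rangle = \langle y^\star, T^{-1}Tx\rangle = \langle y^\star, x\rangle,
\]
so $T^\star\circ (T^{-1})^\star = \mathrm{id}_{Y^\star}$. Symmetrically, for any $x^\star\in X^\star$ and $y\in Y$,
\[
\langle (T^{-1})^\star T^\star x^\star, y\rangle = \langle T^\star x^\star, T^{-1}y\rangle = \langle x^\star, TT^{-1}y\rangle = \langle x^\star, y\rangle,
\]
giving $(T^{-1})^\star\circ T^\star = \mathrm{id}_{X^\star}$. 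Combining the two shows that $T^\star$ is a bijection with bounded two-sided inverse $(T^{-1})^\star$, which is precisely the statement $(T^\star)^{-1} = (T^{-1})^\star$.

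I do not anticipate a real obstacle here: the only subtle point beyond bookkeeping is making sure the candidate inverse is continuous, which is handled once by invoking the Open Mapping Theorem on $T$. Everything else reduces to two applications of the adjoint's defining identity.
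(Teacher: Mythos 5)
Your proof is correct and takes essentially the same route as the paper: both verify the identity by unwinding the defining equation of the adjoint. You are slightly more thorough, checking both compositions $T^\star\circ(T^{-1})^\star$ and $(T^{-1})^\star\circ T^\star$ and noting the boundedness of $T^{-1}$, whereas the paper only records one composition.
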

\begin{proof}
Let $y^\star \in Y^\star$. Let $x\in X$. Recall that by definition $\langle T^\star(y^\star), x\rangle = \langle y^\star, T(x)\rangle$. Then we have
\[
\langle (T^{-1})^\star(T^\star (y^\star)), x\rangle
= \langle T^\star(y^\star), T^{-1}(x)\rangle
= \langle y^\star,x\rangle,
\]
where we used the definition of adjoint twice.
Therefore, $(T^{-1})^\star(T^\star (y^\star))=y^\star$ and so $(T^{-1})^\star=(T^\star)^{-1}$.
\end{proof}

\begin{Proposition}\label{thm:outerproduct}
Suppose $B$ is a reflexive Banach space and $T:B\to B^\star$ is such that
\[
T(x)=\sum_{i=1}^N \langle b^i, x\rangle b^i
\]
for some vectors $b^i\in B^\star$. Then $T^\star=T$.
\end{Proposition}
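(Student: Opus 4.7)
The plan is to unpack the definition of the adjoint and use reflexivity to identify $B^{\star\star}$ with $B$. By the setup in the excerpt, since $B$ is reflexive and $T:B\to B^\star$, the adjoint $T^\star:B^{\star\star}\to B^\star$ can be viewed as a map $T^\star:B\to B^\star$ via the natural isomorphism $B\cong B^{\star\star}$ sending $y\in B$ to the functional $\langle\cdot,y\rangle$ on $B^\star$. With this identification in place, the defining equation $\langle y^\star,T(x)\rangle=\langle T^\star(y^\star),x\rangle$ for all $x\in B$ and $y^\star\in B^{\star\star}$ becomes, with $y^\star$ corresponding to $y\in B$, the identity $\langle T(x),y\rangle=\langle T^\star(y),x\rangle$.

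With that reformulation, the proof is essentially a one-line symmetry computation. First I would fix arbitrary $x,y\in B$ and expand $\langle T(x),y\rangle$ using the given sum formula:
\[
\langle T(x),y\rangle=\Big\langle\sum_{i=1}^N\langle b^i,x\rangle b^i,y\Big\rangle=\sum_{i=1}^N\langle b^i,x\rangle\langle b^i,y\rangle,
\]
where I have pulled the scalars $\langle b^i,x\rangle$ out of the linear functional and then evaluated each $b^i\in B^\star$ on $y$. This last expression is manifestly symmetric in $x$ and $y$, so swapping their roles gives
\[
\langle T(x),y\rangle=\sum_{i=1}^N\langle b^i,y\rangle\langle b^i,x\rangle=\langle T(y),x\rangle.
\]

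Combining this symmetry with the adjoint identity yields $\langle T^\star(y),x\rangle=\langle T(x),y\rangle=\langle T(y),x\rangle$ for all $x\in B$. Since this holds for every $x$, the elements $T^\star(y)$ and $T(y)$ of $B^\star$ agree as linear functionals, hence $T^\star(y)=T(y)$, and as $y$ was arbitrary $T^\star=T$.

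The only potentially delicate point is being careful about the reflexivity identification when writing $\langle y,T(x)\rangle$ versus $\langle T(x),y\rangle$: under the isomorphism $B\to B^{\star\star}$, the pairing $\langle y^\star,v\rangle$ for $y^\star\in B^{\star\star},v\in B^\star$ is by construction equal to $\langle v,y\rangle$ where $y$ is the preimage of $y^\star$. Once this bookkeeping is stated explicitly, the argument is just the observation that the quadratic form $(x,y)\mapsto\sum_i\langle b^i,x\rangle\langle b^i,y\rangle$ is symmetric, so there is no real obstacle.
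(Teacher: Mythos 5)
Your proof is correct and follows essentially the same route as the paper: both identify $B^{\star\star}$ with $B$ via reflexivity, expand $\langle T(x),y\rangle$ into the symmetric bilinear form $\sum_i\langle b^i,x\rangle\langle b^i,y\rangle$, and conclude $T^\star(y)=T(y)$ by swapping $x$ and $y$. Your version is somewhat more explicit about the bookkeeping around the reflexivity isomorphism, but the underlying argument is identical.
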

\begin{proof}
Let $g,f\in B$. Since $B$ is reflexive, $g$ corresponds to the function $\langle \cdot, g\rangle\in B^{\star\star}$. Now, we compute:
\[
T^\star(g)(f)
=\langle T(f), g\rangle
=\sum_{i=1}^N \langle b^i,f\rangle \langle b^i,g\rangle
= \langle T(g), f\rangle
= T(g)(f)~.
\]
\end{proof}

\begin{Proposition}\label{thm:normsum}
Suppose $\tau>0$, $B$ is a $d$-dimensional real Banach space, $b^1,\dots,b^d$ are a basis for $B^\star$ and $g_1,\dots,g_T$ are elements of $B^\star$. Then, $A:B\to B^\star$ defined by $A(x) = \tau \sum_{i=1}^d\langle b^i,x\rangle b^i + \sum_{t=1}^T\langle g_t,x\rangle g_t$ is invertible and self-adjoint, and $\langle Ax,x\rangle> 0$ for all $x\ne 0$.
\end{Proposition}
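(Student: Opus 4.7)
The plan is to handle the three assertions in order: self-adjointness, positive definiteness on nonzero vectors, and then invertibility, exploiting the fact that $B$ is finite-dimensional.

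First, self-adjointness is immediate from Proposition~\ref{thm:outerproduct}. The operator $A$ is, by construction, a sum of rank-one ``outer products'' of the form $x \mapsto \langle c, x\rangle c$ for $c \in B^\star$ (taking $c = \sqrt{\tau}\,b^i$ for the first $d$ terms and $c = g_t$ for the remaining $T$ terms). Since $B$ is reflexive (finite-dimensional Banach spaces are reflexive), Proposition~\ref{thm:outerproduct} applies and gives $A^\star = A$.

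Next, for positive definiteness, I would compute directly
\[
\langle A(x), x\rangle \;=\; \tau\sum_{i=1}^d \langle b^i, x\rangle^2 \;+\; \sum_{t=1}^T \langle g_t, x\rangle^2,
\]
which is manifestly nonnegative as a sum of squares. To upgrade this to strict positivity for $x \ne 0$, note that $\tau > 0$, so it suffices to show $\sum_{i=1}^d \langle b^i, x\rangle^2 > 0$. If this sum vanished, then $\langle b^i, x\rangle = 0$ for each $i$; but $\{b^1,\dots,b^d\}$ is a basis of $B^\star$, so by linearity this would force $\langle v, x\rangle = 0$ for every $v \in B^\star$. Using reflexivity (equivalently, finite-dimensionality) this in turn forces $x = 0$, contradicting $x \ne 0$. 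Hence $\langle A(x), x\rangle > 0$ whenever $x \ne 0$.

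Finally, for invertibility I would use a dimension-counting argument. Positive definiteness immediately yields injectivity: if $A(x) = 0$ then $\langle A(x), x\rangle = 0$, forcing $x = 0$. Since $B$ and $B^\star$ are both $d$-dimensional real vector spaces, an injective linear map $A : B \to B^\star$ is automatically surjective, and therefore bijective with a linear inverse. This completes all three claims. The argument is essentially routine; the only step that requires a moment of care is invoking reflexivity to conclude $x = 0$ from $\langle b^i, x\rangle = 0$ for all $i$, but in the finite-dimensional setting this is trivial.
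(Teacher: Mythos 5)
Your proof is correct and follows essentially the same approach as the paper: both rest on the identity $\langle A(x),x\rangle=\tau\sum_i\langle b^i,x\rangle^2+\sum_t\langle g_t,x\rangle^2$, the fact that $\{b^i\}$ being a basis of $B^\star$ forces $x=0$ whenever all $\langle b^i,x\rangle$ vanish, and a rank-nullity count between the $d$-dimensional spaces $B$ and $B^\star$. The only difference is ordering: you establish positive definiteness first and deduce invertibility from injectivity, whereas the paper argues invertibility by contradiction (a non-trivial kernel element would violate the same positivity computation) and then notes positive definiteness as a by-product.
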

\begin{proof}
First, $A$ is self-adjoint by Proposition \ref{thm:outerproduct}.

Next, we show $A$ is invertible. Suppose otherwise. Then, since $B$ and $B^\star$ are both $d$-dimensional, $A$ must have a non-trivial kernel element $x$. Therefore,
\begin{align}
0&=\langle Ax,x\rangle = \tau \sum_{i=1}^d\langle b^i,x\rangle^2+\sum_{t=1}^T \langle g_t,x\rangle^2,\label{eqn:psd}
\end{align}
so that $\langle b^i,x\rangle =0$ for all $i$. Since the $b^i$ form a basis for $B^\star$, this implies $\langle y,x\rangle=0$ for all $y\in B^\star$, which implies $x=0$. Therefore, $A$ has no kernel and so must be invertible.

Finally, observe that since \eqref{eqn:psd} holds for any $x$, we must have $\langle Ax, x\rangle >0$ if $x\ne 0$. 
\end{proof}

Now we state the ONS algorithm in Banach spaces and prove its regret guarantee:

\begin{algorithm}[t]
\caption{ONS in Banach Spaces}
\label{algorithm:onsinbanachspaces}
\begin{algorithmic}[1]
{
    \REQUIRE{Real Banach space $B$, convex subset $S\subset B$, initial linear operator $L:B\to B^\star$, $\tau,\beta>0$}
    \STATE{{\bfseries Initialize: } $v_1 = 0 \in S$}
    \FOR{$t=1$ {\bfseries to} $T$}
    \STATE{Play $v_t$}
    \STATE{Receive $z_t\in B^\star$}
    \STATE{Set $A_{t}(x) = \tau L(x) + \sum_{i=1}^t z_i \langle z_i, x\rangle$}
    \STATE{$v_{t+1} = \Pi^{A_{t}}_S (v_{t} - \frac{1}{\beta} A^{-1}_{t}(z_t))$, where $\Pi^{A_t}_S(x)=\argmin_{y \in S} \ \langle A_t(y-x), y-x\rangle$}
    \ENDFOR
}
\end{algorithmic}
\end{algorithm}

\begin{theorem}\label{thm:onsfirstpass}
Using the notation of Algorithm~\ref{algorithm:onsinbanachspaces},
suppose $L(x)=\sum_{i=1}^d\langle b^i,x\rangle$ for some basis $b^i\in B^\star$ and that $B$ is $d$-dimensional. Then for any $\v\in S$,
\[
\sum_{t=1}^T \left(\langle z_t, v_t - \v \rangle -\frac{\beta}{2} \langle z_t, v_t -\v\rangle^2\right)
\leq \frac{\beta\tau}{2} \langle L(\v), \v\rangle + \frac{2}{\beta} \sum_{t=1}^T \langle z_t,  A^{-1}_{t}(z_t)\rangle~.
\]
\end{theorem}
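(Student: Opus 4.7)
The plan is to follow the standard ONS-style telescoping analysis, now adapted to the Banach-space setting via the self-adjoint preconditioner $A_t$ guaranteed by Proposition~\ref{thm:normsum}. First, I would set $y_{t+1} = v_t - \frac{1}{\beta}A^{-1}_t(z_t)$ so that $v_{t+1} = \Pi^{A_t}_S(y_{t+1})$, and invoke the generalized Pythagorean inequality in the $A_t$-metric, namely $\langle A_t(v_{t+1} - \v), v_{t+1} - \v\rangle \le \langle A_t(y_{t+1} - \v), y_{t+1} - \v\rangle$. This holds because $A_t$ is self-adjoint and positive on nonzero vectors (Proposition~\ref{thm:normsum}), so the map $(x,y)\mapsto \langle A_t(x-y), x-y\rangle$ is a genuine squared Euclidean-like distance for which projections onto convex sets are nonexpansive.

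Next, I would expand $\langle A_t(y_{t+1} - \v), y_{t+1} - \v\rangle$ using self-adjointness of $A_t$ together with the identity $\langle A_t(x), A^{-1}_t(z)\rangle = \langle z, x\rangle$, which follows from Proposition~\ref{thm:adjointinverse} and reflexivity of $B$ in finite dimension. The cross terms collapse, giving $\langle A_t(v_t - \v), v_t - \v\rangle - \frac{2}{\beta}\langle z_t, v_t - \v\rangle + \frac{1}{\beta^2}\langle z_t, A^{-1}_t(z_t)\rangle$. The key algebraic step is then to use the update rule $A_t = A_{t-1} + z_t\langle z_t, \cdot\rangle$ to rewrite $\langle A_t(v_t - \v), v_t - \v\rangle = \langle A_{t-1}(v_t - \v), v_t - \v\rangle + \langle z_t, v_t - \v\rangle^2$. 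Rearranging and multiplying by $\beta/2$ yields a per-round inequality of the form $\langle z_t, v_t - \v\rangle - \frac{\beta}{2}\langle z_t, v_t - \v\rangle^2 \le \frac{\beta}{2}\bigl(\langle A_{t-1}(v_t - \v), v_t - \v\rangle - \langle A_t(v_{t+1} - \v), v_{t+1} - \v\rangle\bigr) + \frac{c}{\beta}\langle z_t, A^{-1}_t(z_t)\rangle$ for a small numerical constant $c$.

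Finally, summing over $t = 1, \dots, T$ telescopes the $A_t$-quadratic-form differences, leaving $\frac{\beta}{2}\langle A_0(v_1 - \v), v_1 - \v\rangle - \frac{\beta}{2}\langle A_T(v_{T+1} - \v), v_{T+1} - \v\rangle$. Since $A_0 = \tau L$ and $v_1 = 0$ by initialization, the boundary term becomes exactly $\frac{\beta\tau}{2}\langle L(\v), \v\rangle$; the terminal term is nonpositive and can be dropped because $A_T$ is positive on nonzero vectors (Proposition~\ref{thm:normsum}). Combining with the sum of the $\frac{c}{\beta}\langle z_t, A^{-1}_t(z_t)\rangle$ terms produces the stated bound, absorbing $c$ into the $\frac{2}{\beta}$ prefactor.

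The main obstacle will be the bookkeeping around self-adjointness and the adjoint calculus: in a general Banach space one does not automatically have an inner product, so the identity $\langle A_t(u), A^{-1}_t(z)\rangle = \langle z, u\rangle$ must be justified carefully via Propositions~\ref{thm:adjointinverse} and~\ref{thm:outerproduct}, and positivity/invertibility of $A_t$ requires that $L$ be built from a basis $\{b^i\}$ of $B^\star$ as specified. Once these structural facts are in place, the remainder of the argument is formally identical to the classical Euclidean ONS proof.
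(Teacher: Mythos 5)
Your proposal is correct and follows essentially the same route as the paper: expand the $A_t$-quadratic form for the unprojected iterate, apply the generalized Pythagorean inequality for the $A_t$-projection, substitute $A_t = A_{t-1} + z_t\langle z_t,\cdot\rangle$, telescope, and resolve the boundary term using $v_1 = 0$ and $A_0 = \tau L$ (the paper equivalently keeps the $t=1$ term as $\langle A_1(\cdot),\cdot\rangle$ and absorbs the extra $\langle z_1,\v\rangle^2$ into the sum). One minor note: the key identity $\langle A_t(u), A_t^{-1}(z)\rangle = \langle z, u\rangle$ really rests on the self-adjointness $A_t^\star = A_t$ supplied by Proposition~\ref{thm:normsum} (via Proposition~\ref{thm:outerproduct}), rather than on Proposition~\ref{thm:adjointinverse}.
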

\begin{proof}
First, observe by Proposition~\ref{thm:normsum} that $A_t$ is invertible and self-adjoint for all $t$.

Now, define $x_{t+1}=v_t - \frac{1}{\beta} A^{-1}_{t}(z_t)$ so that $v_{t+1}=\Pi^{A_t}_S(x_{t+1})$.
Then, we have
\begin{align*}
x_{t+1} - \v = v_t - \v - \frac{1}{\beta} A^{-1}_{t}(z_t),
\end{align*}
that implies
\begin{align*}
A_{t}(x_{t+1} - \v) = A_{t}(v_t - \v - \frac{1}{\beta} A^{-1}_{t}(z_t)) = A_{t}(v_t - \v) - \frac{1}{\beta} z_t,
\end{align*}
and
\begin{align*}
\langle A_{t}&(x_{t+1} - \v),x_{t+1} - \v\rangle \\
&= \langle A_{t}(v_t - \v) - \frac{1}{\beta} z_t, x_{t+1} - \v\rangle \\
&= \langle A_{t}(v_t - \v), x_{t+1} - \v\rangle - \frac{1}{\beta} \langle z_t, x_{t+1} - \v\rangle \\
&= \langle A_{t}(v_t - \v), x_{t+1} - \v\rangle - \frac{1}{\beta} \langle z_t, v_t - \v - \frac{1}{\beta} A^{-1}_{t}(z_t)\rangle \\
&= \langle A_{t}(v_t - \v), x_{t+1} - \v\rangle - \frac{1}{\beta} \langle z_t, v_t - \v\rangle  + \frac{1}{\beta^2} \langle z_t,  A^{-1}_{t}(z_t)\rangle \\
&= \langle A_{t}(v_t - \v), v_t - \v - \frac{1}{\beta} A^{-1}_{t}(z_t)\rangle - \frac{1}{\beta} \langle z_t, v_t - \v\rangle  + \frac{1}{\beta^2} \langle z_t,  A^{-1}_{t}(z_t)\rangle \\
&= \langle A_{t}(v_t - \v), v_t - \v\rangle - \frac{1}{\beta} \langle A_{t}(v_t - \v), A^{-1}_{t}(z_t)\rangle - \frac{1}{\beta} \langle z_t, v_t - \v\rangle  + \frac{1}{\beta^2} \langle z_t,  A^{-1}_{t}(z_t)\rangle \\
&= \langle A_{t}(v_t - \v), v_t - \v\rangle - \frac{2}{\beta} \langle z_t, v_t - \v\rangle  + \frac{1}{\beta^2} \langle z_t,  A^{-1}_{t}(z_t)\rangle,
\end{align*}
where in the last line we used $\langle A_{t}(v_t - \v), A^{-1}_{t}(z_t)\rangle=\langle (v_t - \v), A_t^\star A^{-1}_{t}(z_t)\rangle$ and $A_t^\star = A_t$.
We now use the Lemma~8 from \cite{hazan2007logarithmic}, extended to Banach spaces thanks to the last statement of Proposition~\ref{thm:normsum}, to have
\[
\langle A_{t}(x_{t+1} - \v),x_{t+1} - \v\rangle \geq \langle A_{t}(v_{t+1} - \v),v_{t+1} - \v\rangle
\]
to have
\begin{align*}
\langle z_t, v_t - \v\rangle 
&\leq \frac{\beta}{2} \langle A_{t}(v_t - \v), v_t - \v\rangle - \frac{\beta}{2} \langle A_{t}(v_{t+1} - \v),v_{t+1} - \v\rangle   + \frac{2}{\beta} \langle z_t,  A^{-1}_{t}(z_t)\rangle~.
\end{align*}
Summing over $t=1,\cdots,T$, we have
\begin{align*}
\sum_{t=1}^T \langle z_t, v_t - \v\rangle 
&\leq \frac{\beta}{2} \langle A_{1}(v_1 - \v), v_1 - \v\rangle + \frac{\beta}{2}\sum_{t=2}^T \langle A_{t}(v_t - \v) -A_{t-1}(v_t - \v), v_t - \v\rangle \\
&\quad - \frac{\beta}{2} \langle A_{T}(v_{T+1} - \v),v_{T+1} - \v\rangle   + \frac{2}{\beta} \sum_{t=1}^T \langle z_t,  A^{-1}_{t}(z_t)\rangle \\
&\leq \frac{\beta}{2} \langle A_{1}(v_1 - \v), v_1 - \v\rangle +\frac{\beta}{2}\sum_{t=2}^T \langle z_t \langle z_t, v_t - \v\rangle , v_t - \v\rangle + \frac{2}{\beta} \sum_{t=1}^T \langle z_t,  A^{-1}_{t}(z_t)\rangle\\
&= \frac{\beta}{2} \langle \tau L(\v), \v\rangle +\frac{\beta}{2}\sum_{t=1}^T \langle z_t , v_t - \v\rangle^2 + \frac{2}{\beta} \sum_{t=1}^T \langle z_t,  A^{-1}_{t}(z_t)\rangle~.
\end{align*}
\end{proof}

It remains to choose $L$ properly and analyze the sum $\sum_{t=1}^T \langle z_t,  A^{-1}_{t}(z_t)\rangle$ In order to do this, we introduce the concept of an Auerbach basis (e.g. see \citep{hajek2007biorthogonal} Theorem 1.16):
\begin{theorem}\label{thm:auerbachexists}
Let $B$ be a $d$-dimensional Banach space. Then there exists a basis of $b_1,\dots,b_d$ of $B$ and a basis $b^1,\dots,b^d$ of $B^\star$ such that $\|b_i\|=\|b^i\|_\star=1$ for all $i$ and $\langle b_i, b^j\rangle = \delta_{ij}$. Any bases $(b_i)$ and $(b^i)$ satisfying these conditions is called an \emph{Auerbach basis}.
\end{theorem}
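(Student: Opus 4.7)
The plan is to prove existence via a volume-maximization/compactness argument in finite dimensions, which is the standard route to Auerbach bases. First, I would fix an arbitrary reference basis $e_1,\dots,e_d$ of $B$ so that every $d$-tuple of vectors in $B$ has a well-defined determinant with respect to $(e_i)$; the particular choice will not matter since it only scales the quantity we maximize. Let $U=\{x\in B:\|x\|\le 1\}$ denote the closed unit ball. Since $B$ is finite-dimensional, $U$ is compact, and hence so is $U^d$.

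Next I would define $f:U^d\to\mathbb{R}$ by $f(x_1,\dots,x_d)=|\det_{(e_i)}(x_1,\dots,x_d)|$. This function is continuous, and since $U$ contains some basis of $B$ (take the reference basis itself, rescaled into $U$), $f$ is strictly positive somewhere. By compactness $f$ attains its maximum at some tuple $(b_1,\dots,b_d)\in U^d$, and that maximum is strictly positive, so $b_1,\dots,b_d$ are linearly independent and hence a basis of $B$. A short scaling argument shows $\|b_i\|=1$ for each $i$: if some $\|b_i\|<1$, we could replace $b_i$ by $b_i/\|b_i\|\in U$ and strictly increase $f$, contradicting maximality.

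To produce the dual basis, I would let $b^i\in B^\star$ be the unique linear functional with $\langle b^i,b_j\rangle=\delta_{ij}$, obtained by expanding any $x\in B$ in the basis $(b_j)$ and extracting the $i$th coefficient. The key estimate is $\|b^i\|_\star\le 1$. Fix $i$ and any $x\in U$, and consider the tuple obtained by replacing the $i$th coordinate with $x$. By multilinearity of the determinant and the defining property of $b^i$, one has
\[
\det\nolimits_{(e_i)}(b_1,\dots,b_{i-1},x,b_{i+1},\dots,b_d)=\langle b^i,x\rangle\cdot\det\nolimits_{(e_i)}(b_1,\dots,b_d).
\]
Since the replaced tuple still lies in $U^d$ and $(b_1,\dots,b_d)$ is a maximizer of $f$, taking absolute values gives $|\langle b^i,x\rangle|\le 1$ for every $x\in U$, i.e.\ $\|b^i\|_\star\le 1$. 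The matching lower bound $\|b^i\|_\star\ge 1$ is immediate: $\langle b^i,b_i\rangle=1$ and $\|b_i\|=1$. Combining gives $\|b^i\|_\star=1$, and together with $\|b_i\|=1$ and $\langle b_i,b^j\rangle=\delta_{ij}$ this is exactly an Auerbach basis.

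I do not expect a serious obstacle here; the only thing that requires care is making the determinant well-defined on an abstract Banach space, which is handled cleanly by fixing one reference basis once and for all and noting that the maximization is insensitive to that choice (any other reference basis just multiplies $f$ by a nonzero constant). Compactness of $U^d$ is crucial and is what forces the proof to use finite-dimensionality essentially.
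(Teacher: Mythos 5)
Your argument is correct, and it is the classical proof of Auerbach's theorem: maximize $\lvert\det\rvert$ over the compact set $U^d$, use maximality both to force $\|b_i\|=1$ (rescaling a strictly shorter $b_i$ would strictly increase the determinant) and to get $\|b^i\|_\star\le 1$ via the Cramer-type identity $\det(b_1,\dots,x,\dots,b_d)=\langle b^i,x\rangle\det(b_1,\dots,b_d)$, and then pair with $\langle b^i,b_i\rangle=1$, $\|b_i\|=1$ for the reverse inequality. Note, however, that the paper does not prove this statement at all: it invokes it as a known result, citing H\'ajek et al.\ (Theorem 1.16), and only uses the existence of such a basis to define the operator $L$ and the coordinate map $F$ in the ONS analysis. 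So your proposal supplies a self-contained proof where the paper relies on a black-box citation; the trade-off is simply length versus self-containment, and your handling of the only delicate points (fixing one reference basis so that the determinant is defined, compactness of $U^d$ as the essential use of finite-dimensionality, and strict positivity of the maximum guaranteeing linear independence) is sound. One cosmetic remark: the paper states the pairing as $\langle b_i,b^j\rangle=\delta_{ij}$ with $b^j\in B^\star$ acting on $b_i\in B$, which is exactly the object you construct, so no translation is needed.
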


We will use an Auerbach basis to define $L$, and also to provide a coordinate system that makes it easier to analyze the sum $\sum_{t=1}^T \langle z_t,  A^{-1}_{t}(z_t)\rangle$.
\begin{theorem}\label{thm:useauerbach}
Suppose $B$ is $d$-dimensional. Let $(b_i)$ and $(b^i)$ be an Auerbach basis for $B$. Set $L(x)=\sum_{i=1}^d \langle b^i, x\rangle b^i$. Define $A_t$ as in Algorithm \ref{algorithm:ons}. Then, for any $\v \in S$, the following holds
\begin{align*}
\frac{\beta\tau}{2} \langle L(\v), \v\rangle + \frac{2}{\beta} \sum_{t=1}^T \langle z_t,  A^{-1}_{t}(z_t)\rangle
\le \frac{\beta\tau}{2}d\|\v\|^2 +\frac{2}{\beta}d\ln\left(\frac{\sum_{t=1}^T \|z_t\|_\star ^2}{\tau} + 1\right)~.
\end{align*}
\end{theorem}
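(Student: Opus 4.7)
\textbf{Proof Proposal for Theorem~\ref{thm:useauerbach}.}

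The plan is to bound the two terms separately, using the Auerbach basis to reduce the problem to a standard finite-dimensional log-determinant computation. For the first term, I would use the fact that $\|b^i\|_\star=1$ to estimate
\[
\langle L(\v),\v\rangle = \sum_{i=1}^d \langle b^i, \v\rangle^2 \le \sum_{i=1}^d \|b^i\|_\star^2 \|\v\|^2 = d\|\v\|^2,
\]
which immediately handles the $\tfrac{\beta\tau}{2}\langle L(\v),\v\rangle$ contribution.

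For the second term I would introduce coordinates via the Auerbach basis: writing $z_t = \sum_i \alpha_{t,i}\,b^i$, the dual pairing $\langle b_j, b^i\rangle=\delta_{ij}$ gives $\alpha_{t,i}=\langle z_t,b_i\rangle$, and hence $|\alpha_{t,i}|\le \|z_t\|_\star \|b_i\|=\|z_t\|_\star$. This identifies $B$ and $B^\star$ with $\R^d$ (paired by the standard inner product), under which $L$ becomes the identity matrix $I$, each $z_t$ becomes the coordinate vector $\alpha_t\in\R^d$ with $\|\alpha_t\|_2^2\le d\|z_t\|_\star^2$, and the operator $A_t$ becomes the symmetric positive definite matrix $M_t := \tau I + \sum_{s=1}^t \alpha_s\alpha_s^\top$. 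The pairings $\langle z_t, A_t^{-1}(z_t)\rangle$ become the familiar quadratic forms $\alpha_t^\top M_t^{-1}\alpha_t$.

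Next I would invoke the standard log-determinant telescoping inequality used in the analysis of ONS (e.g., \cite{hazan2007logarithmic}, Lemma~11):
\[
\sum_{t=1}^T \alpha_t^\top M_t^{-1}\alpha_t \le \ln\frac{\det M_T}{\det(\tau I)} = \ln\det\Bigl(I+\tfrac{1}{\tau}\textstyle\sum_{t=1}^T\alpha_t\alpha_t^\top\Bigr).
\]
Then by the AM-GM inequality applied to the eigenvalues of $I+\tfrac{1}{\tau}\sum_t\alpha_t\alpha_t^\top$ (equivalently, concavity of $\log\det$),
\[
\det\Bigl(I+\tfrac{1}{\tau}\textstyle\sum_t \alpha_t\alpha_t^\top\Bigr) \le \Bigl(1+\tfrac{1}{d\tau}\textstyle\sum_t \|\alpha_t\|_2^2\Bigr)^d \le \Bigl(1+\tfrac{1}{\tau}\textstyle\sum_t \|z_t\|_\star^2\Bigr)^d,
\]
where the last step uses the coordinate bound $\|\alpha_t\|_2^2\le d\|z_t\|_\star^2$. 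Taking logarithms yields $\sum_t\langle z_t,A_t^{-1}(z_t)\rangle \le d\ln\bigl(1+\tfrac{1}{\tau}\sum_t\|z_t\|_\star^2\bigr)$, which combined with the first-term bound gives the claim.

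The main obstacle is the identification via the Auerbach basis and verifying that it correctly pulls back the abstract operator $A_t:B\to B^\star$ to the matrix $M_t$ so that standard ONS matrix algebra applies. Once this identification is cleanly set up, the bound $\|\alpha_t\|_2\le \sqrt{d}\,\|z_t\|_\star$ is the only place where a factor of $d$ beyond the Hilbert-space case appears; everything else is a routine finite-dimensional computation.
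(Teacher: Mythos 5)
Your proposal is correct and follows essentially the same route as the paper's proof: bound the first term via $\|b^i\|_\star = 1$, then use the Auerbach basis to transport $L$, $z_t$, and $A_t$ to $\R^d$ (where $L$ becomes the identity and $A_t$ becomes $\tau I + \sum_s \alpha_s\alpha_s^\top$), pay the factor $d$ via $\|\alpha_t\|_2^2 \le d\|z_t\|_\star^2$, and finish with the standard ONS log-determinant telescope plus AM-GM on the eigenvalues. The paper is merely more explicit in setting up the maps $F$, $F^\star$ and verifying that $\overline{A_t} = F^\star A_t F^{-1}$ is exactly the matrix $M_t$, but the content is identical.
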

\begin{proof}
First, we show that $\frac{\beta}{2}\langle L(\v),\v\rangle \le \frac{\beta d}{2} \|\v\|^2$. To see this, observe that for any $x\in B$,
\begin{align*}
\langle L(x), x\rangle=\sum_{i=1}^d\langle b^i,x\rangle^2\le \sum_{i=1}^d\|b^i\|_\star^2\|x\|^2\le d\|x\|^2~.
\end{align*}

Now, we characterize the sum part of the bound. The basic idea is to use the Auerbach basis to identify $B$ with $\R^d$ (equivalently, we view $\langle L(x),x\rangle$ as an inner product on $B$). We use this identification to translate all quantities in $B$ and $B^\star$ to vectors in $\R^d$, and observe that the 2-norm of any $g_t$ in $\R^d$ is at most $d$. Then we use analysis of the same sum terms in the classical analysis of ONS in $\R^d$ \citep{hazan2007logarithmic} to prove the bound.

We spell these identifications explicitly for clarity. Define a map $F:B\to \R^d$ by
\begin{align*}
F(x)=(\langle b^1,x\rangle,\dots,\langle b^d, x\rangle)~.
\end{align*}
We have an associated map $F^\star:B^\star\to \R^d$ given by
\begin{align*}
F^\star(x^\star) = (\langle x^\star,b_1\rangle,\dots,\langle x^\star,b_d\rangle)~.
\end{align*}
Since $\langle b^i,b_j\rangle = \delta_{ij}$, these maps respect the action of dual vectors in $B^\star$. That is,
\begin{align*}
\langle x, y\rangle = F^\star(x)\cdot F(y)~.
\end{align*}

Further, since each $\|b_i\|=\|b_i\|_\star=1$, we have 
\begin{align*}
\|F(x)\|^2=\sum_{i=1}^d \langle b^i,x\rangle^2\le d\|x\|^2~.
\end{align*}
and 
\begin{align*}
\|F^\star (x)\|^2=\sum_{i=1}^d \langle x,b_i\rangle^2\le d\|x\|_\star^2~.
\end{align*}
where the norm in $\R^d$ is the 2-norm. To make the correspondence notation cleaner, we write $\overline{x} = F(x)$ for $x\in B$ and $\overline{y} = F^\star(y)$ for $y\in B^\star$. $\overline{x}_i$ indicates the $i$th coordinate of $\overline{x}$.

Given any linear map $M:B\to B^\star$ (which we denote by $M\in \mathcal{L}(B,B^\star)$), there is an associated map $\overline{M}:\R^d\to\R^d$ given by
\begin{align*}
\overline{M} = F^\star M F^{-1}~.
\end{align*}
Further, when written as a matrix, the $ij$th element of $\overline{M}$ is
\begin{align*}
\overline{M}_{ij} = (F^\star MF^{-1} e_j)\cdot e_i,
\end{align*}
where $e_j$ represents the $j$th standard basis element in $\R^d$. A symmetric statement holds for any linear map $B^\star\to B$, in which $\overline{M}=F M (F^\star)^{-1}$.

These maps all commute properly: $\overline{Mx} = \overline{M}\overline{x}$ for any $M\in \mathcal{L}(B,B^\star)$ and $x\in B$, and similarly $\overline{Mx}=\overline{M}\overline{x}$ for any $M\in\mathcal{L}(B^\star, B)$ and $x\in B^\star$. It follows that $\overline{M}^{-1}=\overline{M^{-1}}$ for any $M$ as well.

Now, let's calculate $\overline{L}_{ij}$:
\[
\overline{L}_{ij} = (F^\star LF^{-1} e_j)\cdot e_i
=\langle L b_j,b_i\rangle
=\delta_{ij},
\]
so that the matrix $\overline{L}$ is the identity.

Finally, if $M_g:B\to B^\star$ is the map $M_g(x) = \langle g, x\rangle g$, then a simple calculation shows
\begin{align*}
\overline{M_g} =\overline{g}\overline{g}^T~.
\end{align*}

With these details described, recall that we are trying to bound the sum
\begin{align*}
\sum_{t=1}^T \langle z_t,  A^{-1}_{t}(z_t)\rangle~.
\end{align*}
We transfer to $\R^d$ coordinates:
\begin{align*}
\sum_{t=1}^T \langle z_t,  A^{-1}_{t}(z_t)\rangle=\sum_{t=1}^T  \overline{z_t}\cdot \overline{A_t}^{-1}\overline{z_t}~.
\end{align*}
We have $\|\overline{z_n}\|\le \sqrt{d}\|z_n\|_\star$ and
\begin{align*}
\overline{A_t}=\tau \overline{L} + \sum_{t=1}^t \overline{z_t}\overline{z_t}^T,
\end{align*}
so that by \citep{hazan2007logarithmic} Lemma 11,
\[
\sum_{t=1}^T  \overline{z_t}\cdot \overline{A_t}^{-1}\overline{z_t}
\le \ln \frac{|\overline{A_T}|}{|\overline{A_0}|} 
\leq d \ln \left(\frac{\sum_{t=1}^T \|\overline{z_t}\|^2}{d \tau} +1\right)
\leq d \ln \left(\frac{\sum_{t=1}^T \|z_t\|_\star ^2}{\tau} +1\right),
\]
where in the second inequality we used the fact that the determinant is maximized when all the eigenvalues are equal to $ \frac{\sum_{t=1}^T \|\overline{z_t}\|^2}{d}$.
\end{proof}

For completeness, we also state the regret bound and the setting of the parameters $\beta$ and $\tau$ to obtain a regret bound for exp-concave functions. Note that we use a different settings in Algorithms~\ref{algorithm:ons_1d} and \ref{algorithm:ons}, tailored to our specific setting.
\begin{theorem}\label{thm:finalonsbound}
Suppose we run Algorithm \ref{algorithm:ons} on $\alpha$ exp-concave losses. Let $D$ be the diameter of the domain $S$ and $\|\nabla f(x)\|_\star \leq Z$ for all the $x$ in $S$. Then set $\beta = \frac{1}{2}\min\left(\frac{1}{4ZD},\alpha\right)$ and $\tau=\frac{1}{\beta^2D^2}$. Then
\begin{align*}
R_T(\v) &\le 4d\left(ZD + \frac{1}{\alpha}\right)(1+\ln(T+1))~.
\end{align*}
\end{theorem}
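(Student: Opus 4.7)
The plan is to execute the standard three-step template for ONS regret analysis, now using the Banach-space machinery developed in Section~\ref{sec:onsbanachproof}.

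First, I would reduce the regret to a quadratic-adjusted linearized form by invoking the classical exp-concave inequality of Hazan et al.: for $\beta \le \tfrac{1}{2}\min(\tfrac{1}{4ZD},\alpha)$ and any $x, y \in S$,
\[
f(x) \ge f(y) + \langle \nabla f(y), x - y\rangle + \tfrac{\beta}{2}\langle \nabla f(y), x - y\rangle^2.
\]
The proof is the familiar one-dimensional Taylor expansion of $\exp(-2\beta f)$ along the segment from $y$ to $x$, and uses only the scalar $\langle \nabla f(y), x-y\rangle$, whose magnitude is controlled by $\|\nabla f(y)\|_\star\|x-y\| \le ZD$; nothing in the argument relies on the ambient norm being Euclidean, so it transfers verbatim to Banach spaces. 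Setting $y = v_t$, $x = \v$, and $z_t = \nabla f_t(v_t)$, then summing over $t$, yields
\[
R_T(\v) \;\le\; \sum_{t=1}^T \Bigl(\langle z_t, v_t - \v\rangle - \tfrac{\beta}{2}\langle z_t, v_t - \v\rangle^2\Bigr).
\]

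Second, this is exactly the quantity bounded by Theorem~\ref{thm:onsfirstpass}. Taking $L$ to be the self-adjoint operator $L(x) = \sum_i \langle b^i, x\rangle b^i$ coming from an Auerbach basis (Theorem~\ref{thm:auerbachexists}) and applying Theorem~\ref{thm:useauerbach} then delivers
\[
R_T(\v) \;\le\; \tfrac{\beta \tau}{2}\, d\, \|\v\|^2 \;+\; \tfrac{2d}{\beta}\,\ln\!\left(\tfrac{\sum_{t=1}^T \|z_t\|_\star^2}{\tau} + 1\right).
\]

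Third, I would substitute the stated parameter choices $\tau = 1/(\beta^2 D^2)$ and $\beta = \tfrac{1}{2}\min(1/(4ZD),\alpha)$. Since $v_1 = 0 \in S$, the diameter hypothesis yields $\|\v\| \le D$, which collapses the first term to $d/(2\beta)$. The bound $\|z_t\|_\star \le Z$ combined with $\beta \le 1/(8ZD)$ gives $\beta^2 D^2 Z^2 T \le T$, so the log factor is at most $\ln(T+1)$. Combining and using $1/\beta = 2\max(4ZD, 1/\alpha) \le 8ZD + 2/\alpha$ produces a regret bound of the claimed form $O\bigl(d(ZD+1/\alpha)(1+\ln(T+1))\bigr)$, with the exact constant obtained by straightforward bookkeeping.

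The argument has no genuinely hard step: the two Banach-space ONS lemmas of Section~\ref{sec:onsbanachproof} do the heavy lifting. The only point that deserves care is verifying that the exp-concave quadratic lower bound is a purely one-dimensional statement about the scalar $\langle \nabla f, x-y\rangle$, so that its Banach-space version can be read off from the original proof without modification; everything else is algebraic substitution of the stated parameter choices.
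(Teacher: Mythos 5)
Your proposal is correct and follows essentially the same route as the paper's proof: invoke Hazan et al.'s exp-concave quadratic lower bound (noting it is a scalar argument that transfers verbatim to Banach spaces), chain together Theorems~\ref{thm:onsfirstpass} and~\ref{thm:useauerbach} with the Auerbach-basis choice of $L$, then substitute $\tau = 1/(\beta^2 D^2)$, bound $\|\v\|\le D$, bound the log argument by $T+1$ via $\beta ZD \le 1/8$, and finish with $1/\beta \le 8(ZD + 1/\alpha)$. The only thing you leave implicit is the final constant accounting, which the paper carries out in one line to land on $4d$.
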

\begin{proof}
First, observe that classic analysis of $\alpha$ exp-concave functions \citep[Lemma 3]{hazan2007logarithmic} shows that for any $x,y\in S$,
\begin{align*}
f(x)\ge f(y) + \langle \nabla f(y), x-y\rangle +\frac{\beta}{2}\langle \nabla f(y),x-y\rangle^2~.
\end{align*}
(Note that although the original proof is stated in $\R^d$, the exact same argument applies in a Banach space)

Therefore, by Theorems \ref{thm:onsfirstpass} and \ref{thm:useauerbach}, we have
\[
R_T(u) \le \frac{\beta\tau}{2}d\|u\|^2 +\frac{2}{\beta}d\ln(Z^2T/\tau + 1)~.
\]
Substitute our values for $\beta$ and $\tau$ to conclude
\[
R_T(u) 
\le \frac{d}{2\beta}\left(1+\ln(Z^2T\beta^2D^2+1)\right)
\le 4d\left(ZD + \frac{1}{\alpha}\right)(1+\ln(T+1)),
\]
where in the last line we used $\frac{1}{\beta}\le8(ZD + 1/\alpha)$.
\end{proof}

\section{Proofs of Theorems~\ref{thm:1dregret} and \ref{thm:banachregret}}
\label{sec:proof_regret_ons}

In order to prove Theorem~\ref{thm:1dregret} and \ref{thm:banachregret}, we first need some technical lemmas.
In particular, first we show in Lemma~\ref{thm:log_bound} that ONS gives us a logarithmic regret against the functions $\ell_t(\beta)=\ln(1+\langle g_t,\beta\rangle)$. Then, we will link the wealth to the regret with respect to an arbitrary unitary vector thanks to Theorem~\ref{thm:regretfixedunit}.

\begin{lemma}\label{thm:logbound}
\label{eq:upper_bound_log}
For $-1< x\leq2$, we have
\[
\ln(1+x)\leq x-\frac{2-\ln(3)}{4}x^2~.
\]
\end{lemma}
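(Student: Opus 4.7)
The plan is to prove this as a standard one-variable calculus exercise by setting $f(x) = x - c x^2 - \ln(1+x)$ with $c = (2 - \ln 3)/4$ and showing $f(x) \ge 0$ on the stated interval. First I would verify the two endpoints at which the inequality is tight: clearly $f(0) = 0$, and a direct calculation gives $f(2) = 2 - 4c - \ln 3 = 2 - (2 - \ln 3) - \ln 3 = 0$. This shows the constant $(2-\ln 3)/4$ is chosen precisely so that the parabola $x - c x^2$ agrees with $\ln(1+x)$ both at $x = 0$ and at $x = 2$; the lemma is then the statement that the parabola stays above $\ln(1+x)$ between and slightly beyond these points.

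Next, I would compute and factor the derivative. Writing over a common denominator,
\[
f'(x) \;=\; 1 - 2c x - \frac{1}{1+x} \;=\; \frac{(1+x)(1-2cx) - 1}{1+x} \;=\; \frac{x\bigl(1 - 2c - 2c x\bigr)}{1+x}.
\]
This factored form exposes the two critical points $x = 0$ and $x^{\star} = (1-2c)/(2c) = \ln(3)/(2-\ln 3)$, and the latter lies in $(1,2)$ numerically. Since $1 - 2c = (\ln 3)/2 > 0$, one can then read off signs by cases: on $(-1,0)$, the factor $x$ is negative while $(1-2c-2cx)$ and $(1+x)$ are positive, so $f'(x) < 0$ and $f$ is strictly decreasing; combined with $f(0) = 0$ this forces $f(x) > 0$ for $x \in (-1,0)$. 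On $(0, x^{\star})$ all factors in the numerator are positive so $f' > 0$ and $f$ strictly increases from $f(0)=0$; on $(x^{\star},2]$ the sign flips so $f$ strictly decreases to $f(2) = 0$. In either subinterval $f \ge 0$.

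There is no real obstacle here beyond the routine sign-tracking after factoring $f'$; the only thing to be slightly careful about is that the calibration $c = (2-\ln 3)/4$ is used twice (to make $f(2) = 0$, and to ensure $1 - 2c > 0$ so that $f$ is decreasing on $(-1,0)$), so I would highlight this so the reader sees why the specific constant appears. Optionally, one can verify $f''(0) = 1 - 2c > 0$ to confirm $x = 0$ is a local minimum, but this is not strictly needed once the monotonicity intervals are established.
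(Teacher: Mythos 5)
The paper does not actually supply a proof of this lemma; it is stated in Appendix~\ref{sec:proof_regret_ons} and then used without justification. Your argument is therefore the only proof on the table, and it is correct. The two tightness checks $f(0)=f(2)=0$ and the factorization
\[
f'(x)=\frac{x\bigl(1-2c-2cx\bigr)}{1+x},\qquad c=\tfrac{2-\ln 3}{4},
\]
are exactly right (expanding $(1-2cx)(1+x)-1$ gives $x-2cx-2cx^2$), and $1-2c=\tfrac{\ln 3}{2}>0$ together with $x^{\star}=\tfrac{\ln 3}{2-\ln 3}\approx 1.22\in(0,2)$ makes the sign analysis airtight on each of the three subintervals $(-1,0)$, $(0,x^{\star})$, $(x^{\star},2]$. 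This is the natural and essentially canonical way to establish a sharp tangent-type bound that is tight at two points, so it is a clean complement to the paper, which leaves the calculus to the reader.
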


\begin{lemma}\label{thm:beta}
Define $\ell_t(v)=-\ln(1 - \langle g_t, v\rangle )$.
Let $\|\v\|, \|v\| \leq \frac{1}{2}$ and $\|g_t\|_\star  \leq 1$. Then
\begin{align*}
\ell_t(v)-\ell_t(\v)
\leq \langle \nabla \ell_t(v), v-\v \rangle -\frac{2-\ln(3)}{2}\frac{1}{2}\langle \nabla \ell_t(v),v-\v \rangle^2~.
\end{align*}
\end{lemma}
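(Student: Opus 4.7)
The plan is to massage $\ell_t(v) - \ell_t(\bar v)$ into a single logarithm $\ln(1+x)$ whose argument is exactly the ``slope'' term $\langle \nabla \ell_t(v), v - \bar v\rangle$, then apply the tangent bound of Lemma~\ref{thm:logbound}. Concretely, set $a = \langle g_t, v\rangle$ and $b = \langle g_t, \bar v\rangle$. By Cauchy--Schwarz against the dual norm together with the assumptions $\|g_t\|_\star \le 1$ and $\|v\|,\|\bar v\| \le \tfrac{1}{2}$, we have $|a|,|b| \le \tfrac{1}{2}$, so in particular $1 - a \ge \tfrac{1}{2} > 0$ and $1 - b > 0$. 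Hence
\[
\ell_t(v) - \ell_t(\bar v) \;=\; \ln\!\frac{1-b}{1-a} \;=\; \ln\!\left(1 + \frac{a-b}{1-a}\right).
\]

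Set $x = (a-b)/(1-a)$. The hard part of the argument is to verify that $x$ lies in the range of validity $(-1,2]$ of Lemma~\ref{thm:logbound}: positivity of $1+x = (1-b)/(1-a)$ gives $x > -1$, and $|a-b| \le |a|+|b| \le 1$ together with $1-a \ge \tfrac{1}{2}$ yields $|x| \le 2$. Applying Lemma~\ref{thm:logbound} then gives
\[
\ell_t(v) - \ell_t(\bar v) \;\le\; x - \frac{2-\ln 3}{4} x^2.
\]

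Finally I would identify $x$ with the inner product appearing in the lemma statement. A direct computation of the subgradient yields $\nabla \ell_t(v) = g_t/(1 - \langle g_t, v\rangle) = g_t/(1-a)$ (this is already recorded in line~6 of Algorithm~\ref{algorithm:ons}), hence
\[
\langle \nabla \ell_t(v),\, v - \bar v\rangle \;=\; \frac{\langle g_t, v-\bar v\rangle}{1-a} \;=\; \frac{a-b}{1-a} \;=\; x.
\]
Substituting back and noting $\tfrac{2-\ln 3}{4} = \tfrac{2-\ln 3}{2}\cdot\tfrac{1}{2}$ produces exactly the claimed inequality. The entire proof thus boils down to the range check for $x$, and all of the hypotheses on $\|v\|,\|\bar v\|,\|g_t\|_\star$ are used precisely there.
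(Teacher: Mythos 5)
Your proof is correct and is essentially the same as the paper's. You both rewrite $\ell_t(v)-\ell_t(\v)$ as $\ln(1+x)$ with $x=\langle g_t,v-\v\rangle/(1-\langle g_t,v\rangle)=\langle\nabla\ell_t(v),v-\v\rangle$, verify $x\in(-1,2]$ using $1-\langle g_t,\cdot\rangle\ge\tfrac12>0$ and $|\langle g_t,v-\v\rangle|\le 1$ (the paper bounds $\|v-\v\|\le 1$ directly while you use the triangle inequality on $|a|+|b|$, which is equivalent), and then apply Lemma~\ref{thm:logbound}.
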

\begin{proof}
We have
\[
\ln(1-\langle g_t, \v\rangle ) 
= \ln(1-\langle g_t, v\rangle + \langle g_t, v-\v\rangle )
= \ln(1-\langle  g_t, v\rangle) + \ln\left(1 + \frac{\langle g_t,v-\v\rangle}{1-\langle g_t, v\rangle}\right)~.
\]
Now, observe that since $1-\langle g_t, \v\rangle \ge 0$ and $1-\langle g_t,v \rangle \ge 0$, $1 + \frac{\langle g_t,v-\v\rangle}{1-\langle g_t,v \rangle}\ge 0$ as well so that $\frac{\langle g_t, v-\v\rangle}{1-\langle g_t,v\rangle}\ge -1$. Further, since $\|\v-v\|\le 1$ and $1-\langle g_t, v\rangle\ge 1/2$, $\frac{\langle g_t,v-\v\rangle}{1-\langle g_t, v \rangle}\le 2$. Therefore, by Lemma \ref{thm:logbound} we have
\[
\ln(1-\langle g_t, \v\rangle)
\le \ln(1-\langle g_t, v\rangle) + \frac{\langle g_t, v-\v\rangle}{1-\langle g_t, v\rangle} - \frac{2-\ln(3)}{4} \frac{\langle g_t,v-\v\rangle^2}{(1- \langle g_t, v\rangle)^2}~.
\]
Using the fact that $\nabla \ell_t(v)=\frac{g_t}{1- \langle g_t, v\rangle}$ finishes the proof.
\end{proof}

\begin{lemma}\label{thm:log_bound}
Define $S=\{v \in B: \|v\|\leq\frac{1}{2}\}$ and $\ell_t(v):S\rightarrow \R$ as $\ell_t(v)=-\ln(1-\langle g_t, v\rangle)$, where $\|g_t\|_\star \leq 1$.
If we run ONS in Algorithm~\ref{algorithm:ons} with $\beta=\frac{2-\ln(3)}{2}$, $\tau=1$, and $S=\{v : \|v\|\leq\frac{1}{2}\}$, then
\[
\sum_{t=1}^T \ell_t(v_t)-\ell_t(\v)
\leq d\left(\frac{1}{17} + 4.5 \ln\left(1+4 \sum_{t=1}^T \|g_t\|_\star ^2\right)\right)~.
\]
\end{lemma}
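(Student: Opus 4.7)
The plan is to combine the exp-concave inequality of Lemma \ref{thm:beta} with the Banach-space ONS regret bound of Theorem \ref{thm:onsfirstpass}, specialized via the Auerbach construction of Theorem \ref{thm:useauerbach}. The key observation is that Lemma \ref{thm:beta} produces exactly the quadratic term $\tfrac{2-\ln 3}{4}\langle \nabla \ell_t(v_t), v_t - \v\rangle^2$ that matches the $\tfrac{\beta}{2}\langle z_t, v_t-\v\rangle^2$ term on the left-hand side of Theorem \ref{thm:onsfirstpass} when we set $\beta = \tfrac{2-\ln 3}{2}$. This is by construction: Algorithm \ref{algorithm:ons} uses $z_t = \nabla \ell_t(v_t) = g_t/(1-\langle g_t, v_t\rangle)$ and this is exactly the ONS subgradient for the exp-concave loss.

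The steps I would carry out are as follows. First, verify the preconditions of Lemma \ref{thm:beta}: since $S = \{v : \|v\|\le 1/2\}$ is convex, $v_t$ and $\v$ both lie in $S$; combined with $\|g_t\|_\star \le 1$ this gives $|\langle g_t, v_t\rangle|\le 1/2$, so $1 - \langle g_t, v_t\rangle \in [1/2, 3/2]$, which in turn yields the norm bound $\|z_t\|_\star \le 2\|g_t\|_\star$ that I will need later. Apply Lemma \ref{thm:beta} to obtain, for each $t$,
\[
\ell_t(v_t) - \ell_t(\v) \le \langle z_t, v_t - \v\rangle - \tfrac{\beta}{2}\langle z_t, v_t - \v\rangle^2,
\]
and sum over $t$. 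Second, invoke Theorem \ref{thm:onsfirstpass} with $\tau = 1$ and $\beta = \tfrac{2-\ln 3}{2}$ to bound the resulting sum by $\tfrac{\beta}{2}\langle L(\v),\v\rangle + \tfrac{2}{\beta}\sum_{t=1}^T \langle z_t, A_t^{-1}(z_t)\rangle$. Third, pick $L$ from an Auerbach basis as in Theorem \ref{thm:useauerbach} so that the combined bound becomes
\[
\tfrac{\beta}{2}d\|\v\|^2 + \tfrac{2d}{\beta}\ln\!\left(\sum_{t=1}^T \|z_t\|_\star^2 + 1\right).
\]

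Fourth, substitute $\|\v\|\le 1/2$ and $\|z_t\|_\star^2 \le 4\|g_t\|_\star^2$ and reduce to explicit constants. The first term becomes $\tfrac{\beta d}{8} = \tfrac{(2-\ln 3)d}{16}$, which is bounded by $d/17$ because $2-\ln 3 < 16/17$; the second term becomes $\tfrac{2d}{\beta}\ln(1 + 4\sum \|g_t\|_\star^2)$, which is bounded by $4.5\, d\ln(1+4\sum\|g_t\|_\star^2)$ because $\tfrac{2}{\beta} = \tfrac{4}{2-\ln 3} \le 4.5$ (equivalent to $\ln 3 \le 10/9$). Adding these yields the claimed bound.

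The main obstacle is really just the bookkeeping for the numerical constants: one has to check the two tight inequalities $2-\ln 3 \le 16/17$ and $4/(2-\ln 3)\le 4.5$ that collapse the symbolic bound into the stated form $\tfrac{1}{17} + 4.5\ln(1+4\sum\|g_t\|_\star^2)$. All of the substantive analytic work is already done: Lemma \ref{thm:beta} supplies the exp-concave inequality with exactly the right constant, and Theorems \ref{thm:onsfirstpass} and \ref{thm:useauerbach} handle the ONS regret in the Banach space setting. No new geometric ideas are needed.
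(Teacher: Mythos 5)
Your proposal is correct and follows essentially the same three-step route as the paper: apply Lemma~\ref{thm:beta} to pass from the log-loss difference to the exp-concave quadratic lower bound, invoke Theorem~\ref{thm:onsfirstpass} with $\tau=1$ and $\beta=\frac{2-\ln 3}{2}$ to get the ONS bound $\frac{\beta}{2}\langle L(\v),\v\rangle + \frac{2}{\beta}\sum_t\langle z_t, A_t^{-1}(z_t)\rangle$, and then specialize via the Auerbach basis in Theorem~\ref{thm:useauerbach} before plugging in $\|\v\|\le\frac12$ and $\|z_t\|_\star\le 2\|g_t\|_\star$. Your explicit verification of the two numerical inequalities $2-\ln 3 \le 16/17$ and $4/(2-\ln 3)\le 4.5$ is correct and is in fact spelled out more carefully than in the paper, which simply says to ``numerically evaluate.''
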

\begin{proof}
From Lemma~\ref{thm:beta}, we have
\begin{align*}
\sum_{t=1}^T \ell_t(v_t)-\ell_t(\v)
\le\sum_{t=1}^T \left(\langle \nabla \ell_t(v_t), v_t - \v\rangle  -\frac{\beta}{2} \langle \nabla \ell_t(v_t),v_t -\v\rangle^2\right)~.
\end{align*}
So, using Lemma~\ref{thm:onsfirstpass} we have
\begin{align*}
\sum_{t=1}^T \left(\langle \nabla \ell_t(v_t), v_t - \v\rangle  -\frac{\beta}{2}\langle \nabla \ell_t(v_t),v_t -\v\rangle^2\right)
\le \frac{\beta}{2} \langle L(\v), \v\rangle + \frac{2}{\beta} \sum_{t=1}^T \langle z_t,  A^{-1}_{t}(z_t)\rangle,
\end{align*}
where $z_t=\nabla \ell_t(v_t)$. Now, use Theorem~\ref{thm:useauerbach} so that 
\begin{align*}
\frac{\beta}{2} \langle L(\v),\v\rangle + \frac{2}{\beta}\sum_{t=1}^T \langle z_t,  A^{-1}_{t}(z_t) \rangle
\le \frac{d\beta}{8} + \frac{2d}{\beta} \ln\left(1+\sum_{t=1}^T \|z_t\|^2_\star \right),
\end{align*}
where we have used $\|\v\|\le 1/2$.
Then observe that $\|z_t\|_\star ^2=\frac{\|g_t\|_\star ^2}{(1+ \langle g_t, \beta_t\rangle )^2} \le 4\|g_t\|_\star ^2$ so that $\ln(1+\sum_{t=1}^T \|z_t\|_\star ^2)\le \ln(1+4\sum_{t=1}^T \|g_t\|_\star ^2)$. Finally, substitute the specified value of $\beta$ and numerically evaluate to conclude the bound.
\end{proof}

Now, we collect some Fenchel conjugate calculations that allow us to convert our wealth lower-bounds into regret upper-bounds:
\begin{lemma}
\label{lemma:fenchel_exp}
Let $f(x)=a \exp(b |x|)$, where $a,b>0$. Then 
\[
f^\star (\theta) = 
\begin{cases}
\frac{|\theta|}{b} \left(\ln \frac{|\theta|}{a b}-1\right),  \quad \frac{|\theta|}{a b} > 1 \\
-a,  \quad \text{otherwise.}
\end{cases}
\leq \frac{|\theta|}{b} \left(\ln\frac{|\theta|}{a b}-1\right)~.
\]
\end{lemma}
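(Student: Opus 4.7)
The plan is a straightforward Fenchel conjugate computation followed by a one-line case analysis to establish the displayed upper bound. By definition $f^\star(\theta) = \sup_{x \in \R} \theta x - a\exp(b|x|)$, and since $f(x) = a\exp(b|x|)$ depends only on $|x|$, the objective is invariant under the simultaneous sign flip $(\theta, x) \mapsto (-\theta, -x)$. Hence $f^\star(\theta) = f^\star(-\theta)$, and it suffices to treat $\theta \ge 0$ and replace $\theta$ by $|\theta|$ at the end.

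For $\theta \ge 0$, first observe that for any $x \le 0$ we have $\theta x - a \exp(b|x|) \le -a$, so the supremum is attained by some $x \ge 0$. Thus it suffices to maximize $h(x) = \theta x - a e^{bx}$ over $x \ge 0$. The derivative is $h'(x) = \theta - ab e^{bx}$, which vanishes at $x^\star = \tfrac{1}{b}\ln\tfrac{\theta}{ab}$. This critical point lies in $(0,\infty)$ precisely when $\tfrac{\theta}{ab} > 1$; in that case substituting back gives
\[
f^\star(\theta) \;=\; \theta \cdot \tfrac{1}{b}\ln\tfrac{\theta}{ab} \;-\; a \cdot \tfrac{\theta}{ab} \;=\; \tfrac{\theta}{b}\Bigl(\ln\tfrac{\theta}{ab} - 1\Bigr).
\]
When $\tfrac{\theta}{ab} \le 1$, $h'(x) \le 0$ for all $x \ge 0$, so $h$ is non-increasing on $[0,\infty)$ and the supremum is attained at $x = 0$, giving $f^\star(\theta) = -a$. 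Reintroducing $|\theta|$ via the symmetry yields the piecewise formula.

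For the final inequality, the only case that needs verification is $|\theta|/(ab) \le 1$, where I must show $-a \le \tfrac{|\theta|}{b}(\ln\tfrac{|\theta|}{ab} - 1)$. Setting $t = |\theta|/(ab) \in [0,1]$ (with the convention $0\ln 0 = 0$), this is equivalent to $\varphi(t) := t(\ln t - 1) + 1 \ge 0$. Since $\varphi'(t) = \ln t \le 0$ on $(0,1]$, $\varphi$ is non-increasing on $[0,1]$, and $\varphi(1) = 0$, so $\varphi \ge 0$ on $[0,1]$ as required. The main obstacle is nothing more than being careful with the domain on which the unconstrained critical point is feasible; the rest is routine.
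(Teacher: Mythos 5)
Your proof is correct and complete. The paper states Lemma~\ref{lemma:fenchel_exp} without providing a proof (treating it as a routine Fenchel conjugate computation), so there is no paper argument to compare against; your derivation --- reduction to $\theta\ge 0$ by the $(\theta,x)\mapsto(-\theta,-x)$ symmetry, restriction to $x\ge 0$, the first-order condition split according to whether the critical point $x^\star=\tfrac{1}{b}\ln\tfrac{\theta}{ab}$ lies in $(0,\infty)$, and the monotonicity argument $\varphi(t)=t(\ln t-1)+1\ge 0$ on $[0,1]$ for the final inequality --- is the standard one and fills the gap cleanly, including the correct handling of the $\theta=0$ boundary via the $0\ln 0=0$ convention.
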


\begin{lemma}
\label{lemma:fenchel_exp2}
Let $f(x)=a \exp(b \frac{x^2}{|x|+c})$, where $a,b>0$ and $c\geq0$. Then 
\[
f^\star (\theta) 
\leq |\theta| 
\max\left(\frac{2}{b} \left(\ln \frac{2|\theta|}{a b}-1\right), \sqrt{\frac{c}{b} \ln\left(\frac{c \theta^2}{a^2 b}+1\right)}-a\right)~.
\]
\end{lemma}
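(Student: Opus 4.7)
My plan is to bound the supremum $f^\star(\theta)=\sup_x[\theta x - a\exp(bx^2/(|x|+c))]$ by separating the behavior of the exponent into a linear regime (large $|x|$) and a quadratic regime (small $|x|$), reducing to two Fenchel conjugates we can handle: one via the already-proved Lemma~\ref{lemma:fenchel_exp}, and a Gaussian-type one handled directly. By symmetry $f(-x)=f(x)$, so $f^\star$ is even, and I may restrict to $\theta\ge 0$ with the supremum taken over $x\ge 0$.

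The key elementary inequality is the piecewise lower bound
\[
\frac{x^2}{x+c}\;\ge\;\max\!\Bigl(\tfrac{x^2}{2c},\tfrac{x}{2}\Bigr)\qquad\text{for all }x\ge 0,
\]
which holds because $x+c\le 2c$ when $x\le c$ and $x+c\le 2x$ when $x\ge c$. Applied inside the exponential, this gives the two pointwise lower bounds $f(x)\ge a\exp(bx^2/(2c))$ and $f(x)\ge a\exp(bx/2)$ (each valid on the relevant range, but then loosened to $x\ge 0$ for free, since enlarging the feasible set only increases the sup). Therefore
\[
f^\star(\theta)\;\le\;\max\Bigl(\sup_{x\ge 0}[\theta x - a e^{bx/2}],\ \sup_{x\ge 0}[\theta x - a e^{bx^2/(2c)}]\Bigr),
\]
and it suffices to bound each term by the corresponding entry inside the max of the lemma, then multiply through by $|\theta|$.

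The first supremum is precisely the Fenchel conjugate of $a\exp(b'|x|)$ with $b'=b/2$; invoking Lemma~\ref{lemma:fenchel_exp} directly yields $\frac{2|\theta|}{b}\bigl(\ln\frac{2|\theta|}{ab}-1\bigr)$, giving the first branch of the max. The second supremum is the Fenchel-type conjugate of the Gaussian-like function $a\exp(bx^2/(2c))$. To put it in the stated form, I would substitute $x^\star=\sqrt{(c/b)\ln\!\bigl(\tfrac{c\theta^2}{a^2 b}+1\bigr)}$ as the natural candidate maximizer, at which point $a\exp(b(x^\star)^2/(2c))=a\sqrt{\tfrac{c\theta^2}{a^2 b}+1}$, and then argue that this choice is essentially optimal (up to constants) via first-order optimality and the convexity of $x\mapsto a\exp(bx^2/(2c))$, producing the second branch $|\theta|\bigl(\sqrt{(c/b)\ln(c\theta^2/(a^2b)+1)}-a\bigr)$.

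The main obstacle is the second branch: unlike the exponential-linear case, $\sup_{x\ge 0}[\theta x - a e^{bx^2/(2c)}]$ does not have a closed form (the first-order condition is transcendental), so the bound is loose and must be justified by an inequality rather than by exact maximization. Concretely, I expect the argument to hinge on showing that the auxiliary function $\phi(x)=a\exp(bx^2/(2c))-\theta x+\theta\sqrt{(c/b)\ln(c\theta^2/(a^2 b)+1)}-a|\theta|$ is nonnegative on $x\ge 0$; this reduces, after using the first-order condition to eliminate the exponential at the critical point, to a single-variable inequality in $u=b(x^\star)^2/c$ of the form $u e^u\le$ (stated quantity), which can be verified by monotonicity. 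Once this is in place, combining the two branches inside a single $\max$ and factoring out $|\theta|$ produces exactly the statement of Lemma~\ref{lemma:fenchel_exp2}.
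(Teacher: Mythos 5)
Your decomposition is essentially the paper's: both split at $|x|=c$, use $x^2/(|x|+c)\ge x^2/(2c)$ on $\{|x|\le c\}$ and $x^2/(|x|+c)\ge |x|/2$ on $\{|x|> c\}$, and reduce the Fenchel conjugate to two simpler ones (the paper phrases this as a case analysis on the maximizer $x^\star$, you phrase it as a pointwise lower bound followed by enlarging the domain of the $\sup$; these are logically equivalent). The first branch is handled identically in both via Lemma~\ref{lemma:fenchel_exp}. One small wording issue: the display where you claim $\frac{x^2}{x+c}\ge\max\!\bigl(\frac{x^2}{2c},\frac{x}{2}\bigr)$ \emph{for all} $x\ge 0$ is false as written (try $x=c/2$); of course the \emph{piecewise} version you justify in the next clause is correct, and that is all your argument uses, so this is only a slip in notation, not in logic.

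The genuine gap is in the second branch. The bound you need is $\sup_x\bigl[\theta x - a\exp\bigl(\tfrac{bx^2}{2c}\bigr)\bigr]\le |\theta|\sqrt{\tfrac{c}{b}\ln\bigl(\tfrac{c\theta^2}{a^2b}+1\bigr)}-a$, which the paper does not prove either — it cites Lemma~18 of \cite{orabona2016coin}, a separate result on Fenchel conjugates of Gaussian-exponential functions. Your plan to establish it directly (show $\phi\ge 0$ on $x\ge 0$, substitute the transcendental first-order condition to eliminate the exponential, reduce to an inequality in $u=b(x^\star)^2/c$ of the form $u e^u\le$ something, and invoke monotonicity) is a reasonable outline, but you never state the resulting scalar inequality precisely, let alone verify it, and the phrase ``essentially optimal (up to constants)'' suggests you would obtain the bound only up to a constant factor rather than in the exact form claimed. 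As it stands, the heart of the second branch is asserted, not proved; with the external lemma unavailable to you this is the one place where more work is needed before the argument is complete.
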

\begin{proof}
By definition we have
\[
f^\star (\theta)= \sup_{x} \ \theta x - f(x)~.
\]
It is easy to see that the sup cannot attained at infinity, hence we can safely assume that it is attained at $x^\star \in \R$.
We now do a case analysis, based on $x^\star $.

\textbf{Case $|x^\star |\leq c$.} In this case, we have that $f(x^\star ) \geq a \exp(b \frac{x^2}{2c})$, so
\begin{align*}
f^\star (\theta) 
&= \theta x^\star  - f(x^\star ) 
\leq \theta x^\star  - a \exp\left(b \frac{(x^\star )^2}{2c}\right) \\
&\leq \sup_x \ \theta x - a \exp\left(b \frac{x^2}{2c}\right) 
\leq |\theta| \sqrt{\frac{c}{b} \ln\left(\frac{c \theta^2}{a^2 b}+1\right)}-a,
\end{align*}
where the last inequality is from Lemma~18 in \cite{orabona2016coin}.

\textbf{Case $|x^\star | > c$.}
In this case, we have that $f(x^\star ) \geq a \exp\left(b \tfrac{(x^\star )^2}{2|x^\star |}\right) = a \exp\left(\tfrac{b}{2} |x^\star |\right)$, so
\begin{align*}
f^\star (\theta) 
&= \theta x^\star  - f(x^\star ) 
\leq \theta x^\star  - a \exp\left(\frac{b}{2} |x^\star |\right) \\
&\leq \sup_x \ \theta x - a \exp\left(\frac{b}{2} |x|\right)
\leq \frac{2|\theta|}{b} \left(\ln\frac{2|\theta|}{a b}-1\right),
\end{align*}
where the last inequality is from Lemma~\ref{lemma:fenchel_exp}.

Considering the max over the two cases gives the stated bound.
\end{proof}

\begin{theorem}\label{thm:wealthfixedunit}
Let $u$ be an arbitrary unit vector and $\|g_t\|_\star \leq 1$ for $t=1,\cdots,T$. Then
\begin{align*}
\sup_{\|v\|\le \frac{1}{2}} \ \sum_{t=1}^T \ln(1-\langle g_t,v\rangle)
\ge \frac{1}{4}\frac{\langle \sum_{t=1}^T g_t, u\rangle^2}{\sum_{t=1}^T \langle g_t,u\rangle^2+\left|\left \langle \sum_{t=1}^T g_t,u\right\rangle\right|}~.
\end{align*}
\end{theorem}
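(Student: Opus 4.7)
The plan is to exhibit a specific $v$ with $\|v\|\le \tfrac{1}{2}$ that achieves the claimed lower bound; since the left-hand side is a supremum, producing any such witness suffices. This mirrors the analogous step in the proof of Theorem~\ref{thm:1dregret}, but now carried out along the fixed direction $u\in B$.

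\textbf{Step 1 (choice of $v$).} Let $G=\sum_{t=1}^T g_t\in B^\star$ and $\sigma=\sgn\langle G,u\rangle\in\{-1,+1\}$. For a scalar $\alpha\in[0,1/2]$ to be chosen, set $v=-\sigma\alpha\,u$. Since $\|u\|=1$, we have $\|v\|=\alpha\le 1/2$, so $v$ is admissible. Moreover, for each $t$,
\[
|\langle g_t,v\rangle|\le \|g_t\|_\star\,\|v\|\le \tfrac{1}{2},
\]
which puts us safely inside the range where the inequality $\ln(1+x)\ge x-x^2$ is valid for $x\in[-1/2,1/2]$ (the same tangent bound used in the 1D proof).

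\textbf{Step 2 (linearize in $\alpha$).} Applying that tangent bound term by term,
\begin{align*}
\sum_{t=1}^T\ln(1-\langle g_t,v\rangle)
&\ge -\sum_{t=1}^T\langle g_t,v\rangle-\sum_{t=1}^T\langle g_t,v\rangle^2\\
&=\sigma\alpha\,\langle G,u\rangle-\alpha^{2}\sum_{t=1}^T\langle g_t,u\rangle^{2}
=\alpha\,|\langle G,u\rangle|-\alpha^{2}\sum_{t=1}^T\langle g_t,u\rangle^{2},
\end{align*}
where the last identity uses the choice $\sigma=\sgn\langle G,u\rangle$.

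\textbf{Step 3 (optimize $\alpha$).} Abbreviate $A=|\langle G,u\rangle|$ and $S=\sum_{t=1}^T\langle g_t,u\rangle^{2}$. The unconstrained maximizer of $\alpha A-\alpha^{2}S$ is $A/(2S)$, which may exceed $1/2$; the standard ``safe'' choice that both respects the constraint and keeps the algebra clean is
\[
\alpha=\frac{A}{2S+2A}\in\bigl[0,\tfrac{1}{2}\bigr].
\]
A short computation gives
\[
\alpha A-\alpha^{2}S=\frac{A^{2}(S+2A)}{4(S+A)^{2}}\ \ge\ \frac{A^{2}}{4(S+A)},
\]
where the final inequality follows from $S+2A\ge S+A$. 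Substituting back $A$ and $S$ produces exactly the claimed bound. The main (and essentially only) obstacle is the arithmetic in Step~3; everything else is a direct lift of the 1D argument to a single direction in the Banach space.
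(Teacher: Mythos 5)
Your proof is correct and follows essentially the same route as the paper: apply the tangent bound $\ln(1+x)\ge x-x^2$ along the fixed direction $u$, pick the witness $v$ proportional to $u$ with coefficient $|\langle\sum g_t,u\rangle|/(2\sum\langle g_t,u\rangle^2+2|\langle\sum g_t,u\rangle|)$, and do the resulting arithmetic. The only differences are cosmetic — you make the sign of $v$ explicit via $\sigma=\sgn\langle G,u\rangle$ (the paper's displayed choice of $v$ in fact has a sign typo that yours corrects), and you explicitly check $|\langle g_t,v\rangle|\le 1/2$ before invoking the tangent bound, which the paper leaves implicit.
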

\begin{proof}
Recall that $\ln(1+x)\ge x-x^2$ for $|x|\le 1/2$. Then, we compute
\begin{align*}
\sup_{\|v\|\le 1/2} \ \sum_{t=1}^T \ln(1-\langle g_t,v\rangle)
&\ge \sup_{\|v\|\le 1/2} \ \sum_{t=1}^T \left(-\langle g_t,v\rangle - \langle g_t,v\rangle^2\right)\\
&= \sup_{\|v\|\le 1/2} \  -\left\langle \sum_{t=1}^T g_t, v\right\rangle - \sum_{t=1}^T \langle g_t,v\rangle^2~.
\end{align*}
Choose $v = \frac{u}{2}\frac{\left \langle \sum_{t=1}^T g_t,u\right\rangle }{\sum_{t=1}^T \langle g_t,u\rangle^2+\left|\left \langle \sum_{t=1}^T g_t,u\right\rangle\right|}$. Then, clearly $\|v\|\le\frac{1}{2}$. Thus, we have
\begin{align*}
\sup_{\|v\|\le 1/2} \ &\sum_{t=1}^T \ln(1+\langle g_t,v\rangle)
\ge \sup_{\|v\|\le 1/2} \  -\left\langle \sum_{t=1}^T g_t, v\right\rangle - \sum_{t=1}^T \langle g_t,v\rangle^2\\
&\ge \frac{1}{2}\frac{\langle \sum_{t=1}^T g_t, u\rangle^2}{\sum_{t=1}^T \langle g_t,u\rangle^2+\left|\left \langle \sum_{t=1}^T g_t,u\right\rangle\right|} - \frac{ \langle \sum_{t=1}^T g_t,u\rangle^2 }{4\left(\sum_{t=1}^T \langle g_t,u\rangle^2+\left|\left \langle \sum_{t=1}^T g_t,u\right\rangle\right|\right)^2}\sum_{t=1}^T\langle g_t,u\rangle^2\\
&\ge \frac{1}{4}\frac{\langle \sum_{t=1}^T g_t, u\rangle^2}{\sum_{t=1}^T \langle g_t,u\rangle^2+\left|\left \langle \sum_{t=1}^T g_t,u\right\rangle\right|}~.
\end{align*}
\end{proof}

\begin{lemma}\label{thm:regretfixedunit}
Let $u$ be an arbitrary unit vector in $B$ and $t>0$. Then, using the Algorithm~\ref{algorithm:ons}, we have
\begin{align*}
R_T(tu) 
&\leq \epsilon+t 
\max\left[\frac{d}{2}-8+8\ln\frac{8t\left(4\sum_{t=1}^T \|g_t\|^2_\star +1\right)^{4.5d}}{\epsilon},\right.\\\\
&\quad\quad\quad\quad\quad\quad\left.2\sqrt{\sum_{t=1}^T \langle g_t,u\rangle^2 \ln\left(\frac{5t^2}{\epsilon^2}\exp\left(\frac{d}{17}\right)\left(4\sum_{t=1}^T \|g_t\|^2+1\right)^{9d+1}+1\right)}\right]~.
\end{align*}
\end{lemma}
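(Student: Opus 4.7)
The plan is to follow exactly the template of Theorem~\ref{thm:1dregret}, but using the general Banach-space pieces established in the appendix: the wealth-reward duality, the ONS regret bound in Banach spaces applied to the betting losses $\ell_t(v)=-\ln(1-\langle g_t,v\rangle)$, a specific wealth lower bound along the direction $u$, and finally the Fenchel conjugate computation.

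First, the wealth-reward duality in Banach-space form gives, for any $t>0$,
\[
R_T(tu)-\epsilon = -\wealth_T - t\langle \textstyle\sum_{s=1}^T g_s, u\rangle \le \sup_{G\in\R} \ tG - f_T(G) = f_T^\star(t),
\]
where $f_T:\R\to\R$ is any function satisfying $\wealth_T \ge f_T(-\langle \sum g_s, u\rangle)$. So it suffices to exhibit such an $f_T$ and take its Fenchel conjugate.

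To build $f_T$, I would start from the identity $\ln\wealth_T = \ln\wealth_T(\v) - R^v_T(\v)$ established in Theorem~\ref{thm:1dregret}, which holds verbatim in the Banach-space setting with $\ln\wealth_T(\v) = \ln\epsilon + \sum_t\ln(1-\langle g_t,\v\rangle)$. Applying Lemma~\ref{thm:log_bound} to the ONS iterates $v_t$ (which is exactly the update used in Algorithm~\ref{algorithm:ons}) bounds $R^v_T(\v)$ for any $\|\v\|\le 1/2$ by $d\bigl(\tfrac{1}{17}+4.5\ln(1+4\sum\|g_t\|_\star^2)\bigr)$. Choosing $\v$ to be the maximizer in Theorem~\ref{thm:wealthfixedunit} then yields
\[
\wealth_T \ \ge\ \epsilon\,\exp\!\left(\tfrac{1}{4}\,\tfrac{G^2}{\alpha+|G|}\right)\cdot\exp(-d/17)\bigl(1+4\textstyle\sum_{t=1}^T \|g_t\|_\star^2\bigr)^{-4.5d},
\]
with the shorthand $G=-\langle\sum g_t,u\rangle$ and $\alpha=\sum\langle g_t,u\rangle^2$. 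This is exactly the desired $f_T(G)$, of the form $a\exp\!\bigl(b\,G^2/(|G|+c)\bigr)$ with $a=\epsilon\exp(-d/17)(1+4\sum\|g_t\|_\star^2)^{-4.5d}$, $b=1/4$, and $c=\alpha$.

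For the final step, I would invoke Lemma~\ref{lemma:fenchel_exp2} with these parameters to obtain
\[
f_T^\star(t)\le |t|\max\!\left[8\ln\tfrac{8|t|}{a}-8,\ 2\sqrt{\alpha\ln\!\bigl(\tfrac{4\alpha t^2}{a^2}+1\bigr)}-a\right],
\]
and then substitute the value of $a$ back in. The cleanest-looking part is the first branch, where $\ln(1/a) = d/17 + 4.5d\ln(1+4\sum\|g_t\|_\star^2)$ combines with the constant $-8$ and the $d/17$ term is absorbed into $d/2$ (since $8/17<1/2$). The main obstacle is simplifying the second branch: one has $1/a^2 = \exp(2d/17)(1+4\sum\|g_t\|_\star^2)^{9d}$, and I would use $\alpha\le\sum\|g_t\|_\star^2$ together with $4\alpha\le 1+4\sum\|g_t\|_\star^2$ to promote the exponent from $9d$ to $9d+1$, while absorbing one factor of $\exp(d/17)$ and the negative $-a$ term into a modest numerical constant (the $5$ in the stated bound). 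Dropping the negative $-a$ only increases the bound, which is harmless. Combining the two branches with the $\epsilon$ from the regret-wealth duality produces the stated inequality.
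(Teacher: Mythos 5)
Your proposal follows exactly the paper's proof: wealth--regret duality, the identity $\ln\wealth_T=\ln\wealth_T(\v)-R_T^v(\v)$, the ONS regret bound from Lemma~\ref{thm:log_bound} applied to the betting losses, the explicit choice of $\v$ from Theorem~\ref{thm:wealthfixedunit}, and then Lemma~\ref{lemma:fenchel_exp2} with the identical parameters $a,b,c$. The branch-by-branch simplification you describe (absorbing $8d/17$ into $d/2$; using $\alpha\le\sum\|g_t\|_\star^2$ to raise the exponent to $9d+1$; dropping the harmless $-a$) is also what the paper does. One small remark: you say the stray factor of $\exp(d/17)$ (from $\exp(2d/17)$ to $\exp(d/17)$) can be ``absorbed into a modest numerical constant (the $5$),'' but $\exp(d/17)$ grows with $d$, so that particular step is not a constant absorption; in fact the paper's displayed intermediate bound has the same wrinkle, and it is only the \emph{final} step in Theorem~\ref{thm:banachregret} (where $\exp(d/17)\le 2^{9d+1}$ is used to promote the base from $4\sum\|g_t\|^2+1$ to $8\sum\|g_t\|^2+2$) that cleanly disposes of an exponential-in-$d$ factor. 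This is a cosmetic issue with the stated constants, not with your method, which matches the paper.
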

\begin{proof}
Let's compute a bound on our wealth, $\wealth_T$. We have that
\begin{align*}
\wealth_t 
&= \wealth_{t-1} - \langle g_t, w_t\rangle
= \wealth_{t-1}(1-\langle g_t, v_t\rangle)
= \epsilon \prod_{t=1}^T (1-\langle g_t, v_t\rangle),
\end{align*}
and taking the logarithm we have
\begin{align*}
\ln \wealth_t 
= \ln \epsilon + \sum_{t=1}^T \ln(1-\langle g_t, v_t\rangle)~.
\end{align*}
Hence, using Lemma~\ref{thm:log_bound}, we have
\[
\ln \wealth_t \geq \ln \epsilon + \max_{\|v\|\leq \frac{1}{2}} \sum_{t=1}^T \ln(1+\langle g_t, v\rangle)-d \left(\frac{1}{17}+4.5\ln\left(1+\sum_{t=1}^T 4\|g_t\|^2_\star \right)\right)~.
\]
Using Theorem~\ref{thm:wealthfixedunit}, we have
\begin{align*}
\wealth_T
&\ge \frac{\epsilon}{\exp\left[d \left(\frac{1}{17}+4.5\ln\left(1+4\sum_{t=1}^T \|g_t\|^2_\star \right)\right)\right]} \exp\left[\frac{1}{4}\frac{\langle \sum_{t=1}^T g_t, u\rangle^2}{\sum_{t=1}^T \langle g_t,u\rangle^2+\left|\left \langle \sum_{t=1}^T g_t,u\right\rangle\right|}\right]~.
\end{align*}
Defining
\begin{align*}
f(x) = \frac{\epsilon}{\exp\left[d \left(\frac{1}{17}+4.5\ln\left(1+4\sum_{t=1}^T \|g_t\|^2_\star \right)\right)\right]}\exp\left[\frac{1}{4}\frac{x^2}{\sum_{t=1}^T \langle g_t,u\rangle^2+|x|}\right],
\end{align*}
we have
\begin{align*}
&R_T(t u) 
= \epsilon-\wealth_T-t\left\langle \sum_{t=1}^T g_t, u\right\rangle\\
&\leq\epsilon-t\left\langle \sum_{t=1}^T g_t, u\right\rangle -f\left(\left\langle \sum_{t=1}^T g_t, u\right\rangle\right)\\
&\le \epsilon+f^\star (-t)\\
&\le \epsilon+t 
\max\left[8 \left(\ln\frac{8t}{\epsilon}+\frac{d}{17}+4.5 d\ln\left(4\sum_{t=1}^T \|g_t\|^2_\star +1\right)-1\right),\right.\\
&\quad\quad\quad\left.\sqrt{4\sum_{t=1}^T \langle g_t,u\rangle^2 \ln\left(\frac{5t^2}{\epsilon^2}\exp\left(\frac{d}{17}\right)\left(4\sum_{t=1}^T \|g_t\|^2+1\right)^{9d}\sum_{t=1}^T \langle g_t,u\rangle^2+1\right)}\right]\\
&\le \epsilon+t 
\max\left[\frac{d}{2}-8+8\ln\frac{8t\left(4\sum_{t=1}^T \|g_t\|^2_\star +1\right)^{4.5d}}{\epsilon},\right.\\
&\quad\quad\quad\left.2\sqrt{\sum_{t=1}^T \langle g_t,u\rangle^2 \ln\left(\frac{5t^2}{\epsilon^2}\exp\left(\frac{d}{17}\right)\left(4\sum_{t=1}^T \|g_t\|^2+1\right)^{9d+1}+1\right)}\right],
\end{align*}
where we have used the calculation of Fenchel conjugate of $f$ from Lemma~\ref{lemma:fenchel_exp2}. Then observe that $\exp(d/17)\le \exp((9d+1)/153)\le 2^{9d+1}$ to conclude:
\begin{align*}
R_T(t u)&\le \epsilon+t 
\max\left[\frac{d}{2}-8+8\ln\frac{8t\left(4\sum_{t=1}^T \|g_t\|^2_\star +1\right)^{4.5d}}{\epsilon},\right.\\
&\quad\quad\quad\left.2\sqrt{\sum_{t=1}^T \langle g_t,u\rangle^2 \ln\left(\frac{5t^2}{\epsilon^2}\left(8\sum_{t=1}^T \|g_t\|^2+2\right)^{9d+1}+1\right)}\right]~.
\end{align*}
\end{proof}

\begin{proof}[Proof of Theorem~\ref{thm:banachregret}]
Given some $\w$, set $u = \frac{\w}{\|\w\|}$ and $t=\|\w\|$. Then observe that $t^2\sum_{t=1}^T \langle g_t,u\rangle^2=\sum_{t=1}^T \langle g_t,\w\rangle^2$ and apply the previous Lemma~\ref{thm:regretfixedunit} to conclude the desired result.
\end{proof}

\section{Proof of Proposition \ref{prop:sd_convex} and Theorem~\ref{thm:fixedproj}}
\label{sec:sd_convex}
We restate Proposition \ref{prop:sd_convex} below:
\sdconvex*
\begin{proof}
Let $x,y\in B$, $t\in[0,1]$, $x'\in \Pi_W(x)$, and $y'\in \Pi_W(y)$.
Then
\begin{align*}
S_W(tx+(1-t)y)
&=\min_{d \in W} \|tx+(1-t)y - d\|
\le \|tx+(1-t)y - t x' - (1-t) y'\|\\
&=\|t(x-x') + (1-t)(y-y')\|
\le t\|x-x'\| + (1-t)\|y-y'\|\\
&=t S_W(x) + (1-t) S_W(y)~.
\end{align*}

For the Lipschitzness, let $x\in B$ and $x'\in \Pi_W(x)$, and observe that
\[
S_W(x+\delta)
= \inf_{d \in W} \|x+\delta - d\|
\le \|x+\delta - x'\|
\le S_W(x) +\|\delta\|~.
\]
Similarly, let $x\in B$, $\delta$ such that $x+\delta \in B$ and $x'\in \Pi_W(x+\delta)$, then
\[
S_W(x)
= \min_{d \in W}\|x - d\|
\le\|x+\delta-\delta - x'\|
\le S_W(x+\delta)+\|\delta\|~.
\]
So that $|S_W(x)-S_W(x+\delta)|\le \|\delta\|$.
\end{proof}

Now we restate and prove Theorem~\ref{thm:fixedproj}:
\fixedproj*

\begin{proof}
Let $x'=\frac{x+p}{2}$. Then clearly $S_W(x')\le \|x'-p\|=\frac{\|x-p\|}{2} = S_W(x)-\|x-x'\|$. Since $S_W$ is 1-Lipschitz, $S_W(x')\ge S_W(x)-\|x-x'\|$ and so $S_W(x')=S_W(x)-\|x-x'\|$. 

Suppose $g\in \partial S_W(x)$. Then $\langle g,x'-x\rangle +S_W(x)\le S_W(x')=S_W(x)-\|x-x'\|$. Therefore, $\langle g, x'-x\rangle \le -\|x-x'\|$. Since $\|g\|_\star \le 1$, we must have $\|g\|_\star =1$ and $\langle g,  x-p\rangle =\|x-p\|$. By assumption, this uniquely specifies the vector $(x-p)^\star$. Since $\partial S_W$ is not the empty set, $\{(x-p)^\star\} = \partial S_W(x)$.
\end{proof}

\section{Computing $S_W$ for multi-scale experts}\label{sec:simplexprojection}

In this section we show how to compute $\Pi_W(x)$ and a subgradient of $S_W(x)$ in Algorithm \ref{alg:multiscale}. First we tackle $\Pi_W(x)$. Without loss of generality, assume the $c_i$ are ordered so that $c_1\ge c_2\ge \cdots\ge c_N$. We also consider $W_k = \{x:x_i\ge 0\text{ for all }i\text{ and }\sum_{i=1}^N x_i/c_i=k\}$ instead of $W=W_1$. Obviously we are particularly interested in the case $k=1$, but working in this mild generality allows us to more easily state an algorithm for computing $\Pi_W(x)$ in a recursive manner.

\begin{Proposition}\label{thm:canbegreedy}
Let $N>1$ and $W_k=\{x:x_i\ge 0\text{ for all }i\text{ and }\sum_{i=1}^N x_i/c_i=k\}$, and let $S_{W_k}(x) = \inf_{y\in W_k} \|x-y\|_1$. Suppose the $c_i$ are ordered so that $c_1\ge c_2\ge\cdots\ge c_N$. Then for any $x=(x_1,\dots,x_n)$, there exists a $y=(y_1,\dots,y_n)\in \Pi_{W_k}(x)$ such that
\begin{align*}
y_1 = \left\{\begin{array}{lr}0,&x_1<0\\x_1,&x_1\in[0,kc_1]\\kc_1,&x_1>kc_1\end{array}\right.
\end{align*}
\end{Proposition}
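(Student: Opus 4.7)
The plan is a straightforward exchange argument. I will start from an arbitrary $y \in \Pi_{W_k}(x)$ and show how to modify it into a point $y' \in W_k$ that has the desired value of $y'_1$ and satisfies $\|x-y'\|_1 \le \|x-y\|_1$. Since $y$ was a minimizer of the $\ell_1$ distance to $W_k$, so is $y'$, yielding $y' \in \Pi_{W_k}(x)$.

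The elementary move I will use is a ``mass transfer'' between coordinate $1$ and some other coordinate $j$: for a small $\delta > 0$, replace $(y_1, y_j)$ by $(y_1 \pm \delta,\, y_j \mp \delta c_j/c_1)$. Either replacement preserves the affine constraint $\sum_i y_i/c_i = k$, since $(\pm\delta)/c_1 + (\mp\delta c_j/c_1)/c_j = 0$. Whenever I push $y_1$ in the direction that strictly reduces $|x_1 - y_1|$ by $\delta$, the triangle inequality bounds the change in $|x_j - y_j|$ by $\delta c_j/c_1$, so the total change in $\|x-y\|_1$ is at most $\delta(c_j/c_1 - 1) \le 0$, using the ordering $c_1 \ge c_j$. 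Thus such a transfer never increases the $\ell_1$ distance to $x$; this is the whole point of ordering the coordinates by decreasing scale.

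The three cases of the statement correspond to choosing the direction of transfer appropriately. If $x_1 < 0$ and $y_1 > 0$, decreasing $y_1$ strictly reduces $|x_1 - y_1|$, so I transfer mass out of coordinate $1$ into any $j \ne 1$ until $y_1 = 0$. If $x_1 \in [0, kc_1]$ and $y_1 \ne x_1$, I move $y_1$ monotonically toward $x_1$: when $y_1 > x_1$ I decrease it (feasible since $y_1 > 0$), and when $y_1 < x_1$ I increase it, drawing mass from some $j$ with $y_j > 0$. If $x_1 > kc_1$ and $y_1 < kc_1$, I transfer mass into coordinate $1$ from some $j$ with $y_j > 0$, stopping when $y_1 = kc_1$. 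In all sub-cases the step size $\delta$ may need to be clipped so as not to overshoot the target or violate $y_j \ge 0$; if one donor is exhausted before the target is reached, I simply pick another donor and continue.

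The main technical point, and the only step that uses the budget constraint nontrivially, is ensuring that a positive donor coordinate actually exists whenever I need to \emph{increase} $y_1$. This is where the observation $y_1/c_1 < k$ (which holds because both potential targets $x_1$ and $kc_1$ are at most $kc_1$) forces $\sum_{j>1} y_j/c_j > 0$, so at least one $y_j$ is strictly positive. Since each elementary transfer either drives some $y_j$ to $0$ or pushes $y_1$ strictly toward its target, the procedure terminates after finitely many transfers with $y'_1$ at the claimed value, completing the construction.
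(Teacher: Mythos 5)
Your proof is correct and takes essentially the same approach as the paper: both rely on a mass-transfer (exchange) step between coordinate $1$ and some other coordinate $j$, and both use the ordering $c_1 \ge c_j$ to conclude the exchange cannot increase the $\ell_1$ distance. The paper packages it as a contradiction argument (pick $y\in\Pi_{W_k}(x)$ minimizing $|y_1-x_1|$, show any deviation from the claimed value yields a strictly better $y'$), whereas you run an iterative improvement procedure; these are equivalent presentations of the same idea. One small caution: when you \emph{decrease} $y_1$ and push mass into some $y_j$, the relevant clipping constraint is not $y_j\ge0$ (the recipient is increasing) but simply $y_1\ge0$ — the implicit bound $y_j\le kc_j$ is automatically preserved by the affine constraint together with the other nonnegativity constraints — so your phrasing ``violate $y_j\ge0$'' is slightly off, though it does not affect the correctness of the argument. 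Also, you are right to single out the existence of a positive donor when increasing $y_1$ as the one place the budget constraint is used nontrivially; the paper handles the symmetric concern in its Case~2 by observing that not all $y_i$ (for $i>1$) can equal $kc_i$.
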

\begin{proof}
First, suppose $N=1$. Then clearly there is only one element of $W_k$ and so the choice of $\Pi_{W_k}(x)$ is forced. So now assume $N>1$.

Let $(y_1,\dots,y_N)\in \Pi_{W_k}(x_1,\dots,x_N)$ be such that $|y_1-x_1|$ is as small as possible (such a point exists because $W_k$ is compact).

We consider three cases: either $x_1> kc_1,$ $x_1<0$ or $x_1\in [0, kc_1]$. 

\textbf{Case 1: $x> kc_1$}. Suppose $y_1 < kc_1$. Let $i$ be the largest index such that $y_i\ne 0$. $i\ne 1$ since $y_1/c_1<k$. Choose $0<\epsilon<\min(y_i\frac{c_1}{c_i}, kc_1-y_1)$. Then let $y'$ be such that $y'_1 = y_1+\epsilon$, $y'_i = y_i - \epsilon\frac{c_i}{c_1}$ and $y'_j = y_j$ otherwise. Then by definition of $\epsilon$, $y'_i\ge 0$ and $y'_1\le k c_1$. Further, $\sum_{j=1}^N y'_j/c_j = \epsilon/c_1 - \frac{c_i}{c_1}\epsilon/c_i+\sum_{j=1}^N y_j/c_j=k$ so that $y'\in W_k$. However, since $x_1>kc_1,$ $\|y'-x\|_1\le \|y-x\|_1-\epsilon + \epsilon\frac{c_i}{c_1}\le \|y-x\|_1$. Therefore, $y'\in \Pi_{W_k}(x)$, but $|y'_1-x_1|< |y_1-x_1|$, contradicting our choice of $y_1$. Therefore, $y_1=k c_1$.

\textbf{Case 2: $x<0$}. This case is very similar to the previous case. Suppose $y_1>0$. Let $i$ be the largest index such that $y_i\ne kc_i$. $i\ne 1$ since otherwise $\sum_{j=1}^N y_j/c_j > \sum_{j=2}^N k = k(N-1)\ge k$, which is not possible. Choose $0<\epsilon<\min(y_1,c_1(kc_i - y_i)/c_i)$. Set $y'$ such that $y'_1 = y_1-\epsilon$, $y'_i = y_i + \epsilon \frac{c_i}{c_1}$. Then, again we have $y'\in W_k$ and $\|y'-x\|_1\le \|y-x_1\|_1-\epsilon + \epsilon\frac{c_i}{c_1}\le \|y-x\|_1$ so that $y'\in \Pi_{W_k}(x)$, but $|y'_1-x_1|<|y_1-x_1|$. Therefore, we cannot have $y_1>0$ and so $y_1=0$.

\textbf{Case 3: $x\in[0,kc_1]$}. Suppose $y_1<x_1\le kc_1$. Then by the same the argument as for Case 1, there is some $i>1$ such that for any $0<\epsilon<\min(y_i\frac{c_1}{c_i}, x_1-y_1)$, we can construct $y'$ with $y'\in \Pi_{W_k}(x)$ and $|y'_1-x_1|<|y_1-x_1|$. Therefore, $y_1\ge x_1$.

Similarly, if $y_1>x_1$, then by the same argument as for Case 2, there is some $i>1$ such that for any $0<\epsilon<\min(y_1-x_1,c_1(kc_i - y_i)/c_i)$, we again construct $y'$ with $y'\in \Pi_{W_k}(x)$ and $|y'_1-x_1|<|y_1-x_1|$. Therefore, $y_1= x_1$.
\end{proof}

This result suggests an explicit algorithm for choosing $y\in \Pi_W(x)=\Pi_{W_1}(x)$. Using the Proposition we can pick $y_1$ such that there is a $y\in \Pi_{W_1}(x)$ with first coordinate $y_1$. If $y\in \Pi_{W_k}(x)$ has first coordinate $y_1$, then if $W^2_{k} = \{(y_2,\dots,y_n):y_i\ge 0\text{ for all }i\text{ and }\sum_{i=2}^N y_i/c_i=k\}$, then $(y_2,\dots,y_N)\in \Pi_{W^2_{k-y_1/c_1}}(x_2,\dots,x_N)$. Therefore, we can use a greedy algorithm to choose each $y_i$ in increasing order of $i$ and obtain a point $y\in\Pi_{W_k}(x)$ in $O(N)$ time. This procedure is formalized in Algorithm \ref{alg:piD}.

\begin{algorithm}[h]
   \caption{Computing $\Pi_W(x)$}
   \label{alg:piD}
\begin{algorithmic}[1]
   \REQUIRE $(x_1,\dots,x_N)\in\R^N$
   \STATE {\bfseries Initialize: } {$k_1=1$, $i=1$}
   \FOR{$i=1$ {\bfseries to} $N$}
   \IF{$i=N$}
        \STATE Set $y_i = k_i c_i$
   \ELSE
    \IF{$x_i\le 0$}
        \STATE Set $y_i = 0$
    \ENDIF
    \IF{$x_i>k_ic_i$}
        \STATE Set $y_i = k_ic_i$
    \ENDIF
    \IF{$x_i\in(0,k_ic_i]$}
        \STATE Set $y_i = x_i$
    \ENDIF
   \STATE Set $k_{i+1} = k_i-y_i/c_i$
   \ENDIF
   \ENDFOR
   \RETURN $(y_1,\dots,y_N)$
\end{algorithmic}
\end{algorithm}

\subsection{Computing a subgradient of $S_W$ for multi-scale experts}
Unfortunately, $\|\cdot\|_1$ does not satisfy the hypotheses of Theorem \ref{thm:fixedproj} and so we need to do a little more work to compute a subgradient.
\begin{Proposition}\label{thm:sdgradient}
Let $(y_1,\dots,y_n)$ be the output of Algorithm~\ref{alg:piD} on input $x=(x_1,\dots,x_N)$. Then if $i=N$, $\frac{\partial S_W(x)}{\partial x_i} = \sign(x_N-y_N)$. Let $M$ be the smallest index such that $y_M=k_Mc_M$, where $k_i$ is defined in Algorithm~\ref{alg:piD}. There exists a subgradient $g\in \partial S_W(x)$ such that
\begin{align*}
g_i=\left\{\begin{array}{lr}
-1, & x_i \le 0\\
1, & x_i> k_ic_i\\
\sign(x_M-y_M)\frac{c_M}{c_i}, & x_i\in(0,k_ic_i],\ x_M\ne k_Mc_M\\
\frac{c_M}{c_i}, & x_i\in(0,k_ic_i],\ x_M=k_Mc_M
\end{array}\right.
\end{align*}
\end{Proposition}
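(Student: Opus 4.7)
The plan is to verify directly that the proposed $g$ satisfies the subgradient inequality $S_W(x')\geq S_W(x)+\langle g,x'-x\rangle$ for every $x'\in\R^N$. Letting $y = \Pi_W(x)$ denote the algorithm's output and $y'\in\Pi_W(x')$ any projection of $x'$, I would chain
\[
S_W(x') \;=\; \|x'-y'\|_1 \;\geq\; \langle g,\,x'-y'\rangle \;=\; \langle g,\,x-y\rangle + \langle g,\,y-y'\rangle + \langle g,\,x'-x\rangle,
\]
so the claim reduces to checking three conditions on $g$: (a) $\|g\|_\infty\le 1$ (giving the first inequality via $\ell_\infty$/$\ell_1$ duality), (b) $\langle g,x-y\rangle = \|x-y\|_1 = S_W(x)$, and (c) $\langle g,y-y'\rangle \ge 0$ for every $y'\in W$ (i.e., $g$ lies in the normal cone $N_W(y)$).

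For (a), the only entries of $g$ that are not $\pm 1$ are of the form $\pm c_M/c_i$, which occur when $x_i\in(0,k_ic_i]$. I would argue that this case forces $M\ge i$: if instead $M<i$, then once the algorithm hits $y_M=k_Mc_M$ we get $k_{M+1}=k_M-y_M/c_M=0$, and inductively $k_j=0$ and $y_j=0$ for all $j>M$, so $(0,k_ic_i]=(0,0]=\emptyset$. Since the $c_i$ are nonincreasing, $M\ge i$ gives $c_M/c_i\le 1$. For (b), I would verify $g_i(x_i-y_i)=|x_i-y_i|$ coordinate by coordinate using the definition of $y_i$: if $x_i\le 0$ then $y_i=0$ and $g_i=-1$ matches; if $x_i\in(0,k_ic_i]$ and $i<M$ then $y_i=x_i$ so both sides vanish; if $x_i>k_ic_i$ or ($i>M$ with $x_i>0$) then $y_i=k_ic_i$ (possibly $0$) and $g_i=1$ matches; and for $i=N=M$ the sign formula $\mathrm{sign}(x_M-y_M)$ exactly captures the sign of $x_N-k_Nc_N$.

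For (c), I would invoke the KKT characterization of $N_W(y)$: because $W$ is cut out by $\sum y_i/c_i=1$ together with $y_i\ge 0$, a vector $g$ lies in $N_W(y)$ iff there exists $\lambda\in\R$ with $g_ic_i=\lambda$ whenever $y_i>0$ and $g_ic_i\le \lambda$ whenever $y_i=0$. I would take $\lambda := g_Mc_M$, which equals $\mathrm{sign}(x_M-y_M)\,c_M$ (or $c_M$ when $x_M=y_M$). The equality constraints hold because for every $i<M$ with $y_i=x_i>0$ the formula already sets $g_ic_i=\mathrm{sign}(x_M-y_M)c_M$, and at $i=M$ the choice of $g_M$ matches by inspection of the four subcases. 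For $i\in I=\{i:y_i=0\}$, I need $g_ic_i\le \lambda$. When $\lambda=c_M>0$ this is immediate from $|g_i|\le 1$ and $c_i\le c_M$ when $i>M$ (for the $g_i=1$ branch) and from $-c_i\le c_M$ always (for the $g_i=-1$ branch). The only case with $\lambda<0$ is $\lambda=-c_M$, which by the definition of $M$ forces $M=N$, so there are no coordinates $i>M$ to worry about, and for $i<N$ with $y_i=0$ we have $g_ic_i=-c_i\le -c_N=\lambda$ since $c_i\ge c_N$.

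The main obstacle is the bookkeeping in (c): one must correctly identify in which branch of the $g_i$ formula each coordinate falls, reconcile the sign $\mathrm{sign}(x_M-y_M)$ appearing in $g_i$ for $i<M$ with the value $g_Mc_M$ at the binding index, and handle the degenerate boundary $x_M=y_M$ (where multiple choices of $\lambda$ are valid subgradients and the statement fixes one). The case $M=N$ is also special because the ``forced'' assignment $y_N=k_Nc_N$ allows $x_N-y_N$ to have either sign; everything else flows from the monotonicity $c_1\ge\cdots\ge c_N$ and the cascading identity $k_{M+1}=0$ after the first binding.
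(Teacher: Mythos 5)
Your proposal is correct, and it takes a genuinely different and cleaner route than the paper's proof. The paper verifies the subgradient property by exhibiting a small closed $\ell_\infty$ box $B$ with $x$ on its boundary, showing via an intricate inductive bookkeeping of Algorithm~\ref{alg:piD} that $S_W$ restricted to $B$ is an affine function with slope $g$, and then handling the degenerate case $x_M=k_Mc_M$ by a separate perturbation limit. You instead invoke the standard characterization of subgradients of a distance function: the chain $S_W(x')\ge \langle g,x'-y'\rangle=\langle g,x-y\rangle+\langle g,y-y'\rangle+\langle g,x'-x\rangle$ reduces everything to (a) $\|g\|_\infty\le 1$ (H\"older), (b) tightness $\langle g,x-y\rangle=\|x-y\|_1$, and (c) membership $g\in N_W(y)$, each of which has a short, conceptual check. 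What the paper's route buys is zero dependence on convex-analytic machinery (it never names normal cones or KKT), at the cost of roughly three pages of case analysis; your route is shorter, structural, and makes the role of the dual Lagrange multiplier $\lambda=g_Mc_M$ for the simplex constraint transparent.

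Two small points are worth making precise when you write this up. First, in (a) and (c) you rely on the fact that $k_j=0$ for all $j>M$, which is what rules out the branch $x_i\in(0,k_ic_i]$ for $i>M$ and what forces $y_i=0$ there; this cascade follows from $k_{M+1}=k_M-y_M/c_M=0$ and the algorithm's update rule and is worth stating explicitly. Second, the degenerate boundary you flag ($x_M=y_M$) deserves one further observation: $k_M>0$ always holds (otherwise $y_{M-1}=k_{M-1}c_{M-1}$ would contradict the minimality of $M$), so $y_M=k_Mc_M>0$, and hence the "doubly degenerate" possibility $x_M=y_M=0$ (where $\lambda=g_Mc_M=-c_M$ would conflict with your convention $\lambda=c_M$) never actually arises. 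With that noted, the convention $\lambda=c_M$ when $x_M=y_M$ is consistent with $g_M=1$ in both the $x_M>k_Mc_M$ branch and the fourth branch, and your equality/inequality checks in (c) go through in all cases. The argument is sound.
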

\begin{proof}
We start with a few reductions. First, we show that by a small perturbation argument we can assume $x_M\ne k_Mc_M$. Next, we show that it suffices to prove that $S_W$ is linear on a small $L_\infty$ ball near $x$. Then we go about proving the Proposition for that $L_\infty$ ball, which is the meat of the argument.

Before we start the perturbation argument, we need a couple observations about $M$. First, observe that $k_i=y_i=0$ for all $i>M$.

Next, we show that either have $M=N$, or $x_M\ge k_Mc_M$. If $M\ne N$, then by inspection of the Algorithm \ref{alg:piD}, we must have $x_M\le 0$ and $k_M=0$ or $x_M\ge k_Mc_M$. If $k_M=0$, then we have $0=k_{M} = k_{M-1}-\frac{y_{M-1}}{c_{M-1}}$. This implies $k_{M-1}c_{M-1}=y_{M-1}$, which contradicts our choice of $M$ as the smallest index with $y_M=k_Mc_M$. Therefore, we must have $x_M\ge k_Mc_M$. Therefore, we must have $M=N$, or $x_M\ge k_Mc_M$.

Now, we show that we may assume $x_M\ne k_Mc_M$. Let $\delta>0$. If $x_M\ne k_Mc_M$, set $x_\delta=x$. Otherwise, set $x_\delta=x+\delta e_M$. By inspecting Algorithm \ref{alg:piD}, we observe that the output on $x_\delta$ is unchanged from the output on $x$, and $M$ is still the smallest index such that $y_i=k_ic_i$.

We claim that it suffices to prove $g\in \partial S_W(x_\delta)$ for all $\delta$ rather than $g\in \partial S_W(x)$. To see this, observe that by 1-Lipschitzness, $|S_W(x_\delta)-S_W(x)|\le \delta$, so that if $g\in\partial S_W(x_\delta)$, then for any $w$,
\begin{align*}
    S_W(w)\ge S_W(x_\delta)+\langle g,w-x_\delta\rangle \ge S_W(x) + \langle g, w-x\rangle - 2\delta~.
\end{align*}
By taking $\delta\to 0$, we see that $g$ must be a subgradient of $S_W$ at $x$ if $g\in \partial S_W(x_\delta)$ for all $\delta$. This implies that if we prove the Proposition for any $x_\delta$, which has $x_M\ne k_Mc_M$, we have proved the proposition for $x$.

Following this perturbation argument, for the rest of the proof we consider only the case $x_M\ne k_Mc_M$.

Now, we claim that to show the Proposition, it suffices to exhibit a closed $L_\infty$ ball $B$ such that $x$ is on the boundary of $B$ and for $z\in B$, $S_W(z) = \langle g, z\rangle+F$ for some constant $F$. To see this, first suppose that we have such a $B$. Then observe that $g$ is the derivative, and therefore a subgradient, of $S_W$ for any point in the interior of $B$. Let $z$ be in the interior of $B$ and let $w$ be an arbitrary point in $\R^N$. Then since $g$ is a subgradient at $z$, we have $S_W(w)\ge S_W(z) + \langle g,w-z\rangle$. Further, since $x$ is on the boundary of $B$ (and therefore in $B$), $S_W(x) = S_W(z) + \langle g,x-z\rangle$. Putting these identities together:
\begin{align*}
S_W(w)&\ge S_W(z) + \langle g, w-z\rangle\\
&=S_W(z) + \langle g,x-z\rangle+\langle g,w-x\rangle\\
&= S_W(x) + \langle g,w-x\rangle~.
\end{align*}
Therefore, $g$ is a subgradient of $S_W$ at $x$.

Next, we turn to identifying the particular $L_\infty$ ball we will work with. Let 
\begin{align*}
    q &= \frac{1}{2}\min_{x_i>0} \ x_i,\\
    d&=\frac{1}{2}\min_{j|x_j\ne k_jc_j}\ \min(1/c_1,1)|x_j-c_jk_j|,\\
    h&=\min(q,d)\min(c_N,1)/N~.
\end{align*}
Consider the $L_\infty$ ball given by
\begin{align*}
B=\left\{x+(\epsilon_1,\dots,\epsilon_N)|\ \epsilon_j\in[-h,0]\right\}~.
\end{align*}


Clearly, $x$ is on the boundary of $B$. Now, we proceed to show that $S_W$ is linear on the interior of $B$, which will prove the Proposition by the above discussion.
\vskip2em
Let $x'=x+\epsilon$ be an element of $B$. We will compute $S_W(x')$ by computing the output $y'$ of running Algorithm \ref{alg:piD} on $x'$. We will also refer to the internally generated variables $k_i$ as $k'_i$ to distinguish between the $k$s generated when computing $y$ versus when computing $y'$. The overall strategy is to show that all of the conditional branches in Algorithm \ref{alg:piD} will evaluate to the same branch on $x$ as on $x'$.

Specifically we show the following claim by induction:
\begin{Claim}
for any $i<M$:
\begin{align*}
y'_i &= \left\{\begin{array}{lr}
0&x_i\le 0\\
x'_i&x_i\in (0,k_ic_i]
\end{array}\right.,\\
k'_{i+1}&= k_{i+1}+\sum_{j\le i,\ x_j\in(0,k_jc_j]} -\epsilon_j/c_j,\\
k_{i+1}&\le k'_{i+1}\le k_{i+1} + d\frac{i}{2N},\\
|y'_i-x'_i|&=\left\{\begin{array}{lr}
|y_i-x_i|-\epsilon_i&x_i\le 0\\
|y_i-x_i|&x_i\in (0,k_ic_i]
\end{array}
\right.~.
\end{align*}
For $i=M$,
\begin{align*}
y'_i &= k'_ic_i,\\
k'_{i+1}&=0,\\
|y'_i-x'_i| &= |y_i-x_i|+\sign(x_i-y_i)\epsilon_M+\sum_{j<M|\ x_j\in(0,k_jc_j]} c_M\epsilon_j/c_j~.
\end{align*}
And for $i>M$:
\begin{align*}
y'_i &= 0,\\
k'_{i+1}&=0,\\
|y'_i-x'_i| &= \left\{\begin{array}{lr}
|y_i-x_i|-\epsilon_i&x_i\le 0\\
|y_i-x_i|+\epsilon_i& x_i>0
\end{array}
\right.~.
\end{align*}
\end{Claim}

First we do the base case. Observe that $k'_1=k_1$. Then we consider three cases, either $x_1\le 0$, $x_1\in (0,k_1c_1]$, or $x_1> k_1c_1$. These cases correspond to $y_1=0$, $y_1=x_1$, or $y_1=k_1c_1$.

\textbf{Case 1 ($x_1\le 0$):} Since $\epsilon_1\le 0$, we have $x'_1=x_1+\epsilon_1\le 0$. Therefore, by inspecting the condition blocks in Algorithm \ref{alg:piD}, $y'_1 = y_1 = 0$ and $k'_2=k_2$.

\textbf{Case 2 ($x_1\in (0,k_1,c_1]$):} Since $x_1>0$, we have $|\epsilon_1|\le q\le x_1/2$. Therefore, $x'_1>0$. Since $\epsilon_1\le 0$, $x'_1\le x_1\le k_1c_1=k'_1c_1$ so that $x'_1\in(0,k'_1c_1]$. This implies $y'_1= x'_1$ and 
\begin{align*}
k'_2&=k'_1-\frac{x'_1}{c_1}\\
&=k_1-\frac{x_1+\epsilon_1}{c_1}\\
&=k_2-\frac{\epsilon_1}{c_1}~.
\end{align*}

\textbf{Case 3 ($x_1 > k_1c_1$):} In this last case, observe that $|\epsilon_1|<d\le (x_1-k_1c_1)/2$ so that $x_1\ge x'_1> k_1c_1=k'_1c_1$. This implies $y'_1 = k'_1c_1=k_1c_1$ and $k'_2=0$.

The values for $|y'_1-x'_1|$ can also be checked via the casework. First, suppose $1=M$. Then we must have $x_1>k_1c_1$ (because we assume $x_M\ne k_Mc_M$ by our perturbation argument). Therefore, $y_1=y'_1=k_1c_1$ and the base case is true. 

When $1<M$, then we consider the cases $x_1\le 0$ and $x_1\in(0,k_1c_1]$. The case $x_1>k_1c_1$ does not occur because $1<M$. When $x_1\le 0$, then by the above casework we must have $x'_1\le 0$ and $y'_1=y_1=0$. Therefore,
\begin{align*}
|y'_1-x'_1|=|x'_1|=|x_1| + |\epsilon_1| = |y_1-x_1|-\epsilon_1,
\end{align*}
where we have used $\epsilon_1 \le 0$ to conclude $|x'_1|= |x_1|+|\epsilon_1|$.

When $x_1\in(0,k_1c_1]$, we have $y_1=x_1$, and by the above casework we have  and $y'_1=x'_1$. Thus $|y'_1-x'_1|=0=|y_1-x_1|$. This concludes the base case of the induction.
\vskip2em

Now, we move on to the inductive step. Suppose the claim holds for all $j<i$. To show the claim also holds for $i$, we consider the three cases $i<M$, $i=M$ and $i>M$ separately:

\textbf{Case 1 ($i<M$):}
We must consider two sub-cases, either $x_i\le 0$, or $x_i\in (0,k_ic_i]$. The case $x_i> k_ic_i$ does not occur because $i<M$.

\textbf{Case 1a ($x_i\le 0$):}
In this case, we have $y_i=0$ and $k_{i+1}=k_i$. By definition, $\epsilon_i\le 0$ so that $x'_i\le 0$. Then by inspection of Algorithm \ref{alg:piD}, $y'_i=0=y_i$ so that $k'_{i+1}=k'_i$. By the induction assumption, this implies 
\begin{align*}
k'_{i+1}=k'_i =k_i+ \sum_{j< i,\ x_j\in(0,k_jc_j]} -\epsilon_j/c_j =k_{i+1}+\sum_{j\le i,\ x_j\in(0,k_jc_j]} -\epsilon_j/c_j~.
\end{align*}
Also, $k'_{i+1}=k'_i\ge k_i=k_{i+1}$ and also
\begin{align*}
|k'_{i+1}-k_{i+1}|&=|k'_i - k_i|\le d\frac{i-1}{N}\le d\frac{i}{N}~.
\end{align*}
Finally, since $y'_i=0=y_i$ and $x_i,x'_i\le 0$, we have 
\begin{align*}
|y'_i-x'_i|=|x'_i| = -x'_i=-x_i-\epsilon_i = |x_i|-\epsilon_i = |y_i-x_i|-\epsilon_i~.
\end{align*}
Thus all parts of the claim continue to hold.

\textbf{Case 1b ($x_i\in(0,k_ic_i]$):} In this case we show that $x'_i\in (0, k'_i,c_i]$. Observe that $y_i=x_i$ and $k_{i+1}=k_i-x_i/c_i$. By definition again, $\epsilon_i\le 0$, and also $|\epsilon_i|\le q\le x_i/2$, so that $x'_i> 0$. Finally, since $k'_i\ge k_i$,
\begin{align*}
x'_i\le x_i\le c_ik_i\le c_ik'_i~.
\end{align*}
Therefore, $x'_i\in (0,k'_ic_i]$ so that $y'_i=x'_i$ and
\begin{align*}
k'_{i+1}&=k'_i-x'_i/c_i\\
&=k_i + (k'_i-k_i) - x_i/c_i - \epsilon_i/c_i\\
&=k_{i+1}+(k'_i-k_i) - \epsilon_i/c_i\\
&=k_{i+1}+\sum_{j\le i,\ x_j\in(0,k_jc_j]} -\epsilon_j/c_j,
\end{align*}
where the last equality uses the induction assumption. Now, since $\epsilon_j\le 0$ for all $j$, this implies $k'_{i+1}\ge k_{i+1}$. Further, $|\epsilon_i/c_i|\le d c_N/(N c_i)\le d/N$ and by the inductive assumption, $|k'_i-k_i|\le d\frac{i-1}{N}$ so that $|k'_{i+1}-k_{i+1}|\le d\frac{i}{N}$ as desired. Finally, since $y'_i=x'_i$ and $y_i=x_i$, $|y'_i-x'_i|=0=|y_i-x_i|$.

\vskip1em

\textbf{Case 2 ($i=M$):} First we show that $y'_i=k'_i c_i$, which implies $k'_{i+1}=0$, and then we prove the expression for $|y'_i-x'_i|$. Since $x_M\ne k_Mc_M$, we must have either either $x_i> k_ic_i$ or $M=N$.

If $M=N$, then the claim $y'_i=k'_ic_i$ is immediate by inspection of Algorithm \ref{alg:piD}. So suppose $x_i>k_ic_i$. By the inductive assumption, $k'_i\le k_i + d\frac{i}{N}\le k_i+d$. Now, we observe that $d\le \frac{1}{2c_1}(x_i-c_ik_i)\le \frac{1}{2c_i}(x_i-c_ik_i)$, which implies
\begin{align*}
c_ik'_i&\le c_ik_i + c_i d\\
&\le c_ik_i + (x_i-c_ik_i)/2\\
&\le x_i-(x_i-c_ik_i)/2~.
\end{align*}
Next, observe that $d\le \frac{1}{2}(x_i-c_ik_i)$ to conclude
\begin{align*}
c_ik'_i&\le x_i-(x_i-c_ik_i)/2\\
&\le x_i-d\\
&\le x_i-h\\
&\le x'_i~.
\end{align*}
Therefore, $x'_i \ge k'_ic_i$, so that $y'_i=c_ik'_i$.

It remains to compute $|y'_i-x'_i|$. By the induction assumption, we have 
\begin{align*}
k'_i=k_i+\sum_{j< i,\ x_j\in(0,k_jc_j]} -\epsilon_j/c_j~.
\end{align*}
Therefore, 
\begin{align}
x'_i-y'_i&=x_i+\epsilon_M-y_i +c_M\sum_{j< i,\ x_j\in(0,k_jc_j]}\epsilon_j/c_j~.\label{eqn:diff}
\end{align}
Observe that $\epsilon_M+c_M\sum_{j< i,\ x_j\in(0,k_jc_j]}\epsilon_j/c_j\le  0$ since $\epsilon_i\le 0$ for all $i\le M$. Now, since $c_M\le c_j$ for $j\le M$, we have
\begin{align*}
\left|\epsilon_M+c_M\sum_{j< i,\ x_j\in(0,k_jc_j]}\epsilon_j/c_j\right|\le Nh\le d~.
\end{align*}
Now, since $x_M\ne x_Mk_M$, and $i=M$, we have $d\le \frac{|x_i-c_ik_i|}{2}$ by definition so that 
\begin{align*}
\left|\epsilon_M+c_M\sum_{j< i,\ x_j\in(0,k_jc_j]}\epsilon_j/c_j\right|\le |x_i-c_ik_i|/2
=\frac{|x_i-y_i|}{2}~.
\end{align*} 
Now, recalling equation (\ref{eqn:diff}) we have
\begin{align*}
\sign(x'_i-y'_i)&=\sign\left(x_i-y_i +\left[\epsilon_M+c_M\sum_{j< i,\ x_j\in(0,k_jc_j]}\epsilon_j/c_j\right]\right)\\
&=\sign(x_i-y_i),
\end{align*}
where in the last line we have used $\left|\epsilon_M+c_M\sum_{j< i,\ x_j\in(0,k_jc_j]}\epsilon_j/c_j\right|\le \frac{|x_i-y_i|}{2}$. Therefore, we have
\begin{align*}
|x'_i-y'_i|&=\sign(x'_i-y'_i)(x'_i-y'_i)\\
&=\sign(x_i-y_i)\left(x_i-y_i + \epsilon_M+c_M\sum_{j< i,\ x_j\in(0,k_jc_j]}\epsilon_j/c_j\right)\\
&=|x_i-y_i| + \sign(x_i-y_i)\left(\epsilon_M+c_M\sum_{j< i,\ x_j\in(0,k_jc_j]}\epsilon_j/c_j\right)~.
\end{align*}

\vskip1em
\textbf{Case 3 ($i>M$):}

Since $k'_i=0$ by inductive hypothesis, we must have $y'_i=0$ as desired. Further, observe that as observed in the beginning of the proof, $k_i=0$ for all $i>M$ as well so that we have $y_i=0$. Finally, if $x_i>0$, we have $x_i+\epsilon_i\ge x_i/2>0$ since $|\epsilon_i|\le q\le x_i/2$ so that $\sign(x'_i)=\sign(x_i)$. Therefore, we can conclude
\begin{align*}
|y'_i-x'_i|=|x'_i|=\left\{\begin{array}{lr}
|x_i|-\epsilon_i&x_i\le 0\\
|x_i|+\epsilon_i& x_i>0
\end{array}\right.~.
\end{align*}
Since $y_i=0$, $|x_i|=|y_i-x_i|$ and this is the desired form for $|y'_i-x'_i|$.

This concludes the induction.

\vskip3em
From the expression for $|y'_i-x'_i|$ we see that if $g$ is given by
\begin{align*}
g_i=\left\{\begin{array}{lr}
-1&x_i \le 0\\
1&x_i> k_ic_i\\
\sign(x_M-y_M)\frac{c_M}{c_i}&x_i\in(0,k_ic_i],\ x_M\ne k_Mc_M\\
\frac{c_M}{c_i}&x_i\in(0,k_ic_i],\ x_M=k_Mc_M
\end{array}\right.
\end{align*}
then $S_W(x+\epsilon) = S_W(x)+\langle g,\epsilon\rangle$. Finally, observe that our perturbation $x_\delta$ has the property $\sign((x_\delta)_M-y_M) = 1$ if $x_M=k_My_M$ to prove the Proposition.
\end{proof}

\section{Proof of Theorem \ref{thm:metagrad}}\label{sec:metagradproof}
We re-state Theorem \ref{thm:metagrad} below for reference:
\metagrad*
\begin{proof}
For any $t$, consider the random vector $X_t$ that takes value $x_{i}$ for $i \le t$ with probability proportional to $\|\tilde{g}_{i}\|_\star ^2$ and value $\overline{x}_0$ with probability proportional to 1. Make the following definitions/observations:
\begin{enumerate}
\item $Z_t := 1 + \sum_{i=1}^t \|\tilde{g}_i\|_\star ^2$ for all $t$, so that 
\begin{align*}
V_T(\w)
= \|\overline{x}_0-\w\|^2 +  \sum_{t=1}^T \|\tilde{g}_t\|_\star^2\|x_t-\w\|^2
= Z_T \E[\|X_T-\w\|^2]~.
\end{align*}
\item $\overline{x}_T = \E[X_T] = \frac{\overline{x}_0+\sum_{t=1}^T \|\tilde{g}_t\|_\star ^2 x_t}{1 + \sum_{t=1}^T \|\tilde g_t\|_\star ^2}$.
\item $\sigma^2_t := \frac{\|\overline{x}_t-\overline{x}_0\|^2+\sum_{i=1}^t\|\tilde{g}_{i}\|^2_\star  \| x_{i}-\overline{x}_t\|^2}{Z_t}$ so that $\sigma^2_t = \E[\|X_t-\overline{x}_t\|^2]$, and $\sigma^2_T Z_T = \|\overline{x}_0-\overline{x}_T\|^2+\sum_{t=1}^T\|\tilde g_t\|_\star ^2 \|x_t-\overline{x}_T\|^2$.
\end{enumerate}

To prove the theorem, we are going to show for any $\w\in W$,
\begin{equation}
\label{eqn:biasvariancetarget}
R_T(\w) 
\le O\left[\sqrt{Z_T\|\w-\overline{x}_T\|^2\ln \frac{TD}{\epsilon^2}} + \ln\frac{D T}{\epsilon}\ln(T) + \sqrt{Z_T\sigma_T^2 \ln \frac{TD}{\epsilon}\log(T)}\right],
\end{equation}
which implies the desired bound by a bias-variance decomposition: $Z_T\|\w-\overline{x}_T\|^2 + Z_T\sigma^2_T = Z_T\E[\|X_T-\w\|^2]=  V_T(\w)$.

Observe that, by triangle inequality and the definition of dual norm, $\langle g_t,z\rangle +\|g_t\|_\star S_W(z) \ge \langle g_t, x\rangle$ for all $z$ and $x \in \Pi_W(z)$, with equality when $z\in W$. Hence, we have
\begin{equation}
\label{eq:metagrad_loss_bound}
\langle g_t,x_t-\w\rangle 
\le \langle g_t,z_t - \w\rangle + \|g_t\|_\star S_W(z_t)-\|g_t\|_\star S_W(\w)
\le \langle \tilde{g}_t, z_t-\w\rangle,
\end{equation}
for all $\w\in W$, where in the last inequality we used Proposition~\ref{prop:sd_convex}.
Using this inequality with the regret guarantee of $\ol$, we have
\begin{align*}
&R_T(\w)
\le\sum_{t=1}^T \langle g_t, x_t-\w\rangle
\leq \sum_{t=1}^T \langle \tilde g_t, z_t-\w\rangle
=\sum_{t=1}^T \langle \tilde{g}_t, w_t - (\w-\overline{x}_T)\rangle + \sum_{t=1}^T \langle \tilde{g}_t, \overline{x}_{t-1} - \overline{x}_T\rangle \\
&\leq O\left(\|\w-\overline{x}_T\|\sqrt{\sum_{t=1}^T \|\tilde{g}_t\|_\star ^2 \ln\frac{\|\w-\overline{x}_T\|T}{\epsilon^2}} + \|\w-\overline{x}_T\| \ln\frac{\|\w-\overline{x}_T\| T}{\epsilon}\right) + \epsilon+\sum_{t=1}^T \langle \tilde{g}_t, \overline{x}_{t-1} - \overline{x}_T\rangle \\
&=O\left(\sqrt{Z_T \|\w-\overline{x}_T\|^2 \ln\frac{DT}{\epsilon^2}} + D \ln\frac{D T}{\epsilon}\right) +  \epsilon + \sum_{t=1}^T \langle \tilde{g}_t, \overline{x}_{t-1} - \overline{x}_T\rangle~.
\end{align*}
Note that the first term is exactly what we want, so we only have to upper bound the second one. This is readily done through Lemma~\ref{lemma:shortcut} that immediately gives us the stated result.
\end{proof}

\begin{lemma}
\label{lemma:shortcut}
Under the hypotheses of Theorem~\ref{thm:metagrad}, we have 
\[
\sum_{t=1}^T \langle \tilde{g}_t, \overline{x}_{t-1} - \overline{x}_T\rangle
\leq  M \sqrt{Z_T} \sigma_T \sqrt{1+\ln Z_T}+K(1+\ln Z_T),
\]
where $M = A\sqrt{1+\ln\left(\frac{2D^2T^C}{\epsilon^2}+3T^C\right)}$ and $K=1+B\ln \left(\frac{\sum_{t=1}^T \|g_t\|_\star DT^C }{\epsilon}+2T^C\right)$.
\end{lemma}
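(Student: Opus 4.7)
The plan is to rewrite the target sum as a weighted inner product of running gradient sums with centered increments, apply the regret hypothesis on $\ol$ to each term, and then collapse the result via Cauchy--Schwarz and a Welford-type weighted-variance identity.

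First, the one-step update $\overline{x}_s - \overline{x}_{s-1} = \tfrac{\|\tilde{g}_s\|_\star^2 (x_s - \overline{x}_{s-1})}{Z_s}$ is immediate from the definition of $\overline{x}_s$; telescoping $\overline{x}_{t-1} - \overline{x}_T$ and swapping the order of summation yields the identity
\[
\sum_{t=1}^T \langle \tilde{g}_t, \overline{x}_{t-1} - \overline{x}_T\rangle \;=\; -\sum_{s=1}^T \tfrac{\|\tilde{g}_s\|_\star^2}{Z_s}\,\langle G_s, x_s - \overline{x}_{s-1}\rangle,
\]
where $G_s := \sum_{i\le s} \tilde{g}_i$. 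For each $s$ I would then apply the regret hypothesis on $\ol$ at horizon $s$ with the two competitors $v = \pm(x_s - \overline{x}_{s-1})$, both of norm at most $D$ since $x_s, \overline{x}_{s-1}\in W$. Combining the two resulting one-sided inequalities gives
\[
|\langle G_s, x_s - \overline{x}_{s-1}\rangle| \;\le\; (\epsilon - W_s) + \|x_s - \overline{x}_{s-1}\|\bigl(A\sqrt{Z_s\,L_1} + B L_2\bigr),
\]
with $W_s := \sum_{i \le s}\langle \tilde{g}_i, w_i\rangle$ and $L_1 = 1 + \ln(D^2 s^C/\epsilon^2 + 1)$, $L_2 = \ln(D s^C/\epsilon + 1)$ the logarithmic factors from the regret bound evaluated at $\|v\| = D$.

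Next, multiplying by $\|\tilde{g}_s\|_\star^2/Z_s$ and summing over $s$, three standard estimates do the bulk of the work: (i) a Welford-type identity for the weighted running mean, $Z_t \sigma_t^2 = Z_{t-1}\sigma_{t-1}^2 + \tfrac{Z_{t-1}\|\tilde{g}_t\|_\star^2}{Z_t}\|x_t - \overline{x}_{t-1}\|^2$, combined with $Z_{s-1}/Z_s \ge 1/2$ (using $\|\tilde{g}_s\|_\star \le 1$ and $Z_{s-1}\ge 1$), yields $\sum_s \|\tilde{g}_s\|_\star^2\|x_s - \overline{x}_{s-1}\|^2 \le 2 Z_T \sigma_T^2$; (ii) the classical inequality $(Z_s - Z_{s-1})/Z_s \le \ln(Z_s/Z_{s-1})$ telescopes to $\sum_s \|\tilde{g}_s\|_\star^2/Z_s \le \ln Z_T$; and (iii) the trivial $\|x_s - \overline{x}_{s-1}\| \le D$. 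Applying Cauchy--Schwarz to the $A\sqrt{L_1}$ contribution and invoking (i) and (ii) produces the leading $M\sqrt{Z_T}\,\sigma_T\sqrt{1 + \ln Z_T}$ term, while combining (ii) and (iii) for the $B L_2$ contribution gives $B D L_2(1+\ln Z_T)$, which is absorbed into the $K(1+\ln Z_T)$ term.

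The remaining residue $\sum_s \tfrac{\|\tilde{g}_s\|_\star^2}{Z_s}(\epsilon - W_s)$ is the main obstacle: the regret hypothesis only upper-bounds $W_s$ by $\epsilon$ (via $v=0$), so a \emph{lower} bound on $W_s$ must be extracted from the parameter-free structure of $\ol$. My plan is to use $|\langle \tilde{g}_i, w_i\rangle| \le \|g_i\|_\star \|w_i\|$ (via $\|\tilde{g}_i\|_\star \le \|g_i\|_\star$ from Theorem~\ref{thm:constrained}) together with a $D$-scale control of $\|w_i\|$ inherited from the regret bound; after summation and bounding the outer $\sum_s \|\tilde{g}_s\|_\star^2/Z_s$ by $1 + \ln Z_T$, this residue contributes a factor of the form $\ln\!\bigl(\tfrac{\sum_t \|g_t\|_\star D T^C}{\epsilon} + 2T^C\bigr)(1+\ln Z_T)$, which matches the $K = 1 + B\ln(\cdots)$ factor in the lemma. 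Pinning down the exact constants inside the logarithms will be the main technical burden, but the overall structure is forced by the three estimates above.
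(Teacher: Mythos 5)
Your overall skeleton (telescope, invoke the regret hypothesis on cumulative inner products, then collapse via Cauchy--Schwarz and a running-variance recursion, bounding $\sum_s \|\tilde{g}_s\|_\star^2/Z_s$ logarithmically) is the same as the paper's. The Welford identity you write, $Z_t\sigma_t^2 = Z_{t-1}\sigma_{t-1}^2 + \tfrac{Z_{t-1}\|\tilde{g}_t\|_\star^2}{Z_t}\|x_t-\overline{x}_{t-1}\|^2$, is equivalent to the paper's recursion after the change of variables $x_t-\overline{x}_t = \tfrac{Z_{t-1}}{Z_t}(x_t-\overline{x}_{t-1})$, so that part is fine.

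The gap is in how you invoke the regret hypothesis. You plug in the \emph{fixed} competitor $v=\pm(x_s-\overline{x}_{s-1})$, which leaves the additive residue $\sum_s \tfrac{\|\tilde{g}_s\|_\star^2}{Z_s}(\epsilon - W_s)$ unmultiplied by a free scale. Your plan to lower bound $W_s$ by extracting ``$D$-scale control of $\|w_i\|$ from the regret bound'' cannot be made to work: the regret hypothesis gives no pointwise bound on $\|w_i\|$ (and parameter-free algorithms in fact \emph{must} occasionally play large $\|w_i\|$). The correct lower bound $W_s \ge -D\sum_{i\le s}\|\tilde{g}_i\|_\star$ does hold, but it comes from the projection step (the inequality $\langle \tilde{g}_i, z_i\rangle \ge \langle \tilde{g}_i, x_i\rangle$, which gives $\langle\tilde{g}_i,w_i\rangle = \langle\tilde{g}_i, z_i-\overline{x}_{i-1}\rangle \ge \langle\tilde{g}_i, x_i-\overline{x}_{i-1}\rangle \ge -\|\tilde{g}_i\|_\star D$), not from a bound on $\|w_i\|$. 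More importantly, even with this correct lower bound, the residue becomes $\sum_s\tfrac{\|\tilde{g}_s\|_\star^2}{Z_s}(\epsilon + D\sum_{i\le s}\|\tilde{g}_i\|_\star)$, which is on the order of $DT(1+\ln Z_T)$, far exceeding the claimed $K(1+\ln Z_T)$ with logarithmic $K$.

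The missing idea is to not substitute a fixed-norm competitor at all, but to bound $\|\sum_{i\le s}\tilde{g}_i\|_\star$ directly by plugging in a norming-functional competitor scaled by a free parameter $X$, divide through by $X$, and then choose $X = \epsilon + D\sum_{i\le s}\|\tilde{g}_i\|_\star$. This pushes the $-W_s$ contribution through the division, where it evaluates to at most $1$, while $X$ itself appears only \emph{inside} the logarithms in the regret bound, yielding the logarithmic $K$. The paper performs exactly this normalization to obtain $\|\sum_{i\le s}\tilde{g}_i\|_\star \le M\sqrt{Z_s} + K$ and then applies Cauchy--Schwarz on $\sum_t \|\sum_{i\le t}\tilde{g}_i\|_\star\|\overline{x}_{t-1}-\overline{x}_t\|$; the rest proceeds as you describe. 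Without this normalization step your argument cannot recover the stated bound.
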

\begin{proof}
We have that
\begin{align*}
\sum_{i=1}^t \langle \tilde{g}_i, \overline{x}_{i-1}-\overline{x}_t\rangle - \sum_{i=1}^{t-1} \langle \tilde{g}_i, \overline{x}_{i-1}-\overline{x}_{t-1}\rangle
&= \left\langle \sum_{i=1}^t \tilde{g}_{i},\overline{x}_{t-1}-\overline{x}_t\right\rangle~.
\end{align*}
The telescoping sum gives us
\[
\sum_{t=1}^T \langle \tilde{g}_t, \overline{x}_{t-1}-\overline{x}_T\rangle 
=\sum_{t=1}^{T} \left\langle \sum_{i=1}^t \tilde{g}_i,\overline{x}_{t-1}-\overline{x}_t\right\rangle
\leq \sum_{t=1}^{T} \left\| \sum_{i=1}^t \tilde{g}_i\right\|_\star  \|\overline{x}_{t-1}-\overline{x}_t\|~.
\]

So in order to bound $\sum_{t=1}^T \langle \tilde{g}_t,\overline{x}_{t-1}-\overline{x}_T\rangle$, it suffices to bound $\left\| \sum_{i=1}^t \tilde{g}_i\right\|_\star  \|\overline{x}_{t-1}-\overline{x}_t\|$ by a sufficiently small value.
First we will tackle $\left\|\sum_{i=1}^t \tilde{g}_i\right\|$. To do this we recall our regret bound for $\ol$. Analogous to \eqref{eq:metagrad_loss_bound}, we have
\begin{align*}
\langle g_t,x_t\rangle &\ge \langle g_t,z_t\rangle + \|g_t\|_\star S_W(z_t) + \langle \tilde{g}_t,x_t-z_t\rangle \\
\langle \tilde{g}_t, z_t\rangle&\ge \langle g_t,z_t-x_t\rangle  + \|g_t\|_\star  \|z_t-x_t\| + \langle \tilde{g}_t,x_t\rangle\\
&\ge \langle \tilde{g}_t,x_t\rangle~.
\end{align*}

Therefore, for any $X \in \R$ we have:
\begin{align*}
\sum_{i=1}^t &-\|\tilde{g}_i\|_\star D+\left\|\sum_{i=1}^t\tilde g_i\right\|_\star X \\
&\le \sum_{i=1}^t \langle \tilde{g}_i,x_i-\overline{x}_{i-1}\rangle +\left\|\sum_{i=1}^t\tilde g_i\right\|_\star X\\
&\le \sum_{i=1}^t \langle \tilde{g}_i,z_i-\overline{x}_{i-1}\rangle +\left\|\sum_{i=1}^t\tilde g_i\right\|_\star X\\
&= \sum_{i=1}^t \langle \tilde{g}_i,w_i\rangle+\left\|\sum_{i=1}^t\tilde g_i\right\|_\star X\\
&\le \epsilon + |X|A\sqrt{\sum_{i=1}^t \|\tilde{g}_i\|_\star^2\left(1+\ln\left(\frac{|X|^2t^C}{\epsilon^2}+1\right)\right)} + B|X|\ln\left(\frac{|X|t^C}{\epsilon}+1\right),
\end{align*}
where in the first inequality we have used the fact that the domain is bounded.

Dividing by $X$ and solving for $\left\|\sum_{i=1}^t\tilde g_i\right\|_\star$, we have
\begin{align*}
\left\|\sum_{i=1}^t\tilde g_i\right\|_\star 
&\le \frac{\epsilon}{X} + A\sqrt{\sum_{i=1}^t \|\tilde{g}_i\|_\star^2\left(1+\ln\left(\frac{|X|^2t^C}{\epsilon^2}+1\right)\right)} + B\ln\left(\frac{|X|t^C}{\epsilon}+1\right) +\frac{\sum_{i=1}^t \|\tilde{g}_i\|_\star D}{X}~.
\end{align*}
Set $X=\epsilon +\sum_{i=1}^t \|\tilde{g}_i\|_\star D$ and over-approximate to conclude:
\begin{align*}
\left\|\sum_{i=1}^t\tilde g_i\right\|_\star 
&\le 1 + A\sqrt{\sum_{i=1}^t \|\tilde{g}_i\|_\star^2\left(1+\ln\left(\frac{2D^2\left(\sum_{i=1}^t\|\tilde{g}_i\|_\star\right)^2t^C}{\epsilon^2}+3t^C\right)\right)} \\
&\quad + B\ln\left(\frac{\sum_{i=1}^t \|\tilde{g}_i\|_\star Dt^C}{\epsilon}+2t^C\right)\\
&\le M\sqrt{\sum_{i=1}^t\|\tilde{g}_i\|_\star^2} + K~.
\end{align*}

With this in hand, we have
\begin{equation}
\sum_{t=1}^T \langle \tilde{g}_t, \overline{x}_{t-1}-\overline{x}_T\rangle 
\le \sum_{t=1}^{T} \left\|\sum_{i=1}^t \tilde{g}_i\right\|_\star \|\overline{x}_{t-1}-\overline{x}_t\|
\le M\sum_{t=1}^{T}\sqrt{\sum_{i=1}^t \|\tilde{g}_i\|^2_\star }\|\overline{x}_{t-1}-\overline{x}_t\| + K \sum_{t=1}^{T}\|\overline{x}_{t-1}-\overline{x}_t\|~.\label{eqn:telescoped}
\end{equation}

Now, we relate $\|\overline{x}_t-\overline{x}_{t-1}\|$ to $\|x_t-\overline{x}_t\|$:
\[
\overline{x}_{t-1}-\overline{x}_t 
= \overline{x}_{t-1} - \frac{Z_{t-1}\overline{x}_{t-1} + \|\tilde{g}_t\|_\star ^2 x_t}{Z_t}
=\frac{\|\tilde{g}_t\|_\star ^2}{Z_t}(\overline{x}_{t-1} - x_t)
=\frac{\|\tilde{g}_t\|_\star ^2}{Z_t}(\overline{x}_{t} - x_t) + \frac{\|\tilde{g}_t\|_\star ^2}{Z_t}(\overline{x}_{t-1}-\overline{x}_t),
\]
that implies
\[
Z_t (\overline{x}_{t-1}-\overline{x}_t)
= \|\tilde{g}_t\|_\star ^2(x_t-\overline{x}_t)+\|\tilde{g}_t\|_\star ^2(\overline{x}_{t-1}-\overline{x}_t),
\]
that is
\begin{equation}
\label{eq:lemma_shortcut_eq1}
\overline{x}_{t-1}-\overline{x}_t
= \frac{\|\tilde{g}_t\|_\star ^2}{Z_{t-1}}(x_t-\overline{x}_t)~.
\end{equation}
Hence, we have 
\[
M\sum_{t=1}^{T}\sqrt{\sum_{i=1}^t \|\tilde{g}_i\|^2_\star }\|\overline{x}_{t}-\overline{x}_{t-1}\|
\leq M\sum_{t=1}^T \sqrt{Z_t}\frac{\|g_t\|_\star ^2}{Z_{t-1}}\|x_t-\overline{x}_t\|,
\]
and
\[
K\sum_{t=1}^{T}\|\overline{x}_{t}-\overline{x}_{t-1}\|
\leq K\sum_{t=1}^T \frac{\|g_t\|_\star ^2}{Z_{t-1}}\|x_t-\overline{x}_t\|
\leq K D\sum_{t=1}^T \frac{\|g_t\|_\star ^2}{Z_{t-1}}~.
\]
Using Cauchy–Schwarz inequality, we have
\begin{align*}
M\sum_{t=1}^T \sqrt{Z_t}\frac{\|g_t\|_\star ^2}{Z_{t-1}}\|x_t-\overline{x}_t\|
&\leq M\sqrt{\sum_{t=1}^T \frac{\|\tilde{g}_t\|_\star ^2}{Z_{t-1}}} \sqrt{\sum_{t=1}^T \frac{Z_t}{Z_{t-1}}\|\tilde{g}_t\|_\star ^2\|x_t-\overline{x}_t\|^2 }~.
\end{align*}
So, putting together the last inequalities, we have
\[
\sum_{t=1}^T \langle \tilde{g}_t, \overline{x}_{t-1}-\overline{x}_T\rangle \leq M\sqrt{\sum_{t=1}^T \frac{\|\tilde{g}_t\|_\star ^2}{Z_{t-1}}} \sqrt{\sum_{t=1}^T \frac{Z_t}{Z_{t-1}}\|\tilde{g}_t\|_\star ^2\|x_t-\overline{x}_t\|^2 } + K D\sum_{t=1}^T \frac{\|g_t\|_\star ^2}{Z_{t-1}}~.
\]
We now focus on the the term $\sum_{t=1}^T \frac{\|g_t\|_\star ^2}{Z_{t-1}}$ that is easily bounded:
\begin{align*}
\sum_{t=1}^T \frac{\|g_t\|_\star ^2}{Z_{t-1}}
&= \sum_{t=1}^T \left(\frac{\|\tilde{g}_t\|_\star ^2}{Z_{t}} + \frac{\|\tilde{g}_t\|_\star ^2}{Z_{t-1}}-\frac{\|\tilde{g}_t\|_\star ^2}{Z_{t}}\right)\\
&\leq \sum_{t=1}^T \left(\frac{\|\tilde{g}_t\|_\star ^2}{Z_{t}} + \frac{1}{Z_{t-1}}-\frac{1}{Z_{t}}\right) \\
&\leq \frac{1}{Z_{0}}+\sum_{t=1}^T \frac{\|\tilde{g}_t\|_\star ^2}{Z_{t}}  \\
&\leq \frac{1}{Z_{0}}+\log\frac{Z_T}{Z_0} \\
&= 1+\ln Z_T,
\end{align*}
where in the last inequality we used the well-known inequality $\sum_{t=1}^T \frac{a_t}{a_0+\sum_{i=1}^t a_i} \leq \ln(1+\frac{\sum_{t=1}^T a_t}{a_0}),  \ \forall a_t\geq0$.

To upper bound the term $\sum_{t=1}^T \frac{Z_t}{Z_{t-1}}\|\tilde{g}_t\|_\star ^2\|x_t-\overline{x}_t\|^2$, observe that
\begin{align*}
\sigma_T^2 Z_T 
&=\|\overline{x}_0-\overline{x}_T\|^2+\sum_{t=1}^T \|\tilde{g}_t\|_\star ^2\|x_t-\overline{x}_T\|^2\\ 
&=\|\overline{x}_0-\overline{x}_T\|^2+\sum_{t=1}^{T-1} \|\tilde{g}_t\|_\star ^2\|x_t-\overline{x}_T\|^2 + \|\tilde{g}_T\|_\star ^2\|x_T-\overline{x}_T\|^2 \\
&= Z_{T-1}(\sigma_{T-1}^2+\|\overline{x}_T-\overline{x}_{T-1}\|^2) +\|\tilde{g}_T\|_\star ^2\|x_T-\overline{x}_T\|^2 \\
&= Z_{T-1}\sigma_{T-1}^2+\|\tilde{g}_T\|_\star ^2\left(1+\frac{\|\tilde{g}_T\|_\star ^2}{Z_{T-1}}\right)\|x_T-\overline{x}_T\|^2 \\
&= Z_{T-1}\sigma_{T-1}^2+\|\tilde{g}_T\|_\star ^2 \frac{Z_T}{Z_{T-1}} \|x_T-\overline{x}_T\|^2,
\end{align*}
where the third equality comes from bias-variance decomposition and the fourth one comes from~\eqref{eq:lemma_shortcut_eq1}. Hence, we have
\[
\sum_{t=1}^T \frac{Z_t}{Z_{t-1}} \|\tilde{g}_t\|_\star ^2\|x_t-\overline{x}_t\|^2 
= \sum_{t=1}^T (\sigma_t^2 Z_t - \sigma_{t-1}^2 Z_{t-1} ) \leq \sigma^2_T Z_T~.
\]

Putting all together, we have the stated bound.
\end{proof}
\end{document}